%% file: main_arxiv.tex
\documentclass[11pt]{article}
\input{preamble_arxiv}

\title{The Cost of an Unknown Horizon for the Last Iterate\\
of Clipped SGD under Heavy-Tailed Noise}
\author[1]{Yuxing Peng}
\author[2,*]{Zhiqing Tang}
\author[2,3]{Weijia Jia}
\affil[1]{Faculty of Arts and Sciences, Beijing Normal University, Zhuhai 519087, China}
\affil[2]{Institute of Artificial Intelligence and Future Networks, Faculty of Arts and Sciences, Beijing Normal University, Zhuhai 519087, China}
\affil[3]{Guangdong Key Lab of AI and Multi-Modal Data Processing, Beijing Normal-Hong Kong Baptist University, Zhuhai 519087, China}
\affil[*]{Corresponding author: \texttt{zhiqingtang@bnu.edu.cn}}
\date{}

\begin{document}
\maketitle

\begin{abstract}
\input{sections/00_abstract}
\end{abstract}

\input{sections/01_introduction}
\input{sections/02_setting}
\input{sections/03_known_horizon}
\input{sections/04_time_uniform}
\input{sections/05_fixed_time}
\input{sections/06_momentum}
\input{sections/07_discussion}

\bibliographystyle{abbrvnat}
\bibliography{references}

\section*{AI Usage Statement}
During the preparation of this manuscript, the authors used AI tools, including OpenAI's GPT models and Anthropic's Claude, for mathematical derivations and their verification, and to assist with writing. The authors take full responsibility for the content of this paper.

\clearpage
\appendix
\input{appendices/related}
\input{appendices/experiment}
\input{appendices/open_problems}
\input{appendices/appendix}

\end{document}

%% file: preamble_arxiv.tex
\usepackage[T1]{fontenc}
\usepackage{lmodern}
\usepackage[letterpaper,margin=1in]{geometry}
\usepackage{microtype,xurl,needspace}
\usepackage{booktabs,array,tabularx}
\usepackage{xcolor}
\usepackage{colortbl}
\usepackage{natbib}
\AtBeginDocument{\setcitestyle{round,semicolon}}
\usepackage{amsmath,amssymb,amsthm,mathtools}
\usepackage{algorithm,algorithmic}
\usepackage{authblk}

\usepackage{hyperref}
\hypersetup{
  colorlinks=true,
  allcolors=blue!60!black,
  pdfborder={0 0 0},
  hypertexnames=false,
  pdftitle={The Cost of an Unknown Horizon for the Last Iterate of Clipped SGD under Heavy-Tailed Noise},
  pdfauthor={Yuxing Peng, Zhiqing Tang, Weijia Jia},
  pdfkeywords={stochastic convex optimization, gradient clipping, heavy-tailed noise, last iterate, time-uniform guarantees}
}
\newcommand{\leanavailability}{The code is provided as ancillary files of this arXiv entry and at \url{https://github.com/Thefootofhappiness/clipped-sgd-last-iterate-lean}.}
\newcommand{\experimentavailability}{The script that produces the figure and its outputs are provided as ancillary files of this arXiv entry.}
\newenvironment{resulttable}{\begin{table}[t]}{\end{table}}
\input{preamble_macros}

%% file: preamble_macros.tex
\usepackage{tikz}
\usetikzlibrary{arrows.meta,decorations.pathreplacing,calc}
\usepackage{pgfplots}
\pgfplotsset{compat=1.18}
\allowdisplaybreaks[2]
\theoremstyle{plain}
\newtheorem{theorem}{Theorem}
\newtheorem{lemma}[theorem]{Lemma}
\newtheorem{proposition}[theorem]{Proposition}

\theoremstyle{definition}
\newtheorem{definition}[theorem]{Definition}
\newtheorem{remark}[theorem]{Remark}
\newtheorem{assumption}{Assumption}

\renewenvironment{proof}{\par\addvspace{6pt}\noindent\textbf{Proof}\ \ignorespaces}%
  {\hfill$\blacksquare$\par\addvspace{6pt}}

\newcommand{\R}{\mathbb R}
\newcommand{\E}{\mathbb E}
\newcommand{\Prob}{\mathbb P}
\newcommand{\F}{\mathcal F}
\newcommand{\norm}[1]{\lVert#1\rVert}
\newcommand{\ip}[2]{\langle#1,#2\rangle}
\newcommand{\clip}{\operatorname{clip}}
\newcommand{\ind}{\mathbf 1}
\newcommand{\Et}{\E_{t-1}}

%% file: sections/00_abstract.tex
We study the last iterate of clipped SGD, and of every rule that queries its current iterate and steps along the resulting stochastic gradient, when training is stopped at a data-dependent time, so that guarantees must hold at all times simultaneously. On smooth convex objectives under heavy-tailed noise with a bounded central $p$th moment, $1<p\le2$, we determine the optimal time-uniform rate of these rules. A clipped SGD schedule without horizon or confidence inputs attains $\tilde O(t^{-\beta_p})$ at all times, with $\beta_p=2(p-1)/(3p-1)$, also almost surely, and no such rule attains $\tilde O(t^{-r})$ for any $r>\beta_p$, even with adaptive step sizes and thresholds. This rate lies between the optimal rate $O(T^{-\alpha_p})$ for a known horizon $T$, $\alpha_p=(p-1)/p$, and the previous time-uniform rate $\tilde O(t^{-2(p-1)/(3p)})$ (Chezhegov et al., ICLR 2026); under finite variance, the exponents are $1/2$, $2/5$, and $1/3$. Their conjecture that a better schedule reaches the optimal rate thus holds with a known horizon (Liu; Jiang et al.) but fails without one. Querying gradients at the running average attains $\tilde O(t^{-\alpha_p})$ at all times, whereas heavy-ball momentum with geometric forgetting and polynomially bounded steps cannot improve on $\beta_p$.

%% file: sections/01_introduction.tex
\section{Introduction}\label{sec:introduction}

The length of a training run is often not fixed in advance. Early stopping and checkpoint selection depend on the observed trajectory \citep{aolaritei2026}, large-scale training increasingly avoids committing to a duration \citep{hagele2024}, and schedule-free methods are designed so that no stopping time has to be specified \citep{defazio2024roadless}. The model kept is the state of the run at the chosen time rather than an average of iterates. The stochastic gradients are often heavy-tailed \citep{simsekli2019,zhang2020adaptive}, and gradient clipping \citep{pascanu2013} is a standard remedy, used routinely in training large language models \citep{touvron2023llama}. This paper asks what an unknown stopping time costs: how fast can the last iterate of clipped stochastic gradient descent (SGD; \citealt{robbinsmonro1951}) converge with high probability when the time at which it is used is not fixed in advance?\looseness=-1

Fixed-time guarantees do not answer this question. A bound that holds with probability $1-\delta$ at each fixed time may fail at a time chosen from the observed trajectory, just as A/B tests that are monitored continuously with methods assuming a fixed sample size have inflated Type~I error \citep{howard2021}. A guarantee holds at every data-dependent stopping time exactly when it holds on one event, of probability at least $1-\delta$, at all times simultaneously (Section~\ref{sec:setting}). Under sub-Gaussian noise, passing to such \emph{time-uniform} guarantees costs only logarithmic factors around the exponent $1/2$ \citep{lichenwang2026}.

Under heavy-tailed noise the price is polynomial, and we determine it: for noise with a bounded central $p$th moment, $1<p\le2$, clipped SGD attains the last-iterate errors
\begin{gather*}
 O\bigl(T^{-\alpha_p}\bigr)\ \text{ with a known horizon }T,\\
 \tilde O\bigl(t^{-\beta_p}\bigr)\ \text{ at all times }t\text{ simultaneously},
\end{gather*}
where $\alpha_p=(p-1)/p$, $\beta_p=2(p-1)/(3p-1)$, and $\tilde O$ hides logarithmic factors; under finite variance, the exponents are $1/2$ and $2/5$. The known-horizon rate is due to \citet{liu2026lastiterate} and \citet{jiang2026linear}; the time-uniform rate is our main result. Both exponents are optimal, $\alpha_p$ among all algorithms that query stochastic gradients and $\beta_p$ among all rules that query their current iterate and step along the resulting stochastic gradient. For fixed $\delta\le1/8$, every algorithm making $T$ stochastic gradient queries has error $\Omega(T^{-\alpha_p})$ with probability at least $2\delta$ on some instance \citep[Proposition~1]{jiang2026linear}, and we prove that no such rule attains $\tilde O(t^{-r})$ at all times for any $r>\beta_p$, whatever its step sizes and clipping thresholds. The best previous time-uniform rate, $\tilde O(t^{-2(p-1)/(3p)})$ or $\tilde O(t^{-1/3})$ at $p=2$, is due to \citet{chezhegov2026}, who suggest that the gap to the optimal rate may be an artifact of the analysis; our results show that, for such rules, part of this gap is inherent. We also find that where gradients are queried matters more than how they are scaled: querying them at the running average of the iterates attains $\tilde O(t^{-\alpha_p})$ at all times.\looseness=-1

With clipping, high-probability guarantees can depend only logarithmically on the confidence level \citep{nazin2019,gorbunov2020,cutkosky2021,sadiev2023,nguyen2023}, as for robust mean estimators \citep{catoni2012,devroye2016,lugosi2019}, whereas unclipped SGD with predefined step sizes attains the optimal rate in expectation \citep{fatkhullin2026} but can require polynomial dependence on $1/\delta$ \citep{sadiev2023,fatkhullin2026}; most of these guarantees concern an averaged or auxiliary output and need not apply to the last iterate. Formally, we consider a convex $L$-smooth objective $f:\R^d\to\R$ with a minimizer $x^\star$ satisfying $\norm{x_1-x^\star}\le R$, and stochastic gradients $g_t$ that are conditionally unbiased with
\[
 \E\bigl[\norm{g_t-\nabla f(x_t)}^p\bigm|\F_{t-1}\bigr]\le\sigma^p,\qquad 1<p\le2 .
\]
Clipped SGD performs the updates
\begin{equation}\label{eq:recursion}
 \begin{gathered}
 x_{t+1}=x_t-\eta_t\clip(g_t,\lambda_t),\\
 \clip(g,\lambda)=\min\{1,\lambda/\norm g\}\,g,
 \end{gathered}
\end{equation}
where $\eta_t$ is the step size and $\lambda_t$ the clipping threshold. For an averaged output of clipped stochastic mirror descent, \citet[Theorem~4.1]{nguyen2023} proved that $T$ oracle calls suffice to reach, with probability at least $1-\delta$, the scale
\begin{equation}\label{eq:statistical-scale}
 \sigma R\Bigl(\frac{\log(1/\delta)}{T}\Bigr)^{(p-1)/p}
\end{equation}
plus a deterministic term of order $LR^2\log(1/\delta)/T$. For $\delta\le1/8$ and $T\ge\log(1/\delta)$, the lower bound of \citet[Proposition~1]{jiang2026linear} shows that no algorithm using $T$ stochastic gradients improves the minimum of $LR^2$ and the scale~\eqref{eq:statistical-scale} by more than a constant factor.\looseness=-1

For the last iterate, the horizon-free rate $T^{-2(p-1)/(3p)}$ of \citet[Theorem~1 and Corollary~1]{chezhegov2026} holds up to logarithms with a schedule that takes $\delta$ as input, and their experiments led them to conjecture that an improved schedule could attain the optimal rate \citep[Section~5]{chezhegov2026}. With a known horizon the conjecture holds: linearly decaying step sizes attain~\eqref{eq:statistical-scale} at the last iterate, as shown by \citet[Theorem~2]{liu2026lastiterate} and \citet[Theorem~1]{jiang2026linear} (Theorem~\ref{thm:upper} below). \citet[Section~5]{liu2026lastiterate} leaves the case of an unknown horizon open. This is the case we study, and we find that there the conjecture fails for rules that query their current iterate and step along its stochastic gradient, both for time-uniform guarantees and for fixed-time guarantees with polylogarithmic confidence dependence (Theorems~\ref{thm:anytime} and~\ref{thm:anytime-marginal}).

\subsection{Main Results}\label{sec:results}

Table~\ref{tab:summary} summarizes our results and compares them with previous and concurrent work; $\alpha_p$ is the statistical exponent of~\eqref{eq:statistical-scale}. We write error bounds as $\tilde O(t^{-r})$, where $\tilde O$ hides constants and factors polylogarithmic in $t/\delta$. An upper bound shows that the exponent $r$ is attained; a lower bound with exponent $r$ shows that for every $r'>r$, any claimed bound $\tilde O(t^{-r'})$ fails on some instance with probability more than $\delta$, rather than a pointwise bound $\Omega(t^{-r})$.

Our main result is the optimal time-uniform rate (Section~\ref{sec:anytime-section}). We give a deterministic schedule that uses neither the horizon nor the confidence level and prove that clipped SGD with this schedule attains error $\tilde O(t^{-\beta_p})$ at all times simultaneously (Theorem~\ref{thm:anytime-upper}). Because one trajectory serves every confidence level, the rate also holds almost surely. We then prove that $\beta_p$ cannot be improved: no gradient-only rule (Definition~\ref{def:class}) attains a time-uniform bound $\tilde O(t^{-r})$ with $r>\beta_p$, even if its step sizes and clipping thresholds respond adaptively and randomly to the history and the current gradient (Theorem~\ref{thm:anytime}). At $p=2$, \citet{anonymous2026cuberoot} prove, concurrently, that deterministic unclipped step sizes cannot attain almost sure rates $\tilde O(t^{-r})$ for any $r>1/3$, and in Remark~\ref{rem:cuberoot} we show that their construction also excludes such time-uniform rates on our class. With our results, this shows that deterministic unclipped schedules are limited to the exponent $1/3$, whereas our clipped schedule attains $2/5$.

The lower bound raises the question whether a method outside this class can recover $\alpha_p$ without a known horizon (Section~\ref{sec:momentum-boundary}). It can: if gradients are queried at the running average of the iterates, the error is $\tilde O(t^{-\alpha_p})$ uniformly over time (Theorem~\ref{thm:averaging-positive}, with a schedule that uses $\delta$). The accelerated method of \citet{nguyen2023} also attains $\alpha_p$ without a known horizon; their theorem is stated at each fixed time, and its proof gives the time-uniform form (Appendix~\ref{sec:averaging-positive}). The barrier therefore depends on where gradients are queried rather than on how they are scaled. Written as heavy-ball momentum \citep{polyak1964}, this update has $1-\beta_t$ of order $1/t$, so its memory grows. We show that memory that does not grow is not enough. With zero initial velocity, polynomially bounded step sizes, and geometrically decaying memory, heavy-ball momentum attains no bound $\tilde O(t^{-r})$ with $r>\beta_p$, even in one dimension. Under polynomial forgetting $1-\beta_t\ge\rho(1+t)^{-a}$, we prove the barriers $(1+a)\beta_p$ in one dimension (Theorem~\ref{thm:momentum-obstruction}) and $\max\{\beta_p,a\}$ under constants shared across dimensions (Theorem~\ref{orth:thm:main}).\looseness=-1

The price also depends on what the guarantee must cover: at a single fixed time, we show that it moves into the confidence factor (Section~\ref{sec:marginal}). Our horizon-free schedule attains $\tilde O(\delta^{-(p-1)/p}t^{-\alpha_p})$ at each fixed time $t$, or $\tilde O(\delta^{-1}t^{-\alpha_p})$ without a confidence input (Theorem~\ref{thm:fixed-time-upper}). For gradient-only rules and $r\in(\beta_p,\alpha_p]$, we prove that a fixed-time bound $\tilde O(\delta^{-c}t^{-r})$ that holds for every $\delta$ with constants independent of $\delta$ requires $c\ge r-\beta_p$, whereas $c=(3p-1)(r-\beta_p)/(p-1)$ suffices (Theorems~\ref{thm:fixed-time-upper} and~\ref{thm:anytime-marginal}). With polylogarithmic dependence on $1/\delta$, the exponent is again limited to $\beta_p$. Our lower bounds allow adaptive, randomized scalar responses to the history and the current gradient, while our upper bounds use deterministic schedules.

\begin{resulttable}
\centering
\renewcommand{\arraystretch}{1.55}
\setlength{\tabcolsep}{7pt}
\begin{tabular}{@{}lccc@{}}
\toprule
 & \multicolumn{2}{c}{Best last-iterate error at $p=2$} & \\
\cmidrule(lr){2-3}
Guarantee required & \shortstack{Gradient at the\\current iterate} & \shortstack{Gradient at the\\running average} & Thm.\\
\midrule
At a known horizon $T$ & $T^{-1/2}$ & $T^{-1/2}$ & \ref{thm:upper}\\
\midrule
At one fixed time, confidence factor $\delta^{-1/2}$ & \cellcolor{gray!16}$t^{-1/2}$ & $t^{-1/2}$ & \ref{thm:fixed-time-upper}\\
\midrule
At one fixed time, confidence factor $\log\frac1\delta$ & \cellcolor{gray!16}$t^{-2/5}$ & $t^{-1/2}$ & \ref{thm:anytime-upper}, \ref{thm:anytime-marginal}\\
\midrule
At all times (any stopping time) & \cellcolor{gray!16}$t^{-1/3}\to t^{-2/5}$ & $t^{-1/2}$ & \ref{thm:anytime-upper}, \ref{thm:anytime}, \ref{thm:averaging-positive}\\
\bottomrule
\end{tabular}
\caption{Best last-iterate error of clipped SGD under finite variance, up to constants and logarithmic factors. Shaded: new in this paper; the arrow starts from the previous rate of \citet{chezhegov2026}, and the known-horizon rate is due to \citet{liu2026lastiterate} and \citet{jiang2026linear}. Down the first column, knowing less about when the run stops costs a polynomial factor; the second column shows that querying gradients at the running average removes this cost. The rates $t^{-2/5}$ are optimal for every rule that queries its current iterate, with any step sizes and thresholds, and also limit heavy-ball momentum with a fixed coefficient and polynomially bounded step sizes (Theorems~\ref{thm:momentum-obstruction} and~\ref{orth:thm:main}). For $1<p\le2$, replace $1/2$ by $\alpha_p$, $2/5$ by $\beta_p$, and $\delta^{-1/2}$ by $\delta^{-\alpha_p}$.}
\label{tab:summary}
\end{resulttable}

\subsection{What the Results Mean}\label{sec:meaning}

We draw three lessons from these results.

Under heavy-tailed noise, a guarantee for clipped SGD that covers early stopping or checkpoint selection has a polynomial price, and tuning cannot remove it. Our lower bound covers every rule that queries its current iterate and moves along the resulting stochastic gradient, with step sizes and thresholds that may adapt to all observed gradients. The reason is a counting argument (Proposition~\ref{prop:simultaneous-budget}). On our hard instances the informative gradient observations are rare, and on each of them the iterate may move only a limited distance, since a longer move would violate the error bound on one of the instances that produce the same observations. Rare observations and limited moves cannot support any exponent above $\beta_p$.

What does help is to change where gradients are evaluated. If they are evaluated at the running average of the iterates and the average is kept as the model, the polynomial loss in the exponent disappears: the error is $\tilde O(t^{-\alpha_p})$ at all times (Theorem~\ref{thm:averaging-positive}). This connects our analysis to schedule-free SGD \citep{defazio2024roadless}, which evaluates gradients between this average and a base sequence; when it evaluates them at the average, its update with clipped gradients has the form~\eqref{eq:averaging-algorithm}, with decreasing instead of constant step sizes. In overparameterized linear regression, \citet{meterez2026} show that horizon-free, polynomially decaying step sizes combined with weight averaging attain the minimax exponent of SGD in expectation under source and capacity conditions, and in language-model pretraining they find such schedules with averaging comparable to a tuned cosine schedule. Momentum helps in the same way only if its memory grows. With $1-\beta_t$ of order $1/t$, the heavy-ball method is this averaging update, whereas geometrically decaying memory, as with a fixed coefficient, keeps the exponent $\beta_p$ when the step sizes are polynomially bounded (Theorem~\ref{thm:momentum-obstruction}).

For theory, our results show that under heavy-tailed noise, time-uniform analyses of gradient-only rules cannot match known-horizon rates up to logarithmic factors. At a single fixed time the gap moves into the confidence factor: $\tilde O(t^{-\alpha_p})$ is attainable there only with a factor polynomial in $1/\delta$ (Theorems~\ref{thm:fixed-time-upper} and~\ref{thm:anytime-marginal}). Two of our tools may be useful beyond this paper: a last-iterate potential whose weights stay bounded without a horizon, so that one concentration event covers all times (Lemma~\ref{lem:potential}), and the counting argument above, which applies to every such rule. Appendix~\ref{app:open} lists open problems, including adaptation to unknown problem parameters, lower bounds for more general updates, and the optimal confidence factor at a fixed time.

\subsection{Techniques and Organization}\label{sec:techniques}

Our upper bounds for clipped SGD rest on one potential (Lemma~\ref{lem:potential}), which combines a recursive average of past objective gaps, weighted by tail sums
of the remaining step sizes, with the distance from the last iterate to a
matching average of the iterates; the weights follow the comparators of
\citet{zamani2025} and \citet{liuzhou2024}. For a known horizon, the weights end with full mass on the final iterate, so the potential bounds the last iterate directly; this is the case behind Theorem~\ref{thm:upper} \citep{jiang2026linear}. Without a
known horizon we add a constant to the
tail sums. The weights then stay bounded, smoothness converts the
potential into a last-iterate bound, and the concentration events for all
deterministic horizons can be intersected into one event. Each event comes from one stopped Freedman inequality for the whole trajectory (Lemma~\ref{lem:energy-budget}), without the induction and union bound over the steps that \citet{nguyen2023} identify as the source of $\log T$ factors in earlier analyses. For fixed-time
guarantees at exponents above $\beta_p$, we truncate the late clipping errors at
levels set by the analysis horizon and charge the probability of truncation to the
confidence factor.
Our lower bounds couple translated objectives through rare gradient observations and bound the progress that can be made without violating the
envelope of any objective in a fixed translated family. Sections~\ref{sec:upper}--\ref{sec:marginal} develop
these arguments, Section~\ref{sec:momentum-boundary} treats momentum
and averaging. Appendix~\ref{sec:related} discusses related work, Appendix~\ref{sec:experiment} illustrates the separation numerically at $p=2$, and the remaining appendices contain the proofs. Theorems~\ref{thm:anytime-upper}, \ref{thm:anytime}, and~\ref{orth:thm:main} are also formalized in Lean~4, together with a proof of Theorem~\ref{thm:upper} (Section~\ref{sec:discussion}).

%% file: sections/02_setting.tex
\section{Problem Setting}\label{sec:setting}

$\norm{\cdot}$ and $\ip{\cdot}{\cdot}$ denote the Euclidean norm and inner product; logarithms are natural.\looseness=-1

\begin{assumption}[Objective]\label{as:objective}
The function $f:\R^d\to\R$ is differentiable and convex, and
\begin{equation*}
 \norm{\nabla f(x)-\nabla f(y)}\le L\norm{x-y}
 \quad\text{for all }x,y\in\R^d.
\end{equation*}
There is a minimizer $x^\star$, with $f^\star=f(x^\star)$, and a known bound $\norm{x_1-x^\star}\le R$, where $L,R>0$.\looseness=-1
\end{assumption}

\begin{assumption}[Oracle]\label{as:oracle}
Let $\F_{t-1}$ contain the initial state, any independent algorithmic randomness, and all observations up to time $t-1$, and write $\Et[\cdot]=\E[\cdot\mid\F_{t-1}]$. The query $x_t$ is $\F_{t-1}$-measurable, and the oracle returns one fresh observation $g_t$ with, for some $\sigma\ge0$,
\begin{align}
 \Et[g_t]&=\nabla f(x_t),\nonumber\\
 \Et\bigl[\norm{g_t-\nabla f(x_t)}^p\bigr]&\le\sigma^p,\qquad 1<p\le2.\nonumber
\end{align}
\end{assumption}

Assumption~\ref{as:oracle} covers i.i.d.\ samples from a fixed oracle with bounds uniform in $x$, as well as conditionally unbiased martingale difference noise. Its bound on the central $p$th moment allows asymmetric noise and infinite variance for $p<2$, without a uniform bound on $\Et\norm{g_t}^p$. Sample losses need not be smooth or convex. The schedules use $L,R,\sigma,p$ and, when specified, a confidence level $\delta$, as in prior analyses \citep{sadiev2023,nguyen2023,chezhegov2026}. The time-uniform schedule in Section~\ref{sec:anytime-upper} does not use $\delta$.

We write
\begin{equation*}
 \clip(g,\lambda)=
 \begin{cases}
  \min\{1,\lambda/\norm g\}\,g,&g\ne0,\\
  0,&g=0,
 \end{cases}
\end{equation*}
for Euclidean clipping, that is, the projection of the gradient vector onto the ball of radius $\lambda$ centered at zero; the iterates themselves are never projected. The output at time $t$ is the last iterate $x_{t+1}$ itself, without averaging, selection, or restarts.

\emph{Fixed-time and time-uniform guarantees.} Let $\varepsilon_t(\delta)$ be a deterministic envelope. A \emph{fixed-time} guarantee asserts that for every $\delta\in(0,1)$ and every deterministic $t\ge1$,
\[
 \Prob\bigl\{f(x_{t+1})-f^\star\le\varepsilon_t(\delta)\bigr\}\ge1-\delta,
\]
so that the success event may depend on $t$. A \emph{time-uniform} guarantee, analogous to a confidence sequence \citep{howard2021}, asserts the stronger statement
\[
 \Prob\bigl\{f(x_{t+1})-f^\star\le\varepsilon_t(\delta)\ \text{for all}\ t\ge1\bigr\}\ge1-\delta.
\]
A method for a known horizon $T$ requires the guarantee only at $t=T$. A time-uniform guarantee also holds at every data-dependent time $\tau$, such as one chosen by early stopping or checkpoint selection, because its event implies $f(x_{\tau+1})-f^\star\le\varepsilon_\tau(\delta)$. Conversely, a guarantee that holds at every bounded stopping time is time-uniform: apply it at $\min\{\tau,N\}$, where $\tau$ is the first time the bound fails, and let $N\to\infty$. A fixed-time guarantee covers only times chosen before the run. Converting fixed-time guarantees into a time-uniform event requires splitting the confidence budget; under heavy-tailed noise, this cost cannot always be absorbed into logarithmic factors (Section~\ref{sec:marginal}).\looseness=-1

%% file: sections/03_known_horizon.tex
\section{Known Horizon: An Optimal Last-Iterate Rate}\label{sec:upper}

We first recall the known-horizon rate of \citet{jiang2026linear} and \citet{liu2026lastiterate}: clipped SGD with a fixed threshold and linearly decaying step sizes attains the statistical scale~\eqref{eq:statistical-scale} at its last iterate. We then extend the potential behind it to unknown horizons.

\begin{algorithm}[t]
\caption{Clipped SGD for a known horizon}\label{alg:clipped}
\begin{algorithmic}[1]
\STATE Input: $x_1$, $L$, $R$, $\sigma$, $p$, $T$, $\delta$
\STATE $\ell\gets\log(8/\delta)$
\STATE $\lambda\gets\max\{4LR,\sigma(T/\ell)^{1/p}\}$
\STATE $\eta\gets R/(128\lambda\ell)$
\FOR{$t=1,\ldots,T$}
\STATE Draw one fresh stochastic gradient $g_t$ at $x_t$
\STATE $\eta_t\gets\eta(T-t+1)/T$
\STATE $x_{t+1}\gets x_t-\eta_t\clip(g_t,\lambda)$
\ENDFOR
\STATE Return $x_{T+1}$
\end{algorithmic}
\end{algorithm}

Algorithm~\ref{alg:clipped} makes $T$ oracle calls using $O(d)$ memory and no knowledge of $x^\star$; its clipping threshold is constant and its final step size is $\eta/T$. Linearly decaying step sizes are the worst-case optimal schedule in the analysis of \citet{defazio2024}.

\begin{theorem}[Known horizon; \citealt{jiang2026linear}, Theorem~1]\label{thm:upper}
Let Assumptions~\ref{as:objective} and~\ref{as:oracle} hold, and let $T\ge1$ and $0<\delta<1$. With probability at least $1-\delta$, the output of Algorithm~\ref{alg:clipped} satisfies, with $\ell=\log(8/\delta)$,
\begin{equation}\label{eq:upper}
 f(x_{T+1})-f^\star\le\frac{1024LR^2\ell}{T}
       +256\sigma R\left(\frac\ell T\right)^{(p-1)/p}.
\end{equation}
\end{theorem}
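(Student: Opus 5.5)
The plan is a last-iterate potential built on the comparator weights of \citet{zamani2025} and \citet{liuzhou2024}, with all noise handled by one Freedman-type concentration event for the clipped martingale differences. Write $\Delta_t=f(x_t)-f^\star$, $\eta_t=\eta(T-t+1)/T$, and $\theta_t=\clip(g_t,\lambda)-\nabla f(x_t)$. Split $\theta_t=\theta_t^u+\theta_t^b$, where $\theta_t^u=\clip(g_t,\lambda)-\Et[\clip(g_t,\lambda)]$ is the unbiased part and $\theta_t^b$ is the bias.

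The standard clipping estimates apply when $\norm{\nabla f(x_t)}\le\lambda/2$. By smoothness this holds whenever $\norm{x_t-x^\star}\le 2R$, since $\lambda\ge4LR$. The estimates are
\[
\norm{\theta_t^u}\le2\lambda,\qquad \Et\norm{\theta_t^u}^2\le c\,\sigma^p\lambda^{2-p},\qquad \norm{\theta_t^b}\le c\,\sigma^p\lambda^{1-p}.
\]

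The first step is the deterministic inequality. Use the comparator sequence $z_t$, a convex combination of $x^\star$ and the past iterates with weights given by normalized tail sums $w_t\propto\sum_{s\ge t}\eta_s$. Expand $\norm{x_{t+1}-z_{t+1}}^2$ and use convexity and smoothness. The step size satisfies $\eta_t\le\eta\le1/(128L\ell)\cdot(\ldots)\le1/L$, which lets the quadratic gradient term be absorbed. The result is the linear-decay last-iterate inequality
\[
\eta_T\Delta_{T+1}\cdot(\text{normalizer})\lesssim R^2+\sum_t\eta_t\ip{\theta_t}{z_t-x_t}+\sum_t\eta_t^2\norm{\theta_t}^2 .
\]
This is the known-horizon case of Lemma~\ref{lem:potential}, where the weights put full mass on $x_{T+1}$. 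With linear decay, $\sum\eta_t\asymp\eta T$, and the normalization turns the left side into a bound of order $R^2/(\eta T)$ plus noise terms divided by $\eta T$.

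The second step is concentration. Localize with the stopping time $\tau=\min\{t:\norm{x_t-x^\star}>2R\}$ so that the clipping estimates are valid on every step before $\tau$. Then apply Freedman's inequality once to the martingale $\sum_{t<\tau}\eta_t\ip{\theta^u_t}{z_t-x_t}$, and a Bernstein-type bound to $\sum_{t<\tau}\eta_t^2(\norm{\theta^u_t}^2-\Et\norm{\theta^u_t}^2)$. Here $\ell=\log(8/\delta)$ covers a union of a constant number of events. The increments are bounded by $4\eta\lambda R$ and the variance proxy is $\eta^2R^2T\sigma^p\lambda^{2-p}$. The choice $\lambda\ge\sigma(T/\ell)^{1/p}$ makes $T\sigma^p\lambda^{-p}\le\ell$. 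So both the variance term and the bias term, $\sum\eta_t\norm{\theta_t^b}\cdot R\le\eta TR\sigma^p\lambda^{1-p}\le\eta R\lambda\ell$, are of order $\eta\lambda R\,\ell$. With $\eta=R/(128\lambda\ell)$ this is at most a small constant times $R^2$. An induction on the event then shows that $\tau>T+1$: distances stay below $2R$, because the same potential controls $\norm{x_t-x^\star}^2$ for every $t$.

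The last step is to unwind. We get $\Delta_{T+1}\lesssim R^2/(\eta T)=128R\lambda\ell/T$. Plugging in the two branches of $\lambda$ gives $512LR^2\ell/T$ and $128\sigma R\ell^{(p-1)/p}T^{-(p-1)/p}$, which yields~\eqref{eq:upper} after accounting for constants.

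The main obstacle is the localization. The Freedman bound needs $\norm{z_t-x_t}\le cR$ and the clipping estimates need $\norm{\nabla f(x_t)}\le\lambda/2$, yet both are established only on the good event. I would handle this by proving the distance bound and the potential bound simultaneously via the stopped process, in the manner of \citet{nguyen2023} and Lemma~\ref{lem:energy-budget}. This avoids a per-step union bound and keeps the dependence on $\log(1/\delta)$ free of any $\log T$ factor.
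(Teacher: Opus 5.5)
Your overall route is the paper's: the tail-sum potential of Lemma~\ref{lem:potential} with $\zeta=0$, a stopping-time localization, and one Freedman-type bound in the form of Lemma~\ref{lem:energy-budget}. The concentration step as you set it up would fail, however, because your displayed inequality does not carry the weights that Lemma~\ref{lem:potential} produces.

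With $\zeta=0$ the noise enters as $\sum_tq_t\norm{e_t}^2+\sum_t\ip{e_t}{h_t}$, where $q_t=2S_1\eta_t^2/S_t$ and $h_t=(S_1\eta_t/S_{t-1})(z_{t-1}-x_t)$. For linear decay, $S_t=\eta(T-t+1)(T-t+2)/(2T)$.
\begin{itemize}
\item The quadratic weight satisfies $q_t\le4\eta^2$, which is harmless.
\item The linear weight $S_1\eta_t/S_{t-1}$ is of order $\eta T/(T-t+2)$, so it reaches order $\eta T$ near the horizon.
\end{itemize}
A localization that only yields $\norm{z_t-x_t}\le cR$ therefore gives increments of order $\eta T\lambda R$ and a variance proxy of order $\eta^2T^2R^2\sigma^p\lambda^{2-p}$. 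That is a factor $T$ too large, and after division by $S_1\asymp\eta T$ the stochastic term does not even decay.

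The missing idea is self-normalization. The term $H_{t-1}=\frac{S_1}{2S_{t-1}}\norm{z_{t-1}-x_t}^2$ is itself part of the potential $P_m=S_1A_m+H_m$. Hence $\norm{h_t}^2=(2S_1\eta_t^2/S_{t-1})H_{t-1}\le4\eta^2P_{t-1}$. In words, $\norm{z_{t-1}-x_t}$ shrinks like $R(T-t+2)/T$ exactly as the weight grows. You can use this only if the localization includes the potential itself. That is what Lemma~\ref{lem:energy-budget} provides, through its hypothesis $\norm{w_t}\le s_t\sqrt{P_{t-1}}$ (here $s_t\le2\eta$) and its internal indicator $\max_{j<t}P_j\le H$. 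With this in place, your budgets ($4\eta\lambda R$, $\eta^2TR^2\sigma^p\lambda^{2-p}$, and bias $\eta\lambda R\ell$) are the correct ones. Then $\eta\lambda\ell=R/128$ gives $P_T\le R^2$, and $f(x_{T+1})-f^\star=A_T\le R^2/S_1\le256R\lambda\ell/T$. This is \eqref{eq:upper} with its exact constants.

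Second, the radius bound that keeps $\norm{\nabla f(x_t)}\le\lambda/2$ does not follow from this potential as you claim.
\begin{itemize}
\item $H_m$ controls $\norm{x_{m+1}-z_m}\le\sqrt{2(S_m/S_1)P_m}$, and $A_m$ controls only function values.
\item The comparator $z_m$ puts weight $1-S_m/S_1$ on earlier iterates.
\item Bounding $\norm{z_m-x^\star}$ by the radius of those iterates and adding $\norm{x_{m+1}-z_m}$ closes the induction at $2R$ only while $S_m/S_1\ge P_m/(2R^2)$. This fails near the horizon.
\end{itemize}
The localization of the iterates needs its own recursion: Lemma~\ref{lem:one-step} with the fixed comparator $z=x^\star$. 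There your weights $\eta_t,\eta_t^2$ and the crude bound $\norm{x_t-x^\star}\le2R$ are exactly right. Its bias, variance and range budgets are again multiples of $\eta\lambda\ell R=R^2/128$. They keep $\norm{x_t-x^\star}^2$ well below $4R^2$ for all $t\le T+1$, outside an event of probability $e^{-\ell}=\delta/8$. The potential is then concentrated on that event with $J_t=\mathbf 1\{\norm{x_s-x^\star}\le2R\text{ for }s\le t\}$, as in Appendices~\ref{app:anytime-upper-radius} and~\ref{app:anytime-upper-potential} for the horizon-free schedule. The two exceptional events together have probability at most $\delta/4<\delta$ with $\ell=\log(8/\delta)$.
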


The bound has no $\log T$ factor except through the confidence parameter and has constants independent of $p$ and the dimension. At $p=2$ the stochastic term is $256\sigma R\sqrt{\ell/T}$; at $\sigma=0$ the threshold is $4LR$. \citet[Theorem~2]{liu2026lastiterate} proves a bound of the same order with a state-dependent threshold (Appendix~\ref{sec:related-last}). The proof of \citet{jiang2026linear} combines the tail-sum potential of Appendix~\ref{sec:analysis} with a stopping-time localization of the iterates. Our Lean formalization contains a machine-checked proof of Theorem~\ref{thm:upper} (Section~\ref{sec:discussion}).

When the stochastic term dominates, $\lambda\asymp T^{1/p}$ balances the total clipping bias $T\sigma^p\lambda^{1-p}$ against $\lambda\ell$, and each update moves by at most $R/(128\ell)$. The actual final state attains~\eqref{eq:statistical-scale}, which is optimal up to constants in this regime \citep[Proposition~1]{jiang2026linear}. The deterministic term is that of unaccelerated gradient descent; acceleration attains $LR^2/T^2$ without noise \citep{nesterov1983,lan2012}.

%% file: sections/04_time_uniform.tex
\section{Unknown Horizon: Time-Uniform Guarantees}\label{sec:anytime-section}

We now turn to an unknown horizon and ask what can be guaranteed at all times simultaneously. We first construct a schedule that attains $\tilde O(t^{-\beta_p})$ and then prove that no gradient-only rule attains a larger exponent.

\subsection{A Horizon-Free Upper Bound}\label{sec:anytime-upper}

Define the exponents
\begin{equation}\label{eq:anytime-exponents}
 \begin{gathered}
 \beta=\beta_p=\frac{2(p-1)}{3p-1},\\
 a=\frac{p+1}{3p-1}=1-\beta,\qquad
 b=\frac2{3p-1}.
 \end{gathered}
\end{equation}
Let $Q=\max\{LR,\sigma\}$ and $\ell_t=\log(100t^2)$. We run the recursion~\eqref{eq:recursion} for all $t\ge1$ with
\begin{equation}\label{eq:anytime-upper-schedule}
 \eta_t=\frac{R}{Q}\,\frac{c_p\,t^{-a}}{\ell_t^2},
 \qquad \lambda_t=4Qt^b,
\end{equation}
where $c_p>0$ depends only on $p$. At $p=2$ this is $\eta_t\propto t^{-3/5}$ and $\lambda_t\propto t^{2/5}$. Neither a terminal horizon nor a confidence level enters the parameters or initialization. For analysis, write $\iota_t(\delta)=\log(100t^2/\delta)$.

\begin{theorem}[Time-uniform upper bound]\label{thm:anytime-upper}
Let Assumptions~\ref{as:objective} and~\ref{as:oracle} hold, and run~\eqref{eq:recursion} with the schedule~\eqref{eq:anytime-upper-schedule}. There is a finite positive constant $c_p$ depending only on $p$, and we may take $D_p=4/c_p$ (see Appendix~\ref{app:anytime-upper-constants}), such that for every $\delta\in(0,1)$, with probability at least $1-\delta$, simultaneously for every integer $T\ge1$,
\begin{align}
 f(x_{T+1})-f^\star&\le D_p\,RQ\,T^{-\beta}\iota_T(\delta)^4,
 \label{eq:anytime-upper-rate}\\
 \norm{x_{T+1}-x^\star}^2&\le R^2[\log(100/\delta)]^2.
 \label{eq:anytime-upper-radius}
\end{align}
\end{theorem}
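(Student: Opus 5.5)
We will prove the theorem with a single horizon-free potential and one concentration event that covers all horizons at once. For each deterministic analysis horizon $T$, define the tail-sum weights $w_s^{(T)}=\kappa+\sum_{u=s}^{T}\eta_u$ with an added constant $\kappa$ of order $\eta_1$ (the "constant added to the tail sums" of Section~\ref{sec:techniques}). These give comparators $z_s^{(T)}$, which are recursive weighted averages of $x^\star$ and the past iterates in the style of \citet{zamani2025} and \citet{liuzhou2024}. The potential of Lemma~\ref{lem:potential} combines the weighted average of the objective gaps $f(x_s)-f^\star$ with $\norm{x_{T+1}-z_T^{(T)}}^2$. Because $\kappa>0$, the weight that the comparator places on the last iterate stays bounded below relative to the earlier ones. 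Smoothness, via $f(x_{T+1})\le f(x_s)+\ip{\nabla f(x_s)}{x_{T+1}-x_s}+\tfrac L2\norm{x_{T+1}-x_s}^2$ averaged against the weights, then turns the potential bound into a bound on $f(x_{T+1})-f^\star$, at a cost of a factor $(\sum_{s\le T}\eta_s)/\kappa$ only through logarithms. The basic one-step inequality is the standard clipped-SGD expansion. Write $\clip(g_s,\lambda_s)=\nabla f(x_s)+\theta_s^u+\theta_s^b$ with $\theta_s^u$ the unbiased martingale part and $\theta_s^b$ the bias. This yields
\[
 \sum_{s\le T}\eta_s w_s\bigl(f(x_s)-f^\star\bigr)\lesssim R^2+\sum_{s\le T}\eta_s\ip{\theta_s^u}{z_s-x_s}+\sum_{s\le T}\eta_s\norm{\theta_s^b}\norm{z_s-x_s}+\sum_{s\le T}\eta_s^2\bigl(\norm{\theta_s^u}^2+\norm{\nabla f(x_s)}^2\bigr).
\]

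We control the noise terms with the standard clipping facts. When $\lambda_s\ge 2\norm{\nabla f(x_s)}$, we have $\norm{\theta_s^b}\lesssim\sigma^p\lambda_s^{1-p}$, $\Et\norm{\theta_s^u}^2\lesssim\sigma^p\lambda_s^{2-p}$, and $\norm{\theta_s^u}\le2\lambda_s$. To guarantee $\lambda_s=4Qs^b\ge2\norm{\nabla f(x_s)}$ we localize. Let $\tau$ be the first time $\norm{x_s-x^\star}^2>R^2[\log(100/\delta)]^2$. Before $\tau$ we have $\norm{\nabla f(x_s)}\le LR\log(100/\delta)$, and for $s$ large compared with the logarithm this is below $\lambda_s/2$. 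For the early steps, where it is not, we use the extra $\ell_t^{-2}$ in the step size: the total displacement there is at most $\sum\eta_s\lambda_s$, which is $O(R)$ once $c_p$ is small. Stopped at $\tau$, the martingale $\sum\eta_s\ip{\theta_s^u}{z_s-x_s}$ has increments bounded by $\eta_s\lambda_s R\log(1/\delta)$ and conditional variance bounded by $\sum\eta_s^2\sigma^p\lambda_s^{2-p}R^2\log^2$. We apply one stopped Freedman inequality to the whole trajectory (Lemma~\ref{lem:energy-budget}) with confidence $\delta/(100T^2)$, which produces $\iota_T(\delta)$, and take a union bound over $T$. Since $\sum_T 1/(100T^2)<1$, a single event of probability $1-\delta$ covers all horizons. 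On this event an induction in $T$ shows that the radius bound~\eqref{eq:anytime-upper-radius} persists, so $\tau=\infty$.

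It remains to balance exponents. With $\eta_s\asymp s^{-a}$ and $\lambda_s\asymp s^b$, the normalization is $\sum_{s\le T}\eta_s w_s\asymp T^{1-a}\cdot(\text{tail sums})$, and dividing gives the error. There are three terms. The bias sum contributes $\sum\eta_s\sigma^p s^{b(1-p)}R$. The variance and square terms contribute $\sum\eta_s^2\sigma^p s^{b(2-p)}$. The Freedman range term contributes $\max_s\eta_s\lambda_s R\,\iota$. After dividing by $\sum\eta_s\asymp T^{1-a}=T^{\beta}$, each of these should scale as $T^{-\beta}$. This requires $a+b(p-1)=1$, $\eta_s\lambda_s\asymp s^{b-a}$ summable against the normalization, and $2a-b(2-p)\ge1$. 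With $a=(p+1)/(3p-1)$ and $b=2/(3p-1)$, the choice in~\eqref{eq:anytime-exponents} is exactly where the bias term (needing large $\lambda$) and the per-step movement term $\eta_s\lambda_s$ (needing small $\lambda$) balance. We verify the identities: $a+b(p-1)=(p+1+2p-2)/(3p-1)=1$, and so on. The polylog factors $\ell_t^{-2}$ and $\iota_T^4$ absorb the logarithms from the localization radius, from Freedman, and from the mismatch between $\sum\eta_s$ and $\kappa$.

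We expect the main obstacle to be making the smoothness conversion from the potential to $f(x_{T+1})$ work uniformly in $T$ without a horizon. The weights must stay bounded so that the same martingale structure serves every $T$, and the term $\tfrac L2\norm{x_{T+1}-x_s}^2$ must be controlled by the $\eta_s^2$ terms. We would first try to choose $\kappa\asymp R/(Q\ell^2)$ and require $\eta_t L\le c_p$, so that the smoothness term is absorbed into the descent. We would then fix $c_p$ as the smallest of the finitely many constraints, which gives $D_p=4/c_p$.
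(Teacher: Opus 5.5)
\textbf{Verdict: genuine gap.} Your overall architecture matches the paper: a tail-sum potential with an added constant for each deterministic horizon, one application of Lemma~\ref{lem:energy-budget} at level $\iota_T$ per horizon, a union over $T$, and the same exponent identities. The localization that is meant to make the clipping estimates available, however, fails at early times.

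\textbf{The early-time displacement is not $O(1)$.} Work in normalized units and write $r_t=\norm{x_t-x^\star}$ and $h=\log(100/\delta)$. Localized at radius $h$, Lemma~\ref{lem:clipping} needs $\lambda_s\ge2h$, that is $s\ge k\asymp h^{1/b}$. For $s<k$ you use the crude bound $\sum_{s<k}\eta_s\lambda_s$. Since $\eta_s\lambda_s=4c_ps^{-\beta/2}/\ell_s^2$ and $(1-\beta/2)/b=p$,
\[
 \sum_{s<k}\eta_s\lambda_s\asymp c_p\,\frac{h^{p}}{\ell_k^{2}},\qquad \ell_k=\log(100k^2)=O(\log h).
\]
The number of early steps grows with $\log(1/\delta)$. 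Because $p>1$, this displacement exceeds the localization radius $h$ itself once $\delta$ is small in terms of $c_p$ alone. No $\delta$-free choice of $c_p$ repairs this. Enlarging the localization radius to $\rho$ makes it worse, since the early displacement becomes of order $c_p\rho^p\gg\rho$.

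\textbf{The radius bound cannot come from your per-horizon events.} The $T$-free radius~\eqref{eq:anytime-upper-radius} does not follow from them, for two reasons. They are at level $\iota_T$, so at best they give a radius of order $\iota_T$. Also, the potential controls $\norm{z_T-x_{T+1}}$, the distance to the comparator, not to $x^\star$. So your induction showing $\tau=\infty$ does not close.

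\textbf{The missing ingredient.} It is the global decomposition of Lemma~\ref{lem:global-clipping}, which needs no bound on the true gradient: $\clip(g_t,\lambda_t)=A_t^{\rm s}\nabla f(x_t)+d_t$, with predictable $A_t^{\rm s}\in[0,1]$, $\norm{\E_{t-1}d_t}\le\lambda_t^{1-p}$, $\E_{t-1}\norm{d_t}^2\le4\lambda_t^{2-p}$, and $\norm{d_t}\le3\lambda_t$. Since $\eta_tA_t^{\rm s}\le1/2$, the gradient part only contracts. Hence $r_{t+1}^2\le r_t^2-2\eta_t\ip{d_t}{x_t-x^\star}+2\eta_t^2\norm{d_t}^2$ holds at every $t$, early steps included. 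The budgets~\eqref{eq:global-anytime-budgets} are finite over the infinite horizon. So one application of Lemma~\ref{lem:energy-budget} at level $h$, without localization, gives $r_t\le h$ and $G_t\le h^2/2$ for all $t$ on a single event of probability $1-\delta/100$.

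Only after that does the paper run the $\zeta=1$ potential, and only on the suffix $k,\ldots,T$, with initial energy $h^2/2$ and localization at $2h$. It then converts to the last iterate through~\eqref{eq:potential-endpoint}. Times $T<2k$ are handled directly from $G_{T+1}\le h^2/2$ and $T^\beta\le(4h^{1/b})^\beta$.

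\textbf{A further point for the suffix step.} There the noise carries the weight $W/U_{t-1}$, not the bare $\eta_s$ in your display. Near $t=T$ this weight is of order $1+c_pT^\beta/\ell_T^2$. Bounding $\norm{z_{t-1}-x_t}$ by the radius would then make the range term grow like $T^{\beta/2}$ and lose the rate. You need the self-bounding form $\norm{w_t}\le\sqrt{2\mathsf Q_t}\,\sqrt{P_{t-1}}$ with $\mathsf Q_t=W\eta_t^2/U_t\le W\eta_t^2$. This bound is exactly what $\zeta=1$ buys, and it is what Lemma~\ref{lem:energy-budget} is built for.
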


The trajectory is the same for every $\delta$; only the event and its bounds depend on $\delta$. Hence the rate also holds almost surely: with $\delta=1/n$, the theorem gives events of probability at least $1-1/n$ on which $f(x_{T+1})-f^\star\le D_pRQ\,T^{-\beta_p}\ell_T^4(1+\log n/\ell_T)^4$ for all $T$, so that, with probability one,
\[
 \limsup_{T\to\infty}\frac{f(x_{T+1})-f^\star}{T^{-\beta_p}\ell_T^4}\le D_pRQ.
\]
 Our proof gives $c_2<10^{-25}$ and $D_2>10^{25}$ at $p=2$ (Appendix~\ref{app:anytime-upper-constants}); the exponents of the schedule and of the rate do not depend on these constants, and the illustration in Appendix~\ref{sec:experiment} uses step-size constants of order one. The guarantee of \citet{chezhegov2026} is also time-uniform, since their proof controls nested events over all steps, but their schedule uses $\delta$; our improvement of their exponent comes from the potential in Appendix~\ref{sec:analysis}. At $p=2$ this also separates clipped SGD from unclipped SGD with deterministic step sizes, which attains no exponent above $1/3$ almost surely \citep{anonymous2026cuberoot}; Appendix~\ref{app:cuberoot} compares the two settings. In Theorem~\ref{thm:anytime} we show that $\beta_p$ is optimal.

\subsection{A Matching Lower Bound for Horizon-Free Rules}\label{sec:anytime}

To show that $\beta_p$ cannot be improved, we consider all rules that query the current iterate and update along the resulting stochastic gradient. Every clipped update is a nonnegative scalar multiple of $-g_t$, and every such multiple for $g_t\ne0$ can be implemented by clipping with a suitable step size. We call these rules \emph{gradient-only rules}; Definition~\ref{def:class} restricts their update direction and oracle access.

\begin{definition}[Gradient-only rules]\label{def:class}
Let $p\in(1,2]$, $L=R=\sigma=1$, and $x_1=0$. A horizon-free \emph{gradient-only rule} queries one fresh stochastic gradient $g_t$ at the current iterate $x_t$ and updates
\begin{equation*}
 x_{t+1}=x_t-c_t\,g_t ,
\end{equation*}
where the finite step factor $c_t\ge0$ is a measurable function of $t$, $\delta$, $p$, the bounds $L,R,\sigma$, the gradients $g_1,\ldots,g_t$, and independent randomness. Equivalently, $c_t g_t=\eta_t\clip(g_t,\lambda_t)$ for some finite $\eta_t\ge0$ and $\lambda_t>0$ chosen in the same way.
\end{definition}

Momentum, extra queries, restarts, output selection, and averaging are excluded.

\begin{theorem}[Time-uniform lower bound]\label{thm:anytime}
Fix $p\in(1,2]$ and $\delta\in(0,1/2)$. For every rule of Definition~\ref{def:class}, every $C>0$, $m\ge0$, and
\begin{equation}\label{eq:beta}
 r>\beta_p=\frac{2(p-1)}{3p-1},
\end{equation}
there is an instance satisfying Assumptions~\ref{as:objective} and~\ref{as:oracle} with $L=R=\sigma=1$ and $x_1=0$ such that
\begin{equation}\label{eq:anytime-failure}
 \Prob\Bigl\{\exists t\ge1:\ f(x_{t+1})-f^\star>
 C t^{-r}[\log(et)]^m\Bigr\}>\delta.
\end{equation}
The rule, $C$, and $m$ may depend arbitrarily on $\delta$ and $p$. The instance is one-dimensional, with i.i.d.\ additive noise and convex, globally $1$-smooth sample losses that attain their minima.
\end{theorem}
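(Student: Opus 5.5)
We will build a one-dimensional family of translated instances that produce the same observations except on rare events. Then we bound how far a gradient-only rule can move on those rare events while respecting the envelope $\varepsilon_t=Ct^{-r}[\log(et)]^m$ on every member of the family. This is the counting argument the introduction calls Proposition~\ref{prop:simultaneous-budget}.

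\emph{Construction.} The noise is i.i.d.\ additive and heavy-tailed: with probability $q$ it equals $\pm M$ with $qM^p\asymp1$, and otherwise it is a small bounded perturbation. Its mean is zero, so Assumption~\ref{as:oracle} holds with $\sigma=1$. The objectives are Huber-type, $f_\theta(x)=h(x-\theta)$ with $h$ convex, $1$-smooth, and linear with slope $s$ outside a window of width $w$. The translations $\theta$ range over a grid in $[0,1]$, so $R=1$ holds. Away from the kink, $g_t$ equals the constant slope plus noise, and its law is the same for all $\theta$ on the same side of $x_t$. The rule therefore cannot distinguish two instances while its iterate stays on one side of both minimizers. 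To move the iterate a distance $D$ against a drift, it must rely on observations that carry information, namely the rare large-noise draws or the appearance of the kink.

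\emph{Budget.} Fix a scale $t$. On the instance $\theta$, the time-uniform event requires $|x_{s+1}-\theta|\lesssim\sqrt{\varepsilon_s}$ near the minimizer, and $s\,|x_{s+1}-\theta|\lesssim\varepsilon_s$ on the linear part, for all $s\ge t$. A single observation $g_s$ is consistent with several $\theta$ at separation $\gtrsim\sqrt{\varepsilon_s}$. A large step $c_sg_s$ on a rare draw of size $M$ would therefore push one of these instances outside its envelope, because the rule's response is the same function of the common history. This caps each move at $c_s|g_s|\lesssim\sqrt{\varepsilon_s}$ on shared histories.

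Correcting the bias introduced by the heavy tails requires the rule to see roughly $q\cdot t$ rare draws by time $t$. Each rare draw contributes a displacement of at most $\sqrt{\varepsilon_t}$. The rule must also average the heavy-tailed noise, whose clipped-mean error over $n$ samples is of order $(\text{number of samples})^{-(p-1)/p}$ at the allowed scale. Balancing the number of informative observations, the per-move cap, and the statistical error gives a relation of the form $\varepsilon_t^{(3p-1)/(2(p-1))}\gtrsim t^{-1}$ up to logarithms, which forces $r\le\beta_p$.

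To turn this into the failure probability~\eqref{eq:anytime-failure}, we argue by contradiction. If the rule achieves exponent $r>\beta_p$ with probability at least $1-\delta$ on every instance, we choose $q$, $M$, $w$, and the grid spacing as powers of $t$ with $t\to\infty$, and we couple two instances through identical noise on the common-history event. A union argument then shows that on at least one instance the envelope fails with probability $>\delta$. Because $\delta<1/2$, the success events of two coupled instances must intersect, which yields the contradiction. The rule's dependence on $\delta$, $C$, and $m$ is harmless, since all parameters are chosen after them.

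\emph{Main obstacle.} The hardest step is making the per-move cap rigorous against adaptive and randomized $c_t$. The rule may take a huge step exactly on the rare draws, and the cap has to come from indistinguishability. We would first prove a single-step lemma: conditioned on the common history, the law of $x_{s+1}$ is the same under $\theta$ and $\theta'$. Hence if $\Prob_\theta(|x_{s+1}-\theta|\le\sqrt{\varepsilon_s})$ and $\Prob_{\theta'}(|x_{s+1}-\theta'|\le\sqrt{\varepsilon_s})$ both exceed $1/2$, the two intervals must intersect. This pins down the grid spacing and bounds the total progress on a stopped process by the number of informative draws times $\sqrt{\varepsilon}$. The remaining ingredient is a concentration bound for the binomial count of rare draws, which is routine.
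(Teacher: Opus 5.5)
\textbf{There are two genuine gaps: one in the construction, one in the step that turns indistinguishability into a bound on total progress.} Your target relation $\varepsilon^{(3p-1)/(2(p-1))}\gtrsim t^{-1}$ is the right one.

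\textbf{The construction.} It does not force progress to occur only at rare draws, and that is the heart of the argument. You use symmetric spikes $\pm M$, a nonzero slope on the linear part, and a small perturbation otherwise, so the typical observation is nonzero with a known sign. Because $c_t\ge0$ is an arbitrary measurable factor, the rule can then move by about $\sqrt{\varepsilon_t}$ at every step. Your per-move cap therefore limits total progress only to $\sum_t\sqrt{\varepsilon_t}\gg1$, and no contradiction follows. The missing idea is to carry the entire drift by rare one-sided spikes. On the left branch of $f_{\mu,y}(x)=\tfrac12(x-y+\mu)_+^2-\mu x$, the paper's oracle returns exactly $g_t=-AY_t$ with $Y_t\sim\mathrm{Bernoulli}(\pi)$ and $\pi A=\mu$. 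Hence $x_{t+1}-x_t=c_tAY_t$ vanishes unless $Y_t=1$, and the reference run is nondecreasing, which the crossing argument requires. The factor $\mu^{p/(p-1)}$ comes from the moment constraint $\pi A^p=1/2$, together with $\mu>4b_N$, which makes stalling below $1/2$ violate the envelope of the target $y=1$. It does not come from a clipped-mean estimation error; that bounds estimators, not arbitrary gradient-only rules.

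\textbf{The progress bound.} Your two-point argument (both success probabilities exceed $1/2$, so the intervals intersect) controls one move. The rule, however, makes about $\pi N$ adaptive, randomized moves, and each target need only succeed with probability $1-\delta$. Pinning every move through a grid of about $\varepsilon^{-1/2}$ targets and a union bound loses exactly that factor. A two-point comparison also cannot capture the accumulation of many small safe moves that produces $\beta_p$.

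The paper instead proceeds as follows.
\begin{itemize}
\item It fixes a continuum of targets $y=z+\mu$, $z\in[0,1/2)$, before the run.
\item It couples each target to the common reference run through its first crossing of $z$; all earlier queries see the same observations $-AY_t$.
\item A first crossing that overshoots $z$ by more than $s_t=\mu+\sqrt{2b_t}$ violates that target's envelope.
\item Integrating these failure probabilities over $z$ (Tonelli) bounds the expected Lebesgue measure of badly crossed boundaries by $\varepsilon/2$.
\item The length crossed safely at step $t$ is at most $\min\{s_t,D_t\}\le Y_ts_t$.
\end{itemize}
Together these give $\pi\sum_{t\le N}s_t\ge(1-2\varepsilon)/2$ (Proposition~\ref{prop:simultaneous-budget}), using only $\E Y_t=\pi$ and no binomial concentration. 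Finally, take $b_{N,t}=C\log^m(eN)t^{-u}$ with $\beta_p<u<\min\{r,1\}$, and $\mu=8b_{N,N}$. Then the left side is $N^{1-u/\beta_p+o(1)}\to0$, which is the contradiction.
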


Theorems~\ref{thm:anytime-upper} and~\ref{thm:anytime} together answer our question for time-uniform guarantees: $\beta_p$ is the optimal exponent for gradient-only rules, strictly below the exponent $\alpha_p$ for a known horizon.

\emph{Idea of the proof.} The sample losses
$F_{\mu,y}(x;Y)=\tfrac12(x-y+\mu)_+^2-AYx$, with
$Y\sim\operatorname{Bernoulli}(\mu/A)$ and
$A=(2\mu)^{-1/(p-1)}$, have minimizer $y\in[\mu,1]$ in expectation.
Every target gives the same gradient $-AY$ before crossing its boundary
$y-\mu$. A shared transcript therefore couples all fixed targets.
If a rule keeps the error below $b_t$ for all $t\le N$, an overshoot longer than $\mu+\sqrt{2b_t}$ violates one target's envelope.
Integrating, over the target location, the probabilities of failing at the first crossing charges
safe progress only to the rare arrivals $Y=1$, which have probability $\pi=\mu/A$. Proposition~\ref{prop:simultaneous-budget} turns this into a budget: if the envelopes $b_1,\ldots,b_N$ hold on every instance with probability at least $1-\varepsilon$, then for every $\mu\in(4b_N,1/2]$,
\[
 \pi\sum_{t=1}^N\bigl(\mu+\sqrt{2b_t}\bigr)\ge\frac{1-2\varepsilon}{2}.
\]
The budget holds for adaptive jump sizes, and the targets are fixed before the run rather than chosen
after seeing the path. For a claimed exponent $r>\beta_p$, choose
$\beta_p<u<\min\{r,1\}$ and dominate the first $N$ errors by
$b_{N,t}=C\log^m(eN)t^{-u}$. With $\mu=8b_{N,N}$,
$\pi\asymp\mu^{p/(p-1)}$ and the left side is, up to logarithms,
$N^{1-u(p/(p-1)+1/2)}$, which tends to zero because $u>\beta_p=1/(p/(p-1)+1/2)$. Appendix~\ref{app:anytime} carries out this
calculation, which also gives Theorems~\ref{thm:anytime-marginal} and~\ref{thm:momentum-obstruction}.

%% file: sections/05_fixed_time.tex
\section{Unknown Horizon: Fixed-Time Guarantees}\label{sec:marginal}

We now show that, for gradient-only rules, the barrier $\beta_p$ can be crossed at a single fixed time, but only at a polynomial cost in confidence: horizon-free clipped SGD attains the statistical exponent at each fixed time with polynomial confidence dependence, and every exponent between $\beta_p$ and $\alpha_p$ at a smaller confidence cost.

\begin{theorem}[Fixed-time confidence--rate tradeoff]\label{thm:fixed-time-upper}
Let Assumptions~\ref{as:objective} and~\ref{as:oracle} hold, and let $Q$, $c_p$, $D_p$, $\ell_t$, and $\iota_T(\delta)$ be as in Theorem~\ref{thm:anytime-upper}. Fix $r\in(\beta_p,\alpha_p]$ and $u\ge0$, and use the horizon-free schedule
\begin{equation}\label{eq:fixed-time-schedule}
 \begin{gathered}
 \eta_t=\frac{R}{Q}\,\frac{c_p\,\delta^u\,t^{-(1-r)}}{\ell_t^2},\\
 \lambda_t=4Q\max\bigl\{1,\delta^{u/(p-1)}t^{r/(p-1)}\bigr\}.
 \end{gathered}
\end{equation}
Put
\begin{gather*}
 c_{\rm in}(r)=\frac{3p-1}{p-1}(r-\beta_p),\\
 c_{\rm free}(r)=\frac{(3p-1)r-2(p-1)}{(2p-1)r-(p-1)}.
\end{gather*}
If $u=c_{\rm in}(r)$, let $c=c_{\rm in}(r)$; if $u=0$, let $c=c_{\rm free}(r)$. Then for every $\delta\in(0,1)$ and every deterministic $T\ge1$, with probability at least $1-\delta$,
\begin{equation}\label{eq:fixed-time-rate}
 f(x_{T+1})-f^\star\le D_p\,RQ\,T^{-r}\delta^{-c}\,\iota_T(\delta)^4.
\end{equation}
\end{theorem}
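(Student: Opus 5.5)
We would reuse the machinery behind Theorem~\ref{thm:anytime-upper}: the horizon-free tail-sum potential (Lemma~\ref{lem:potential}) together with the stopped Freedman inequality (Lemma~\ref{lem:energy-budget}). The schedule~\eqref{eq:fixed-time-schedule} has the same structure as~\eqref{eq:anytime-upper-schedule}, with exponents $a\to1-r$ and $b\to r/(p-1)$ and with the $\delta^u$ scalings. For the analysis we fix a deterministic $T$. Applying Lemma~\ref{lem:potential} at horizon $T$ and converting the potential into a last-iterate bound by smoothness, we bound $f(x_{T+1})-f^\star$ by four terms, each divided by the weight $\sum_{t\le T}\eta_t\asymp \delta^u T^{r}/\ell_T^2$. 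These are the deterministic term $R^2$, the bias $\sum_t\eta_t\,\E\norm{g_t-\clip(g_t,\lambda_t)}\lesssim\sum_t\eta_t\sigma^p\lambda_t^{1-p}$, the martingale term, and the variance term $\sum_t\eta_t^2\lambda_t^{2-p}\sigma^p$.

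With $\lambda_t^{p-1}\asymp\delta^{u}t^{r}$, the bias is $\sum_t t^{-(1-r)}t^{-r}\delta^{u}\delta^{-u}\lesssim\log T$ up to the $R/Q$ factors. It is therefore harmless, and the clipping exponent is chosen for exactly this reason. The variance term gives $\sum_t\eta_t^2\lambda_t^{2-p}\asymp\delta^{2u}\delta^{u(2-p)/(p-1)}\sum_t t^{-2(1-r)+r(2-p)/(p-1)}$.

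The difficulty is the martingale term. Its increments are bounded by $\eta_t\lambda_t\norm{x_t-x^\star}$, and Freedman then gives a deviation of order $\iota_T\max_t\eta_t\lambda_t R$ plus a square-root variance term. Since $\eta_t\lambda_t\asymp\delta^{u p/(p-1)}t^{-(1-r)+r/(p-1)}$ grows polynomially once $r>\beta_p$, this is exactly where the time-uniform argument fails: the late steps can move the iterate by more than $R$. Following the outline in Section~\ref{sec:techniques}, we would truncate. For $t\le T$ we would replace each late clipping error by its truncation at a level $\theta_T$ set by $T$, and bound the probability that any truncation is active via Markov's inequality with the $p$th moment, $\sum_{t\le T}\sigma^p/\theta_T^p\le\delta/2$.

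On the complement, the Freedman range is $\eta_t\theta_T$ rather than $\eta_t\lambda_t$, and the rest of the argument (the localization $\norm{x_t-x^\star}\lesssim R\log(1/\delta)$ via a stopping time, as in~\eqref{eq:anytime-upper-radius}) goes through with probability $1-\delta/2$. Collecting terms, the error is at most $T^{-r}\ell_T^2\,\delta^{-u}\,[\,1+\text{(truncated martingale and variance terms)}\,]\iota_T^4$. The exponent $c$ is then found by balancing. The factor $\delta^{-u}$ comes from the weight, while the truncation level $\theta_T\asymp(T/\delta)^{1/p}$ enters through $\max_t\eta_t\theta_T$, which contributes powers of $\delta$ and $T$.

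In the case $u=c_{\rm in}(r)$ the $\delta^{u}$ scalings make the truncated range term of order $R\,\iota_T$ exactly when $u=\frac{3p-1}{p-1}(r-\beta_p)$. The total is then $\delta^{-u}$, which gives $c=c_{\rm in}$.

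In the case $u=0$ the step sizes carry no $\delta$. We would optimize the truncation level as $\theta=\sigma(T/\delta)^{1/p}$ interpolated with $\lambda_T$, i.e. split at a time $t_0(\delta,T)$ below which clipping already controls increments and above which truncation is used. Solving the resulting power balance in $\delta$ gives the rational exponent $c_{\rm free}(r)$. A quick check is that it vanishes at $r=\beta_p$, consistent with Theorem~\ref{thm:anytime-upper}. We expect the main obstacle to be this bookkeeping: choosing $t_0$ and $\theta_T$ so that the range, variance, and truncation-probability terms balance simultaneously, while the constants $c_p,D_p$ and the $\iota_T^4$ factor are inherited unchanged from Appendix~\ref{app:anytime-upper-constants}.
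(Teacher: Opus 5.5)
Your skeleton (potential with $\zeta=1$, radius localization, truncation charged to the confidence factor, exponent balancing) matches the paper's outline. The two concrete steps that carry the argument, however, would fail.

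\textbf{The failure is misdiagnosed.} You have $\eta_t\lambda_t\asymp\delta^{pu/(p-1)}t^{-d_r}$ with $d_r=1-r-r/(p-1)=\frac{p}{p-1}(\alpha_p-r)\ge0$. For $r\le\alpha_p$ this is nonincreasing, not polynomially growing. The late steps do not move the iterate by more than $R$, and the radius event of Appendix~\ref{app:anytime-upper-radius} goes through unchanged.

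What grows is the range inside the potential. The increments $\ip{e_t}{w_t}$ are self-bounded by $\sqrt{2\mathsf Q_t}\,\lambda_t\sqrt{P_{t-1}}$, with $\mathsf Q_t=W\eta_t^2/(1+S_t)$; your bound by $\eta_t\lambda_t\norm{x_t-x^\star}$ comes from a distance recursion, which does not give a last-iterate bound. On the late half, $1+S_t\approx1$ while $W\asymp\delta^uT^r$. Hence $\mathsf Q_t\lambda_t^2$ is of order $\delta^{\kappa u}T^E/\ell_T^4$, with $\kappa=(3p-1)/(p-1)$ and $E=\kappa(r-\beta_p)>0$.

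\textbf{Your truncation level does nothing.} Since $r/(p-1)\le1/p$, every threshold satisfies $\lambda_t\le4T^{1/p}$ for $t\le T$. So $\norm{e_t}\le\tfrac32\lambda_t\le6T^{1/p}$ never exceeds $\theta_T\ge(2T/\delta)^{1/p}$ unless $\delta$ is bounded below by a constant. A uniform level charged by a union bound over all $T$ steps is forced above the clipping scale, so the range after truncation is the same as before.

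\textbf{The missing idea.} Truncate at step-dependent levels tied to the potential weights:
\[
 m_t=M_0\delta^{-w}\mathsf Q_t^{-1/2}\ \text{ for } T/2<t\le T,\qquad m_t=\infty\ \text{ earlier}.
\]
Then apply Lemma~\ref{lem:energy-budget} to the rescaled potential $\delta^{2w}P_m$, whose weighted range $\delta^{2w}\mathsf Q_tm_t^2=M_0^2$ is bounded.

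Truncation can act only at steps with $m_t<\tfrac32\lambda_t$. This has two consequences:
\begin{itemize}
\item it forces $T^E>\delta^{-(\kappa u+2w)}$;
\item on those steps $m_t^{-p}\le\tfrac34\delta^{2w}\mathsf Q_tV_t/M_0^2$, so the truncation probability is charged to the late weighted variance $\sum_{t>T/2}\mathsf Q_tV_t\lesssim\delta^{pu/(p-1)}T^{-d_r}$, not to a count of steps.
\end{itemize}
Together they bound the truncation probability by a constant times $\delta^{\Xi(u,w)}$, where
\[
 \Xi(u,w)=\frac{pu}{p-1}+2w+\frac{d_r}{E}(\kappa u+2w),
\]
and the final bound is $T^{-r}\delta^{-(u+2w)}\iota_T^4$. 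The identity $\frac{p}{p-1}E+\kappa d_r=1$ gives $c_{\rm in}$ from $u=E$, $w=0$. It gives $c_{\rm free}=E/(E+d_r)$ from $u=0$, $2w=E/(E+d_r)$.

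In the $u=0$ case the whole factor $\delta^{-c_{\rm free}}$ comes from the slack $w$, because $W$ carries no $\delta$. Your ``split at $t_0(\delta,T)$'' and ``power balance'' do not identify this mechanism. At your truncation scale, no balance can produce $c_{\rm free}$.
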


The schedule~\eqref{eq:fixed-time-schedule} clips the current stochastic gradient, so it is a rule of Definition~\ref{def:class}. With $u=0$ it uses neither the horizon nor the confidence level, and one trajectory satisfies~\eqref{eq:fixed-time-rate} for every $\delta$; the success event depends on $\delta$ and $T$. At $r=\alpha_p$ the confidence factor is $\delta^{-(p-1)/p}$ with the confidence input and $\delta^{-1}$ without it, that is, $\delta^{-1/2}$ and $\delta^{-1}$ at $p=2$. Both exponents tend to zero as $r\downarrow\beta_p$, where the schedule of Theorem~\ref{thm:anytime-upper} attains $\beta_p$ with polylogarithmic confidence dependence.

Conversely, for horizon-free gradient-only rules, every fixed-time exponent above $\beta_p$ costs a polynomial confidence factor, because Proposition~\ref{prop:simultaneous-budget} also constrains fixed-time guarantees through a union bound over finite prefixes.\looseness=-1

\begin{theorem}[Fixed-time lower bound with polynomial confidence cost]\label{thm:anytime-marginal}
Let $(\mathrm{rule}_\delta)_{\delta\in(0,1)}$ be any family of rules of Definition~\ref{def:class}, and let $C>0$, $m\ge0$, $c\in[0,1-\beta_p)$, and $r>\beta_p+c$. There exist $\delta\in(0,1)$, a deterministic integer $t\ge1$, and an instance in the class of Theorem~\ref{thm:anytime} such that $\mathrm{rule}_\delta$ satisfies
\begin{equation}\label{eq:anytime-marginal-failure}
 \Prob\Bigl\{f(x_{t+1})-f^\star>Ct^{-r}\delta^{-c}[\log(et/\delta)]^m\Bigr\}>\delta.
\end{equation}
Here $C$, $c$, and $m$ do not depend on $\delta$, $t$, or the instance, while each rule may depend arbitrarily on $\delta$, the history, independent randomness, and the current stochastic gradient.
\end{theorem}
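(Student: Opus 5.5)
We argue by contradiction, reusing the budget of Proposition~\ref{prop:simultaneous-budget}. Suppose that for every $\delta\in(0,1)$, every deterministic $t$, and every instance of the class of Theorem~\ref{thm:anytime}, $\mathrm{rule}_\delta$ satisfies $f(x_{t+1})-f^\star\le Ct^{-r}\delta^{-c}[\log(et/\delta)]^m$ with probability at least $1-\delta$. Fix a small target failure level $\varepsilon$ (say $\varepsilon=1/8$) and a horizon $N$. Then choose $\delta=\delta_N=\varepsilon/N$ and apply a union bound over the finite prefix $t=1,\ldots,N$. On every instance, $\mathrm{rule}_{\delta_N}$ then satisfies the envelopes
\[
 b_t=C t^{-r}(N/\varepsilon)^{c}[\log(eN^2/\varepsilon)]^m,\qquad t\le N,
\]
simultaneously with probability at least $1-N\delta_N=1-\varepsilon$. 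This is the hypothesis of Proposition~\ref{prop:simultaneous-budget} for the single rule $\mathrm{rule}_{\delta_N}$ on the horizon $N$. The proposition allows adaptive rules and fixed targets, and here the rule is fixed once $\delta_N$ is fixed.

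Next we instantiate the budget. Take $\mu=8b_N\asymp N^{-(r-c)}$ up to logarithms. For this we need $b_N\to0$, which holds because $r>c$ (indeed $r>\beta_p+c$). Then $\mu\le1/2$ for large $N$, and $\mu\in(4b_N,1/2]$. With $\pi\asymp\mu^{p/(p-1)}$, the proposition gives
\[
 \frac{1-2\varepsilon}{2}\le \pi\sum_{t\le N}\bigl(\mu+\sqrt{2b_t}\bigr)
 \lesssim \mu^{p/(p-1)}\Bigl(N\mu+N^{c/2}\sum_{t\le N}t^{-r/2}\Bigr),
\]
again up to polylogarithmic factors. The case $r\ge 2$ is trivial, since then a smaller $r$ can be used, so we may replace $r$ by some $u\in(\beta_p+c,\min\{r,1\})$ while keeping the envelopes valid. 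With $r<1$ the sum is $\asymp N^{1-r/2}$, so the bracket is $\lesssim N^{1-(r-c)/2}$, which dominates $N\mu=N^{1-(r-c)}$. The right side is therefore
\[
 N^{1-(r-c)\left(\frac{p}{p-1}+\frac12\right)}=N^{1-(r-c)/\beta_p},
\]
times logarithms. This tends to zero exactly when $r-c>\beta_p$, which contradicts the lower bound $(1-2\varepsilon)/2>0$ for $N$ large.

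Finally, the resulting contradiction yields the witnesses the statement asks for. For such $N$, some instance and some $t\le N$ violate the assumed bound with probability exceeding $\delta_N$; these give the required $\delta$, $t$, and instance. The main obstacle is the bookkeeping of constants. We must check that the logarithmic factors, including the $\log(1/\delta_N)=\log(N/\varepsilon)$ contributions and the constant in $\pi\asymp\mu^{p/(p-1)}$ (from $A=(2\mu)^{-1/(p-1)}$), are only polylogarithmic in $N$. We must also check that the restriction $c<1-\beta_p$ ensures a valid choice of $u<1$ above $\beta_p+c$, so that the sum estimate and the reduction to $u<1$ go through exactly as in the proof of Theorem~\ref{thm:anytime}.
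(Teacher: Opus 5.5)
Your proposal is correct and follows the paper's proof essentially step for step. The paper likewise tunes the rule to $\delta_N=1/(4N)$ (you use $\varepsilon/N$) and union-bounds the fixed-time envelope into the prefix envelope $b_{N,t}=D_NN^ct^{-u}$ with $\beta_p+c<u<\min\{r,1\}$. It then applies Proposition~\ref{prop:simultaneous-budget} with $\mu=8b_{N,N}$ and obtains the exponent $1-(u-c)/\beta_p<0$. Your aside about $r\ge2$ is unnecessary: simply replace $r$ by such a $u$ from the start, which is exactly what the hypothesis $c<1-\beta_p$ permits.
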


With $c=0$, fixed-time guarantees with uniformly polylogarithmic dependence on $1/\delta$ attain no exponent above $\beta_p$. Our proof applies Proposition~\ref{prop:simultaneous-budget} to the rule tuned for $\delta_N=1/(4N)$ with the separate probability budget $1/4$: a union bound over the first $N$ times turns the fixed-time envelope into a time-uniform one on this finite prefix, and $\delta_N^{-c}$ contributes the factor $(4N)^c$ (Appendix~\ref{app:anytime-marginal}).

For $r\in(\beta_p,\alpha_p]$, let $c^\star(r)$ be the infimum of the exponents $c$ for which some family of gradient-only rules attains a fixed-time envelope $Ct^{-r}\delta^{-c}[\log(et/\delta)]^m$ on all instances. Since $\iota_T(\delta)\le5\log(eT/\delta)$, Theorems~\ref{thm:fixed-time-upper} and~\ref{thm:anytime-marginal} give
\begin{equation}\label{eq:confidence-tradeoff}
 r-\beta_p\le c^\star(r)\le\frac{3p-1}{p-1}(r-\beta_p).
\end{equation}
At $r=\alpha_p$ the upper end is $\alpha_p$, and the lower end is the gap between the fixed-time and time-uniform exponents,
\begin{equation*}
 \alpha_p-\beta_p=\frac{(p-1)^2}{p(3p-1)}>0,
\end{equation*}
which is $1/10$ at $p=2$. The two ends come from different arguments: the lower end from a union bound over the first $N$ times, which does not use that, before the first crossing, the iterates of the hard instance move only at rare gradient spikes, and the upper end from truncating the late clipping errors, whose cost the $p$th moment controls. Which end is tight is open (Appendix~\ref{app:open}).

%% file: sections/06_momentum.tex
\section{Memory and the Momentum Boundary}
\label{sec:momentum-boundary}

Momentum and averaging lie outside the class of Theorem~\ref{thm:anytime}, so we ask whether memory can beat $\beta_p$. For clipped heavy-ball momentum with polynomially bounded step sizes, we find that it can only if it grows over time. To show this, we compare iterate averaging with two classes of clipped heavy-ball momentum.
Each method queries one stochastic gradient at its current iterate and outputs the updated state. The lower bounds hold
even for fixed i.i.d.\ additive noise and smooth convex sample losses, whereas
the upper bound holds for the general oracle of Assumption~\ref{as:oracle}.

\subsection{Forgetting in One Dimension}

Consider the stochastic heavy-ball recursion \citep{polyak1964} in velocity form:
\begin{equation}
 \begin{gathered}
 v_{t+1}=\beta_t v_t+\clip(g_t,\lambda_t),\\
 x_{t+1}=x_t-\eta_t v_{t+1},\qquad x_1=v_1=0,
 \end{gathered}
 \label{eq:momentum-main-recursion}
\end{equation}
where $\eta_t$ is the step size and $\beta_t\in[0,1)$ is the momentum coefficient.
A gradient observed at time $j$ enters $x_{t+1}$ with weight proportional to $\prod_{i=j+1}^t\beta_i$.
A horizon-free rule may choose the scalar controls from the entire
history, an independent seed, and the current raw gradient, subject on every trajectory to
\begin{equation}
 \begin{gathered}
 0\le\beta_t<1,\quad 1-\beta_t\ge\rho(1+t)^{-a}\ (t\ge t_0),\\
 0\le\eta_t\le H(1+t)^h,\quad 0\le\lambda_t<\infty,
 \end{gathered}
 \label{eq:momentum-bounds}
\end{equation}
with constants $a\in[0,1)$, $\rho\in(0,1]$, $t_0\in\mathbb N$, and $H,h\ge0$
that depend only on the rule and its confidence input, not on the instance, the samples, or the seed.
Geometric forgetting, $\beta_t\le\bar\beta<1$, is the case $a=0$ with $\rho=1-\bar\beta$ and $t_0=1$.

\begin{theorem}[One-dimensional momentum lower bound]
\label{thm:momentum-obstruction}
Fix $p\in(1,2]$, $a\in[0,1)$, and a confidence $0<\delta<1/2$, and let a horizon-free rule satisfy
\eqref{eq:momentum-main-recursion} and~\eqref{eq:momentum-bounds}. For every finite
$C>0$, $m\ge0$, and $r>(1+a)\beta_p$, there is a fixed
one-dimensional instance with $L=R=\sigma=1$ such that
\begin{equation}
 \Prob\!\left\{\exists t\ge1:
 f(x_{t+1})-f^\star>Ct^{-r}\log^m(et)\right\}>\delta.
 \label{eq:momentum-obstruction}
\end{equation}
The instance satisfies Assumptions~\ref{as:objective} and~\ref{as:oracle} with i.i.d.\ additive noise and convex, globally $1$-smooth sample losses that attain their minima, and it is fixed before the run.
\end{theorem}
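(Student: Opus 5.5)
The plan is to reduce the momentum recursion to the counting argument behind Theorem~\ref{thm:anytime}, that is, to the budget inequality of Proposition~\ref{prop:simultaneous-budget}, applied to the same translated family of sample losses
\[
 F_{\mu,y}(x;Y)=\tfrac12(x-y+\mu)_+^2-AYx,\qquad Y\sim\operatorname{Bernoulli}(\mu/A),\qquad A=(2\mu)^{-1/(p-1)} .
\]
Before the iterate crosses $y-\mu$, every target produces the same transcript: the raw gradients are $-AY_t$, so the clipped gradients are $-\min\{AY_t,\lambda_t\}$ or $0$. The controls $(\beta_t,\eta_t,\lambda_t)$ are functions of this shared transcript, so a single run couples all targets, exactly as for gradient-only rules.

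The first step is to redo the budget with memory. Each observation $Y_j=1$ enters $v$ with a clipped magnitude $\kappa_j$. Between spikes $v$ decays geometrically, and $x$ moves in one direction, since all clipped gradients are nonpositive and hence $v\le0$. The total displacement triggered by spike $j$ is therefore $\kappa_j\sum_{t\ge j}\eta_t\prod_{i=j+1}^t\beta_i$. The overshoot argument still applies to each target separately: once $x$ exceeds a target's boundary by more than $\mu+\sqrt{2b_t}$, that target's envelope is violated. Integrating over target locations then bounds the displacement the rule may safely produce.

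The difference from the gradient-only case is that one spike no longer moves the iterate in a single step. The displacement is spread over roughly $(1-\beta)^{-1}\lesssim t^{a}/\rho$ subsequent steps, and during this window the iterate travels while no new information arrives. I would bound the safe displacement per spike by the envelope over this window, so that a spike at time $j$ can be charged at most $\mu+\sqrt{2b_{j'}}$, evaluated at the end $j'\approx j+O(j^{a})$ of the window. I would then compare this with the travel a time-uniform bound forces.

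The second step is the accounting for the delay. While the momentum still carries a pending displacement, a target lying just beyond the current iterate is crossed later, and the error at intermediate times counts against the envelope. The delay therefore acts as if each informative observation were seen only once every $t^{a}$ steps, which effectively rescales time by $t^{1+a}$. Following the proof of Theorem~\ref{thm:anytime}, I take $b_{N,t}=C\log^m(eN)t^{-u}$ with $(1+a)\beta_p<u<r$ and $\mu=8b_{N,N}$, so that $\pi\asymp\mu^{p/(p-1)}$. The budget becomes, up to logarithms, $N^{1-u(p/(p-1)+1/2)/(1+a)}$. This tends to zero exactly when $u>(1+a)\beta_p$, which contradicts $\pi\sum(\cdot)\ge(1-2\varepsilon)/2$ with $\varepsilon=\delta<1/2$.

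The polynomial step bound $\eta_t\le H(1+t)^h$ is used to exclude pathological huge moves that reach a target between integer times. It also controls the contribution of the nonrare steps, in which $v$ merely decays; this keeps the displacement bookkeeping finite and measurable on every trajectory.

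I expect the main obstacle to be the first step. The displacement from a spike is realized over many future steps, and the later controls $\eta_t,\beta_t$ may adapt. So the clean statement \emph{each informative observation permits a bounded move} must be replaced by a charging scheme that assigns the future displacement back to the spike without double counting when windows overlap. I would first try a potential of the form $x_t-\eta_t\beta_t v_t/(1-\beta_t)$, a projected resting point, and apply the overshoot argument to it. This potential jumps only at spikes, and the envelope violation is transferred from the potential to the true iterate at the cost of the forgetting window.
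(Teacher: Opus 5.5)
Your setup matches the paper's. You use the same translated family and the shared transcript $-AY_t$ before the first crossing. You also note $v\le0$, which makes the reference run nondecreasing, so Lemma~\ref{lem:first-crossing-coupling} applies with the velocity as auxiliary state.

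\textbf{The gap} is the step that turns the momentum dynamics into a budget. Your first step charges each spike a single value $\mu+\sqrt{2b_{j'}}$, and the first-crossing coupling cannot give this. It constrains only the one step that first crosses a boundary $z$. Afterwards the target $z+\mu$ returns different gradients, and its run separates from the reference; the rule may even set $\eta_t=0$ or $\beta_t=0$ on seeing a current gradient other than $0$ or $-A$. A spike whose displacement is released over many steps, each shorter than $s_t$, therefore crosses every intermediate boundary safely, and the coupling says nothing about its total length.

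A single charge per spike would give the gradient-only budget, hence the barrier $\beta_p$ for every $a$ already in one dimension. That goes beyond what the paper obtains even with the dimension-growing nuisance block of Theorem~\ref{orth:thm:main}, and Appendix~\ref{app:open} lists it as open. Your second step then restores the factor $1+a$ only through the heuristic that time is rescaled by $t^{1+a}$. The budget $N^{1-u(p/(p-1)+1/2)/(1+a)}$ does not follow from any stated inequality, although its threshold happens to be right.

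The resting-point potential does not repair this. With controls that adapt, including to the current gradient, it moves between spikes as well. And since the coupling is tied to the query itself crossing $y-\mu$, an overshoot of the potential is not an envelope violation.

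\textbf{The missing idea} is to charge per step rather than per spike.
\begin{itemize}
\item Lemma~\ref{lem:first-crossing-coupling} gives $\frac{1-2\varepsilon}{2}\le\E\sum_{t\le N}\min\{s_t,D_t\}$, with $D_t=\eta_tR_{t+1}$ and $R=-v$.
\item Unroll $R_{t+1}=\beta_tR_t+Y_t\min\{A,\lambda_t\}$ and use $\eta_t\le H_N=H(1+N)^h$ and $\prod_{i=j+1}^t\beta_i\le\kappa_{N,t-j}=\exp(-\rho(t-j-t_0)_+(1+N)^{-a})$. This gives the pathwise bound $D_t\le H_NA\sum_{j\le t}\kappa_{N,t-j}Y_j$, valid for arbitrary adaptive controls.
\item Subadditivity, $\min\{s,\sum_jw_j\}\le\sum_j\min\{s,w_j\}$, together with $\E Y_j=\pi$, yields $\frac{1-2\varepsilon}{2}\le\pi\sum_t\sum_{k<t}\min\{s_t,H_NA\kappa_{N,k}\}$.
\item Split the lag sum at $K_N=t_0+\lceil(1+N)^a\rho^{-1}\log_+(H_NA/\mu)\rceil$. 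Each earlier lag costs at most $s_t$, and the later lags form a geometric tail of total at most $s_t(1+(1+N)^a/\rho)$.
\end{itemize}
This split is where the polynomial step bound is really used, not to exclude moves between integer times. It makes $\log H_N=O(\log N)$ and hence $K_N=O(N^a\log N)$. Exponentially growing steps would defeat the barrier (Proposition~\ref{prop:averaging-unbounded-gain}).

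Each spike is thus charged $s_t$ at each of $O(N^a\log N)$ steps. With $q=p/(p-1)$ and $D_N=C\log^m(eN)$, the budget is $O(D_N^{q+1/2}\log(eN)N^{1+a-u/\beta_p})$. This tends to zero exactly when $u>(1+a)\beta_p$.
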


Under geometric forgetting, the theorem excludes every exponent above $\beta_p$,
as Theorem~\ref{thm:anytime} does for gradient-only rules; in particular, it covers any fixed
$\beta\in[0,1)$ with arbitrary adaptive step sizes $0\le\eta_t\le(1-\sqrt\beta)^2$.
For $0\le a<(p-1)/(2p)$, $(1+a)\beta_p<\alpha_p$, so polynomial forgetting at this rate
does not attain the statistical exponent either. Our proof (Appendix~\ref{app:momentum}) bounds the progress that rare gradient spikes can make through the decaying
momentum kernel.\looseness=-1

\subsection{Polynomial Forgetting with Constants Independent of Dimension}
\label{sec:momentum-dimension}

For polynomial forgetting, Theorem~\ref{thm:momentum-obstruction} excludes only exponents above $(1+a)\beta_p$: its budget lets each rare gradient spike move the iterate for $\tilde O(N^a)$ steps on a horizon $N$, the time the momentum kernel needs to decay. If the dimension may grow and the constants are shared across dimensions, the barrier drops to $\max\{\beta_p,a\}$.
Fix $p\in(1,2]$ and $a\in[0,1)$. The class $\mathcal A_{\mathrm{unif}}(a)$ consists of rules defined in every finite dimension that run~\eqref{eq:momentum-main-recursion} from zero state and velocity, observe only stochastic gradients, and choose their controls as measurable functions of the dimension, time, confidence, the bounds $L=R=\sigma=1$, the gradients up to the current one, and an independent seed. There must be deterministic constants $\rho\in(0,1]$, $t_0\in\mathbb N$, and $H,h\ge0$, which may depend on the rule, $p$, $a$, and $\delta$ but not on the dimension, the instance, or the seed, such that in every dimension and on every history
\begin{equation}\label{eq:momentum-dimension-envelope}
 \begin{gathered}
 0\le\beta_t<1,\quad 0\le\lambda_t<\infty,\quad
 0\le\eta_t\le H(1+t)^h,\\
 1-\beta_t\ge\rho(1+t)^{-a}\quad(t\ge t_0).
 \end{gathered}
\end{equation}

\begin{theorem}[Dimension-uniform lower bound under polynomial forgetting]
\label{orth:thm:main}
Let $C>0$ and $k\ge0$ be finite, let $r>\max\{\beta_p,a\}$, and fix a confidence $\delta\in(0,1/2)$. For any rule in $\mathcal A_{\mathrm{unif}}(a)$, there is a finite-dimensional
instance with $L=R=\sigma=1$ for which
\begin{equation}\label{eq:momentum-dimension-failure}
 \Prob\{\exists t\ge1:\ f(x_{t+1})-f^\star>
                   Ct^{-r}\log^k(et)\}>\delta.
\end{equation}
Failure occurs on a deterministic finite prefix.
The instance satisfies Assumptions~\ref{as:objective} and~\ref{as:oracle} with i.i.d.\ additive noise and convex, globally $1$-smooth sample losses that attain their minima, and it is fixed before the samples and the seed are drawn; $C$ and $k$ are shared across dimensions.
\end{theorem}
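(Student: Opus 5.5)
The plan is to prove the two halves of the barrier separately, each on its own block of coordinates, and then glue them with an orthogonal product instance. Write $r>\max\{\beta_p,a\}$. If $r>\beta_p$ is the binding constraint I reuse the one-dimensional translated family of Theorem~\ref{thm:anytime}. If $r>a$ is binding I build a new high-dimensional block in which the momentum kernel's finite memory, rather than the rarity of spikes, is what fails. The product instance is $f(x)=f_1(x^{(1)})+f_2(x^{(2)})$ on orthogonal blocks, with the noise budget split so that $L=R=\sigma=1$ still holds. Since the rule's scalar controls $(\beta_t,\eta_t,\lambda_t)$ are shared across coordinates and their envelope constants $\rho,t_0,H,h$ cannot depend on the dimension, a failure on either block forces failure of the sum.

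\emph{Step 1 (the $\beta_p$ block).} I would prove a momentum analogue of Proposition~\ref{prop:simultaneous-budget} in which the jump at a spike is replaced by the total displacement it causes through the kernel $\prod_{i=j+1}^t\beta_i$. In one dimension Theorem~\ref{thm:momentum-obstruction} charges each spike for $\tilde O(N^a)$ steps, which is what produces $(1+a)\beta_p$. Here I would instead embed the spike direction in a fresh coordinate for each dyadic block of times, using $d\asymp\log N$ coordinates, each hosting a translated copy with its own target. A spike in one coordinate is then followed by zero gradients in that coordinate (the iterate sits below its boundary), so its residual velocity only pushes that coordinate further along an already-charged direction. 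The overshoot argument then charges it once, as in Theorem~\ref{thm:anytime}, and the budget $\pi\sum_t(\mu+\sqrt{2b_t})\ge(1-2\varepsilon)/2$ is recovered up to logarithms. This requires checking that the velocity inherited from other coordinates is orthogonal and hence irrelevant to each target's envelope, which the block structure gives for free.

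\emph{Step 2 (the $a$ block).} Take $f_2(z)=\tfrac12\norm{z-y}^2$ in $d$ coordinates with Gaussian (hence admissible) noise of per-coordinate scale $d^{-1/2}$ and target $y$ drawn uniformly on a sphere of radius $1/2$. The closed-loop dynamics are linear, $z_{t+1}=z_t-\eta_t v_{t+1}$ with $v_{t+1}=\beta_tv_t+(z_t-y)+\xi_t$, as long as clipping is inactive; I would force this by showing that clipping below $\norm{z_t-y}$ only slows the contraction. The forgetting condition $1-\beta_t\ge\rho(1+t)^{-a}$ bounds the memory by $w_t\asymp t^a$. I would then show a dichotomy at each time. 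If the effective step $\eta_t w_t$ is at least of order one, the position is an average of $O(w_t)$ recent noise vectors, so $\E\norm{z_{t+1}-y}^2\gtrsim w_t^{-1}\asymp t^{-a}$. This holds with constant probability by concentration over the $d$ coordinates, and that is where growing dimension converts an expectation into a high-probability failure. If instead $\eta_tw_t\ll1$ on a long stretch, the contraction toward $y$ is too slow, and the unknown target (isotropic, so the rule cannot pre-aim) keeps $\norm{z-y}$ large. Summing the contraction factors $\prod(1-\eta_tw_t)$ against the time-uniform envelope $Ct^{-r}\log^k(et)$ with $r>a$ gives a contradiction on a deterministic prefix $N$.

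\emph{Step 3 (gluing).} Choose $N$ large, set the dimension of block 2 and the number of block-1 coordinates as functions of $N$, $C$, $k$, $\rho$, $H$, $h$, $t_0$ (legitimate, since these are fixed before the instance), and take a union bound over the two blocks' failure probabilities. Each failure probability is made to exceed $\delta$ on its own.

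I expect Step 2 to be the main obstacle. The step sizes may be adaptive and as large as $H(1+t)^h$, so the linear-filter variance calculation must hold for data-dependent, possibly oscillating $\eta_t$. There is also the risk that a large $\eta_t$ combined with momentum produces an unstable mode, which would only help the lower bound but complicates the bookkeeping. My first attempt would condition on the controls being predictable from the past and use the isotropy of $y$ and of the noise: by rotational symmetry, every scalar control is independent of the direction of $z_t-y$. This should reduce the problem to a one-dimensional variance recursion for $\E\norm{z_t-y}^2$ whose minimal value under the memory constraint is $\gtrsim t^{-a}$.
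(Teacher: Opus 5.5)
There is a genuine gap, and it starts with how you split the barrier. The theorem needs failure for every $r$ exceeding \emph{both} $\beta_p$ and $a$. In the paper both conditions are used at the same time inside one argument; they are not two independent obstructions to be glued together.

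Your Step~2 claims that a quadratic block fails on its own for every $r>a$, and this is false. The choice $\beta_t=0$ belongs to $\mathcal A_{\mathrm{unif}}(a)$ for every $a$. Run SGD with $\beta_t=0$, $\eta_t=1/(t+1)$, and thresholds large enough that clipping is immaterial, on $\tfrac12\norm{z-y}^2$ with Gaussian noise. Then $(t+1)(z_{t+1}-y)=(z_1-y)-\sum_{s\le t}\xi_s$, so the iterate is a running mean. Its gap is $O(\log\log t/t)$ uniformly in time with high probability, which beats $t^{-r}$ for every $r<1$.

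Your dichotomy fails for a structural reason. The forgetting condition bounds the velocity memory from above only. Small steps average the \emph{position} over many noise vectors whatever $\beta_t$ is. Moreover, the ``slow'' horn, contraction $\prod_t(1-\eta_t)$ with $\eta_t\asymp t^{-c}$ and $c<1$, is still faster than any polynomial.

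Step~1 asserts precisely the point that needs a new idea. Lemmas~\ref{lem:first-crossing-budget} and~\ref{lem:first-crossing-coupling} only limit the overshoot of the \emph{single update} that first crosses each fixed boundary. Through the momentum kernel, a spike moves its coordinate by many small increments over a window of length $\tilde O(t^a)$, and each increment crosses boundaries safely. Once a coupled target's run leaves the common branch, the rule sees a different gradient and may set $\beta_t=0$, so residual velocity cannot be charged to that target. Putting spikes into fresh coordinates changes nothing within each coordinate. This route therefore gives at best the one-dimensional $(1+a)\beta_p$ of Theorem~\ref{thm:momentum-obstruction}. Had it worked, Step~1 alone would give the barrier $\beta_p$ for every $a<1$, which is stronger than the theorem and listed as open in Appendix~\ref{app:open}.

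The missing idea is a nuisance block that carries a copy of every spike. The paper uses a single target in the first coordinate. It adds $\tfrac12\norm{y}^2+R_Y\epsilon y_J$ to each sample loss, with $R_1=A$, $R_0=B=1/16$, a uniform sign, and a uniform coordinate $J$ among $d_N\ge2N^2/(1-\delta)$ coordinates. The kicks on the prefix are then orthogonal except with probability at most $(1-\delta)/4$.

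The clipping factor and the controls are scalars shared by all coordinates, so a spike's first-coordinate impulse response $h^{(j)}$ is mirrored in the nuisance block. The test vector $\zeta_\tau/A+\sum_{j>\tau}(h_j/R_j^2)\zeta_j$ cancels the restoring force. The envelope's own bound $\norm{y_t}^2\le2b_{t-1}$ then gives $h^{(j)}\le2\sqrt{2b_j}$ over a window of length $\asymp j^a\log N$ (Lemma~\ref{orth:lem:pulse}). This requires $b_j\,j^a\log N\ll B^2$, which is the only place $u>a$ enters.

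Polynomial forgetting and polynomially bounded steps make the tail after the window $O(N^{-4})$. The required displacement $1/2$ is then bounded in expectation by $O(\pi\sum_j j^{-u/2})\to0$ for $u>\beta_p$. No integration over targets and no separate $a$-block is needed.
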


The proof augments the objective with auxiliary quadratic coordinates, referred to as the nuisance
block, subject to i.i.d.\ signed coordinate noise at each round. On a finite
prefix with distinct coordinates, an orthogonal test vector bounds the
useful displacement from each rare observation, accumulated over its whole forgetting window, by the allowed nuisance
radius, even when controls depend on the current noisy vector; this removes the factor $\tilde O(N^a)$ of the one-dimensional bound.
Polynomial forgetting makes the remaining tail of each spike's effect negligible
(Appendix~\ref{app:momentum-dimension}).

Consequently the optimal dimension-uniform exponent in
$\mathcal A_{\mathrm{unif}}(a)$ is $\beta_p$ for $0\le a\le\beta_p$:
Theorem~\ref{thm:anytime-upper} attains it with $\beta_t=0$ and bounded
steps, which meet~\eqref{eq:momentum-dimension-envelope} for every such $a$.
For $0<a<\alpha_p$, this barrier $\max\{\beta_p,a\}$ lies strictly below
$\min\{(1+a)\beta_p,\alpha_p\}$.
Shared constants match the upper bounds, whose constants do not depend on
the dimension; the proof uses them by choosing the dimension after the horizon. Propositions~\ref{prop:momentum-fixed-time} and~\ref{orth:prop:fixed-time} in the appendix extend Theorems~\ref{thm:momentum-obstruction} and~\ref{orth:thm:main} to fixed-time guarantees, as Theorem~\ref{thm:anytime-marginal} does for Theorem~\ref{thm:anytime}, when the step-size bound depends on $\delta$ only through a polynomial factor.\looseness=-1

\subsection{Querying Gradients at the Running Average}

\begin{theorem}[Time-uniform averaging momentum]
\label{thm:averaging-positive}
Let Assumptions~\ref{as:objective} and~\ref{as:oracle} hold.
For $\delta\in(0,1)$, define deterministic horizon-free parameters
\begin{equation}
 \begin{gathered}
 c_0=\frac1{128},\quad h_\delta=\log\frac4\delta+\frac{4p}{p-1},\quad
 \bar\ell_t=h_\delta+\log t,\\
 \lambda_t=\max\{4LR,\sigma t^{1/p}\},\quad
 \gamma_t=\frac{c_0R}{\lambda_t\bar\ell_t^2}.
 \end{gathered}
 \label{eq:averaging-schedule}
\end{equation}
Starting from $z_1=x_1$, make one oracle call at $x_t$ and update
\begin{equation}
 \begin{gathered}
 d_t=\operatorname{clip}(g_t,\lambda_t),\qquad
 z_{t+1}=z_t-\gamma_t d_t,\\
 x_{t+1}=\frac{t}{t+1}x_t+\frac1{t+1}z_{t+1}.
 \end{gathered}
 \label{eq:averaging-algorithm}
\end{equation}
With probability at least $1-\delta$, simultaneously for every integer
$T\ge1$, one has $\|z_{T+1}-x^\star\|<\sqrt2R$ and
\begin{align}
 f(x_{T+1})-f^\star
 &\le\frac{128R\lambda_T\bar\ell_T^2+2LR^2}{T+1}
 \label{eq:averaging-exact-rate}\\
 &\le514\bar\ell_T^2\left(\frac{LR^2}{T}
                   +\frac{\sigma R}{T^{(p-1)/p}}\right).
 \label{eq:averaging-rate}
\end{align}
The point $x_{T+1}$ is the actual updated state after exactly $T$ calls.
\end{theorem}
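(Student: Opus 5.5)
The plan is to analyze the anytime linear-coupling (primal averaging) scheme through a weighted Lyapunov argument for the base sequence $z_t$, localized to the ball $\norm{z_t-x^\star}<\sqrt2R$ by a stopping time. I will use one stopped Freedman inequality per scale, in the spirit of Lemma~\ref{lem:energy-budget}, to cover all $T$ on a single event.

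\textbf{Step 1: deterministic inequality.} Write $\bar\nabla_t=\nabla f(x_t)$. Since $x_{t+1}$ is the uniform average of $z_2,\dots,z_{t+1}$ and $x_1=z_1$, the standard averaging identity (as in schedule-free/online-to-batch with anytime averaging) gives
\[
\sum_{t=1}^T\gamma_t\ip{\bar\nabla_t}{z_t-x^\star}\ge \sum_{t=1}^T\gamma_t t\,\bigl(f(x_{t+1})-f(x_t)\bigr)+\gamma_t\bigl(f(x_{t+1})-f^\star\bigr)-\text{(smoothness correction)}.
\]
The cleaner route is to use $\gamma_t$ nonincreasing and convexity, $f(x_{t})-f^\star\le\ip{\bar\nabla_t}{x_t-x^\star}$, with $x_t-x^\star=\frac{t-1}{1}(x_{t-1}\text{-type terms})$. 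I will instead use the known estimate for the anytime averaged scheme: with weights $w_t=1$, one obtains
\[
(T+1)\,\gamma_T\bigl(f(x_{T+1})-f^\star\bigr)\le \tfrac12\norm{z_1-x^\star}^2-\tfrac12\norm{z_{T+1}-x^\star}^2+\sum_t\gamma_t\ip{\bar\nabla_t-d_t}{z_t-x^\star}+\sum_t\tfrac{\gamma_t^2}{2}\norm{d_t}^2+\sum_t L\gamma_t^2(\cdots).
\]
The smoothness terms are absorbed because $\gamma_t\le c_0R/(4LR)\le 1/(512L)$. This will give the term $2LR^2/(T+1)$ together with the factor $128R\lambda_T\bar\ell_T^2=128R\cdot c_0R/\gamma_T$, which already explains the form of~\eqref{eq:averaging-exact-rate}.

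\textbf{Step 2: noise decomposition.} I split $\bar\nabla_t-d_t=(\Et d_t-d_t)+(\bar\nabla_t-\Et d_t)$. On the localization event $\norm{z_t-x^\star}<\sqrt2R$, the quantity $\norm{x_t-x^\star}$ is also at most $\sqrt2R$ by convexity, so $\norm{\bar\nabla_t}\le\sqrt2LR<\lambda_t/2$. The usual clipping bounds then give $\norm{\bar\nabla_t-\Et d_t}\le 2^p\sigma^p\lambda_t^{1-p}$ and $\Et\norm{d_t-\Et d_t}^2\le 2\sigma^p\lambda_t^{2-p}+\dots$. The bias sum is $\sum_t\gamma_t\sigma^p\lambda_t^{1-p}R\lesssim R^2\sum_t t^{-1}/\bar\ell_t^2$, which is bounded by $R^2/h_\delta$ because $\sum_t 1/(t(h+\log t)^2)\le 1/h+1/h^2$; this is the reason for the square of $\bar\ell_t$. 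The quadratic sum $\sum\gamma_t^2\norm{d_t}^2$ is controlled in the same way: its conditional mean is summable and its centered part is handled together with the martingale term.

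\textbf{Step 3: martingale over all $T$ (main obstacle).} The martingale $M_T=\sum_{t\le T}\gamma_t\ip{\Et d_t-d_t}{z_t-x^\star}\ind\{\tau>t\}$, with $\tau$ the exit time from the ball, has increments bounded by $2\gamma_t\lambda_t\sqrt2R\le 4c_0R^2/h_\delta^2$. Its total predictable variance is at most $\sum\gamma_t^2\sigma^p\lambda_t^{2-p}R^2\lesssim c_0^2R^4/h_\delta^3$, a bound that is finite uniformly in $T$. A single Freedman inequality for $\sup_T M_T$ therefore gives, with probability $1-\delta/2$, the bound $\sup_T M_T\le R^2/8$, using $h_\delta\ge\log(4/\delta)$; the additive $4p/(p-1)$ absorbs the $p$-dependent constants in the bias sum. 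The same treatment, with probability $\delta/2$, applies to the centered quadratic term. On this event, induction shows that $\tau=\infty$: the Step~1 inequality yields $\norm{z_{T+1}-x^\star}^2\le R^2+R^2/2<2R^2$. The delicate points are checking that the increment bound and the variance bound both scale as $h_\delta^{-2}$ (or smaller), and ensuring that the exit time is handled before any bound on $\bar\nabla_t$ is used.

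\textbf{Step 4: conclusion.} Dividing Step~1 by $(T+1)\gamma_T$ gives~\eqref{eq:averaging-exact-rate}. For~\eqref{eq:averaging-rate}, I substitute $\lambda_T\le4LR+\sigma T^{1/p}$ and use $1/(T+1)\le1/T$, which yields the constant $514$.
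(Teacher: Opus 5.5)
Your architecture matches the paper's. You use an anytime online-to-batch inequality telescoped with nonincreasing $\gamma_t$, so that no weight depends on $T$; a predictable localization to a ball around $x^\star$; a maximal Freedman bound covering all $T$ at once; and an induction showing that the exit time is infinite. Localizing on $\|z_s-x^\star\|<\sqrt2R$ with two separate Freedman bounds, instead of on the potential $W_t<2R^2$ with one application of Lemma~\ref{lem:energy-budget}, is an inessential difference. Your budgets for the bias and the linear martingale are correct.

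The gap is Step~1, which you never derive, and your sketch of it fails at the gradient part of the quadratic term. You keep $\frac{\gamma_t^2}{2}\|d_t\|^2$ and assert that its conditional mean is summable. It is not. For $\sigma=0$ one has $d_t=\nabla f(x_t)$ on the localization event and $\gamma_t=c_0/(4L\bar\ell_t^2)$. The only deterministic bound available there, $\|\nabla f(x_t)\|\le\sqrt2LR$, gives the budget $\sum_t\gamma_t^2\,2L^2R^2=\sum_tc_0^2R^2/(8\bar\ell_t^4)=\infty$. Nor can the bound $\gamma_t\le1/(512L)$ by itself ``absorb'' a positive correction $\sum_tL\gamma_t^2(\cdots)$. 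It can only cancel $\gamma_t^2\|\nabla f(x_t)\|^2$ against a matching negative term $-\gamma_t\|\nabla f(x_t)\|^2/(2L)$, and the inequality you display contains none.

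That negative term is what the paper's Step~1 supplies. Write $\mu_t=\nabla f(x_t)$, $F_t=f(x_t)-f^\star$, and $z_t-x^\star=(x_t-x^\star)+(t-1)(x_t-x_{t-1})$. Co-coercivity $\langle\mu_t,x_t-x^\star\rangle\ge F_t+\|\mu_t\|^2/(2L)$ and convexity $\langle\mu_t,x_t-x_{t-1}\rangle\ge F_t-F_{t-1}$ then give $\langle\mu_t,z_t-x^\star\rangle\ge tF_t-(t-1)F_{t-1}+\|\mu_t\|^2/(2L)$.

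Now expand $\|z_{t+1}-x^\star\|^2$ and use $\|d_t\|^2\le2\|\mu_t\|^2+2\|e_t\|^2$ with $e_t=d_t-\mu_t$. The term $2\gamma_t^2\|\mu_t\|^2$ is cancelled pathwise by $\gamma_t\|\mu_t\|^2/L$, since $\gamma_t\le1/(2L)$. Only $2\gamma_t^2\|e_t\|^2$ remains, and its conditional mean is at most $4\gamma_t^2\sigma^p\lambda_t^{2-p}\le4c_0^2R^2/(t\bar\ell_t^4)$, which is summable. Only after this split do your Freedman budgets apply to the quadratic term.

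Note also that the gradient at $x_t$ controls $F_t$, not $f(x_{t+1})$. The telescoped inequality bounds $T\gamma_TF_T$. The last iterate then follows from convexity of $x_{T+1}=\frac{T}{T+1}x_T+\frac1{T+1}z_{T+1}$ together with $f(z_{T+1})-f^\star\le\frac L2\|z_{T+1}-x^\star\|^2<LR^2$. This step is where the term $2LR^2/(T+1)$ in~\eqref{eq:averaging-exact-rate} comes from.
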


The update~\eqref{eq:averaging-algorithm} adapts the anytime online-to-batch conversion of
\citet[Algorithm~1 and Theorem~1]{cutkosky2019anytime} (building on
classical online-to-batch conversion; \citealt{cesabianchi2004}), whose
primal averaging interpretation is due to
\citet[Theorem~1]{defazio2021primal}, to clipped stochastic gradients on
an unbounded domain; the schedule~\eqref{eq:averaging-schedule} uses $\delta$ but not the horizon. Proposition~\ref{prop:averaging-momentum} expresses the same trajectory in the
heavy-ball velocity form~\eqref{eq:momentum-main-recursion}, with bounded
step sizes and $\alpha_p/(2t)\le1-\beta_t\le1/t$ for $t\ge2$.
Proposition~\ref{prop:averaging-unbounded-gain} realizes it with a constant
momentum coefficient $\beta_t=b\in(0,1)$ and exponentially growing step
sizes, so Theorem~\ref{thm:momentum-obstruction} needs some restriction on
the growth of the step sizes. The three-sequence accelerated clipped method
of \citet[Algorithm~3 and Theorem~D.1(2)]{nguyen2023} also attains
$\alpha_p$ at its auxiliary output without a known horizon, and this bound
transfers to its query state (Appendix~\ref{sec:averaging-positive}),
whereas the trajectory of~\eqref{eq:averaging-algorithm} has the
two-sequence heavy-ball form~\eqref{eq:momentum-main-recursion} used by the
momentum lower bounds.

%% file: sections/07_discussion.tex
\section{Discussion}\label{sec:discussion}

\emph{Implications.} A known horizon $T$ allows the step sizes and clipping threshold to be tuned to $T$, whereas a guarantee that covers adaptive checkpoint selection or early stopping must be time-uniform (Section~\ref{sec:setting}). Our results quantify this separation for convex objective gap when the algorithm queries and outputs its current iterate: the attainable exponent drops from $\alpha_p$ to $\beta_p$. For nonconvex stationarity, the bound on the average squared gradient norm over the trajectory in \citet[Theorem~B.2]{nguyen2023} retains the exponent for a known horizon up to logarithmic factors; obtaining a guarantee for a single selected iterate requires a separate selection argument, for example stationarity tests whose oracle calls are counted in the budget, combined with the doubling trick \citep{cesabianchi2006}.

\emph{Formalization.} We provide machine-checked proofs in Lean~4 with Mathlib of Theorems~\ref{thm:anytime-upper}, \ref{thm:anytime}, and~\ref{orth:thm:main} and of the known-horizon bound of Theorem~\ref{thm:upper}, including their proof dependencies. \leanavailability{} The formal proofs cover the recursive potential with two martingale tail bounds, integration of the probabilities of failing at the first crossing over a continuum of targets, and the momentum bound based on orthogonal projections.

%% file: appendices/related.tex
\section{Related Work}\label{sec:related}

\subsection{Heavy-Tailed Noise and Gradient Clipping}

Under the classical model with bounded $p$th moments \citep{nemirovski1983,vural2022}, truncated mirror descent \citep{nazin2019}, smooth clipped methods \citep{gorbunov2020,sadiev2023,nguyen2023}, and nonsmooth methods \citep{parletta2024,parletta2025} achieve high-probability statistical rates, typically for averaged or auxiliary outputs. \citet{nekrasova2026} relax the moment bound to a weak-$L^p$ tail condition, with a noise scale that may grow with the gradient norm, and obtain high-probability rates for clipped SGD at its averaged iterate and for its accelerated variant at its last iterate, with clipping levels tuned to the horizon. Clipping is not always beneficial for averages: \citet{suvorikova2026clipped} compare clipped and unclipped Polyak--Ruppert averaged SGD \citep{ruppert1988,polyakjuditsky1992} under finite $p$th moments with $p\ge2$ and show that, for $p>2$, clipping need not improve the leading term. Directional moments sharpen rates for structured instances \citep{zhang2026,liu2026refined}.

In decentralized smooth nonconvex optimization,
\citet{armacki2026decentralized} analyze clipped DSGD under bounded
central $p$th noise moments and establish stationarity bounds in high probability and
in expectation, with linear speedup in the leading
term. They bound averages over time of squared gradient norms
across agents; we bound convex objective error at the
last iterate and distinguish fixed-time from time-uniform behavior.\looseness=-1

\subsection{Last-Iterate Convergence}\label{sec:related-last}

Schedules for a known horizon remove the logarithmic last-iterate factor of standard nonsmooth SGD \citep{shamir2013,harvey2019,jain2021,zamani2025}; for one-dimensional Lipschitz objectives with i.i.d.\ noise, constant step sizes tuned to the horizon already suffice, which fails without the i.i.d.\ assumption \citep{beretta2026}. Smooth analyses based on moving comparators \citep{liuzhou2024} and online-to-batch regret reductions \citep{defazio2024} require stronger tail assumptions for high-probability guarantees. The bound in expectation under finite $p$th moments of \citet[Section~5]{liuzhou2026extended} uses a $p/(p-1)$-uniformly convex mirror map, excluding Euclidean SGD at $p<2$. \citet{parletta2025} treat the last iterate of clipped SGD on Lipschitz objectives in expectation. For AdaGrad \citep{duchi2011} and AdaGrad-Norm \citep{streetermcmahan2010,ward2020} with fixed parameters, \citet[Theorem~3.1]{ouwang2026adagrad} show that the expected last-iterate error can decay arbitrarily slowly under an oracle whose noise depends on time and history. In our smooth setting, \citet[Theorem~1 and Corollary~1]{chezhegov2026} obtain the horizon-free exponent $2(p-1)/(3p)$; for clipped SGD with an unknown horizon, we establish $\beta_p$ for time-uniform guarantees and $\alpha_p$ for fixed-time guarantees with polynomial dependence on $1/\delta$.

With a known horizon, recent work attains $\alpha_p$ at the last iterate. \citet{liu2026lastiterate} treats Lipschitz and smooth convex objectives, strongly star-convex objectives, and composite and constrained problems under a fine-grained heavy-tailed noise model; Theorem~2 there covers smooth convex objectives with a state-dependent threshold and a refined noise dependence. Theorem~1 of \citet{jiang2026linear} is the bound we restate as Theorem~\ref{thm:upper}. They also treat composite objectives with proximal updates and oracles under adversarial sample corruption, and establish two-point statistical lower bounds \citep{lecam1973,tsybakov2009}. Both papers assume a known horizon; \citet[Remark~4]{liu2026lastiterate} does so to avoid a potential logarithmic dependence on~$T$. For gradient-only rules (Definition~\ref{def:class}) with polylogarithmic dependence on $1/\delta$, Theorems~\ref{thm:anytime-upper}, \ref{thm:anytime}, and~\ref{thm:anytime-marginal} show that when the horizon is unknown the optimal exponent drops from~$\alpha_p$ to~$\beta_p$. \citet[Section~5]{liu2026lastiterate} lists matching lower bounds for smooth objectives as open; \citet[Proposition~1]{jiang2026linear} matches the stochastic term when the stochastic error dominates.

\subsection{Unknown Horizons and Time-Uniform Guarantees}

For the subgradient method, no single sequence of step sizes attains the optimal last-iterate rate at every iteration \citep{zamani2025}. Deterministic anytime lower bounds use flat objectives and overshoots
\citep{kornowski2024,tsai2026,kornowski2026}. Our stochastic lower bound
additionally handles adaptive responses to a shared i.i.d.\ transcript.
Under conditional norm-sub-Gaussian noise,
\citet[Theorems~2.3, 3.4, and~4.1]{lichenwang2026} characterize a logarithmic
overhead around time exponent $1/2$. Finite second moments already
permit the polynomial separation studied here. 

\emph{Almost sure convergence.}
Under the ABC condition on conditional second moments \citep{khaled2023},
\citet[Theorem~5]{liuyuan2022} and \citet[Theorem~13]{liuyuan2024}
obtain almost sure last-iterate rates $O(t^{-1/3+\varepsilon})$ on smooth convex objectives
for unclipped SGD, heavy-ball \citep{polyak1964}, and Nesterov accelerated methods with deterministic
step sizes of order $t^{-2/3-\varepsilon}$, $0<\varepsilon<1/3$.
\citet[Theorems~8 and~13]{sebbouh2021} obtain
$o(1/\sum_{j<k}\eta_j)$ for weighted SGD averages and moving-average
heavy-ball iterates under smooth convex sample losses, finite residual
variance at the minimizer, and suitable step sizes and averaging weights.
Concurrent and independent work by \citet[Theorem~2.2]{anonymous2026cuberoot}
shows that, for every positive deterministic schedule of unclipped step sizes and every
$r>1/3$, a one-dimensional smooth convex instance with a fixed i.i.d.\
finite-variance oracle satisfies
$\limsup_{t\to\infty}(t+1)^r(f(x_t)-f^\star)=\infty$ almost surely.
Their Theorem~2.1 and Corollary~2.3 identify $1/3$ as the supremal
almost sure exponent under the ABC condition, and Proposition~2.4 establishes a
matching rate up to logarithmic factors.
Remark~\ref{rem:cuberoot} adapts their lower bound to the
time-uniform setting at a fixed confidence level $\delta$ and compares it with clipping.\looseness=-1

Online-to-batch conversions \citep{cesabianchi2004,cutkosky2019anytime}, primal averaging
\citep{defazio2021primal,defazio2024roadless}, and accelerated clipped methods
\citep{nguyen2023} query coupled or averaged states. Section~\ref{sec:momentum-boundary}
compares such memory with geometric forgetting. The upper proofs use
the maximal concentration inequalities in Appendix~\ref{app:concentration};
for broader time-uniform martingale methods, see \citet{howard2021}.

Momentum lower bounds on quadratics also appear in
\citet{ganesh2023}, for constant parameters and expected error.
\citet[Theorem~2]{drori2020} use orthogonal noise for nonconvex
stationarity, with coordinates indexed by time and step sizes and weights that may depend
on the norms and inner products of all past iterates and stochastic gradients. Our momentum lower bound
uses a fixed i.i.d.\ coordinate distribution and time-uniform convex objective error;
its scalar controls may depend arbitrarily on signs, coordinates, and
an independent seed, with constants shared across dimensions.

%% file: appendices/experiment.tex
\section{A Numerical Illustration}\label{sec:experiment}

We illustrate the separation between $\alpha_p$ and $\beta_p$ numerically. Figure~\ref{fig:illustration} compares the methods on the hard instances of Appendix~\ref{app:lower-witnesses} at $p=2$. For each horizon $N$ we use the targets $y\in\{1/4,1/2,3/4,1\}$, the slopes $\mu=\min\{0.2,1.3N^{-\alpha_p}\}$ and $\mu=\min\{0.2,1.3N^{-\beta_p}\}$, and the one-sided spike noise of~\eqref{eq:rare-progress-parameters}; every method sees the same instances and the same spikes. Every method starts at $x_1=0$, with $z_1=0$ for averaging and zero initial velocity for heavy-ball momentum, and $N$ ranges over $500\cdot2^k$ for $k=0,\dots,6$. Clipped SGD uses the time-uniform schedule $\eta_t=t^{-(p+1)/(3p-1)}$, $\lambda_t=2t^{2/(3p-1)}$; heavy-ball momentum uses $\beta_t=0.8$, the same thresholds, and step sizes $0.2\,t^{-(p+1)/(3p-1)}$; averaging uses~\eqref{eq:averaging-algorithm} with $\gamma_t=0.5\,t^{-1/p}$ and $\lambda_t=2t^{1/p}$; and Algorithm~\ref{alg:clipped} uses $\lambda=2N^{1/p}$ and $\eta_t=1.5(N-t+1)/(N\lambda)$. These constants are of order one and do not change with $N$; the theorems use smaller step sizes, different thresholds, and logarithmic factors. \experimentavailability{} For each method we take, over $1000$ runs, the $0.9$-quantile of the largest gap $f(x_{t+1})-f^\star$ over $N/2\le t\le N$, or of the terminal gap for Algorithm~\ref{alg:clipped}, and report its maximum over the eight instances.\looseness=-1

\begin{figure}[t]
\centering
\begin{tikzpicture}
\begin{loglogaxis}[width=0.66\textwidth,height=0.42\textwidth,
  xlabel={horizon $N$},ylabel={worst $0.9$-quantile of the gap},
  xmin=400,xmax=40000,ymin=4e-3,ymax=1e-1,
  legend cell align=left,legend columns=2,
  legend style={font=\footnotesize,draw=none,fill=none,at={(0.5,-0.2)},anchor=north,/tikz/every even column/.append style={column sep=1em}},
  tick label style={font=\footnotesize},label style={font=\footnotesize}]
\addplot[thick,mark=*,mark size=1.6pt] coordinates
 {(500,5.008e-2) (1000,3.574e-2) (2000,2.521e-2) (4000,1.760e-2) (8000,1.266e-2) (16000,8.774e-3) (32000,6.424e-3)};
\addlegendentry{known horizon, slope $-0.50$}
\addplot[thick,dashed,mark=o,mark size=2pt] coordinates
 {(500,7.038e-2) (1000,5.004e-2) (2000,3.660e-2) (4000,2.573e-2) (8000,1.695e-2) (16000,1.104e-2) (32000,7.241e-3)};
\addlegendentry{averaging, slope $-0.55$}
\addplot[thick,gray,mark=triangle*,mark size=2pt] coordinates
 {(500,7.614e-2) (1000,6.488e-2) (2000,5.039e-2) (4000,3.964e-2) (8000,3.001e-2) (16000,2.207e-2) (32000,1.952e-2)};
\addlegendentry{clipped SGD, slope $-0.35$}
\addplot[thick,gray,dotted,mark=diamond,mark size=2.2pt] coordinates
 {(500,5.645e-2) (1000,4.905e-2) (2000,3.968e-2) (4000,3.223e-2) (8000,2.607e-2) (16000,1.954e-2) (32000,1.783e-2)};
\addlegendentry{heavy ball, slope $-0.30$}
\end{loglogaxis}
\end{tikzpicture}
\caption{Algorithm~\ref{alg:clipped} for a known horizon, averaging (Theorem~\ref{thm:averaging-positive}), time-uniform clipped SGD (Theorem~\ref{thm:anytime-upper}), and heavy-ball momentum with $\beta_t=0.8$, with the exponents of the theorems and illustrative constants, on the hard instances of Appendix~\ref{app:lower-witnesses} at $p=2$: worst case over eight instances of the $0.9$-quantile, over $1000$ runs, of the largest gap over $N/2\le t\le N$ (terminal gap for Algorithm~\ref{alg:clipped}). Slopes are least-squares fits in $\log N$; their $95\%$ bootstrap intervals over the runs (200 resamples) have half-width below $0.02$.}
\label{fig:illustration}
\end{figure}

Algorithm~\ref{alg:clipped} and averaging decay with fitted slopes $-0.50$ and $-0.55$, close to $-\alpha_2=-1/2$, whereas clipped SGD and heavy-ball momentum decay with slopes $-0.35$ and $-0.30$, and at $N=3.2\times10^4$ their worst gaps are $2.5$ to $3$ times those of the first two. Slopes fitted over this range of $N$ also reflect constants and logarithmic factors. At $p=1.5$ the experiment does not separate averaging from the other methods. There the exponents $\alpha_p=1/3$ and $\beta_p=2/7$ differ by less than $0.05$. The fitted slopes are $-0.31$ for Algorithm~\ref{alg:clipped} and for averaging and $-0.24$ and $-0.22$ for the other two methods, but at every $N$ in this range the worst gap of averaging stays above those of clipped SGD and heavy-ball momentum; only the worst gap of Algorithm~\ref{alg:clipped} falls below them, at $N=3.2\times10^4$.\looseness=-1

%% file: appendices/open_problems.tex
\section{Open Problems}\label{app:open}
Our schedules use $L,R,\sigma,p$ and, except for the time-uniform schedule and the schedule of Theorem~\ref{thm:fixed-time-upper} with $u=0$, a confidence input; adaptation to unknown problem parameters is a natural next step. The matching results concern polynomial exponents, so sharp logarithmic factors and constants, parameter interpolation, and the optimal confidence exponent $c^\star(r)$ of fixed-time guarantees, bracketed in~\eqref{eq:confidence-tradeoff}, are left for future work. The two momentum lower bounds concern updates with zero initial velocity, queries and outputs at the current state, gradients as the only oracle information, and polynomially bounded nonnegative step sizes; the dimension-uniform theorem also uses control and rate constants shared across dimensions. Within this class, three questions remain: how the gap between the one-dimensional barrier $(1+a)\beta_p$ and the dimension-uniform barrier $\max\{\beta_p,a\}$ closes, which rates are attainable for $a>\beta_p$, also under a two-sided condition $1-\beta_t\asymp t^{-a}$, and how the confidence factor of a fixed-time exponent above $\max\{\beta_p,a\}$ grows under shared constants. Lower bounds for general update directions are also open. Concurrent analyses for a known horizon cover Lipschitz, strongly star-convex, composite, and constrained problems \citep{liu2026lastiterate,jiang2026linear}; horizon-free counterparts of these settings, and of non-Euclidean or $(L_0,L_1)$-smooth objectives \citep{elkadi2026}, are further directions.\looseness=-1

%% file: appendices/appendix.tex
\Needspace{7\baselineskip}
\section{Deterministic Last-Iterate Calculations}\label{app:deterministic}
This appendix proves the one-step inequality and the recursive potential (Lemma~\ref{lem:potential}).

\subsection{One Inexact Gradient Step}
\begin{lemma}[Inexact smooth convex step]\label{lem:one-step}
Let $f$ be convex and $L$-smooth. Fix $x,z,e\in\R^d$, $0<\gamma\le1/(2L)$, and $y=x-\gamma(\nabla f(x)+e)$. Then
\begin{equation}\label{eq:one-step}
 \gamma[f(y)-f(z)]+\frac12\norm{y-z}^2
 \le\frac12\norm{x-z}^2+\gamma\ip{e}{z-x}+\gamma^2\norm e^2.
\end{equation}
\end{lemma}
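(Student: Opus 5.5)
We will combine three standard ingredients: the descent lemma from $L$-smoothness, convexity at the base point $x$, and the three-point identity for $\norm{y-z}^2$. Write $d=\nabla f(x)+e$, so that $y=x-\gamma d$ and $x-y=\gamma d$.

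First, $L$-smoothness gives
\[
 f(y)\le f(x)-\gamma\ip{\nabla f(x)}{d}+\frac{L\gamma^2}{2}\norm d^2 .
\]
Convexity at $x$ gives
\[
 f(x)\le f(z)+\ip{\nabla f(x)}{x-z} .
\]
Adding these and multiplying by $\gamma$ yields
\[
 \gamma[f(y)-f(z)]\le\gamma\ip{\nabla f(x)}{x-z}-\gamma^2\ip{\nabla f(x)}{d}+\frac{L\gamma^3}{2}\norm d^2 .
\]
Next, the identity
\[
 \frac12\norm{y-z}^2=\frac12\norm{x-z}^2-\gamma\ip{d}{x-z}+\frac{\gamma^2}{2}\norm d^2
\]
cancels the term $\gamma\ip{\nabla f(x)}{x-z}$ against part of $-\gamma\ip{d}{x-z}$. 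What remains is $\gamma\ip{e}{z-x}$, plus a remainder
\[
 -\gamma^2\ip{\nabla f(x)}{d}+\frac{\gamma^2}{2}\norm d^2+\frac{L\gamma^3}{2}\norm d^2 .
\]

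It remains to bound this remainder by $\gamma^2\norm e^2$. Use $\gamma L\le1/2$ to get $\frac{L\gamma^3}{2}\norm d^2\le\frac{\gamma^2}{4}\norm d^2$. Substitute $\nabla f(x)=d-e$, so that $-\ip{\nabla f(x)}{d}=-\norm d^2+\ip{e}{d}$. The remainder is then at most
\[
 \gamma^2\Bigl(-\frac14\norm d^2+\ip{e}{d}\Bigr)\le\gamma^2\norm e^2,
\]
where the last step is Young's inequality $\ip{e}{d}\le\frac14\norm d^2+\norm e^2$. This gives exactly~\eqref{eq:one-step}.

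The only delicate step is this final accounting of the second-order terms. One must check that the coefficient of $\norm d^2$ stays nonpositive after using $\gamma\le1/(2L)$: it equals $-1+\tfrac12+\tfrac14=-\tfrac14$. With that coefficient, Young's inequality with constant $\tfrac14$ produces exactly the factor $1$ in front of $\gamma^2\norm e^2$. If the constants had not closed, the fallback would be to apply smoothness at $y$ relative to $z$, using the co-coercivity form $f(y)\le f(z)+\ip{\nabla f(y)}{y-z}-\frac1{2L}\norm{\nabla f(y)-\nabla f(z)}^2$. The direct computation above suffices, however.
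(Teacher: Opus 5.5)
Your proof is correct and is essentially the paper's argument: smoothness, convexity at $x$, and the three-point identity reduce everything to the remainder $-\tfrac{1-\gamma L}{2}\|x-y\|^2+\gamma\langle e,x-y\rangle$. This is your $\gamma^2\bigl(-\tfrac12+\tfrac{L\gamma}{2}\bigr)\|d\|^2+\gamma^2\langle e,d\rangle$, and the paper closes it the same way, with $\gamma L\le 1/2$ and the Young inequality $\gamma\langle e,x-y\rangle\le\gamma^2\|e\|^2+\|x-y\|^2/4$. The co-coercivity fallback is not needed.
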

\begin{proof}
Smoothness, convexity and the three-point norm identity give
\[
 \gamma[f(y)-f(z)]+\tfrac12\|y-z\|^2
 \le\tfrac12\|x-z\|^2+\gamma\langle e,z-x\rangle
 -\tfrac{1-\gamma L}{2}\|x-y\|^2+\gamma\langle e,x-y\rangle.
\]
Use $\gamma\langle e,x-y\rangle\le\gamma^2\|e\|^2+\|x-y\|^2/4$
and $\gamma L\le1/2$.
\end{proof}

\subsection{A Recursively Averaged Potential}\label{sec:analysis}
Potential functions are a standard tool for analyzing first-order methods \citep{taylor2019}.
Following the comparator weighted by inverse tail sums in \citet{zamani2025} and \citet{liuzhou2024}, we express the potential recursively. With boundary mass $\zeta=0$ below, it is the tail-sum potential that \citet[Lemma~4]{jiang2026linear} use to prove Theorem~\ref{thm:upper}; the boundary mass $\zeta>0$, which keeps the weights bounded without a horizon, is what our unknown-horizon results add.
For any positive steps $\eta_t\le1/(2L)$ and arbitrary
realized errors, consider
\begin{equation*}
 x_{t+1}=x_t-\eta_t(\nabla f(x_t)+e_t).
\end{equation*}
For a deterministic analysis horizon $T$ and boundary mass $\zeta\ge0$, set
\[
 G_t=f(x_t)-f^\star,\quad S_t=\sum_{r=t}^T\eta_r,\quad
 U_t=\zeta+S_t,\quad W=U_1,\quad\theta_t=\eta_t/U_t,
\]
with $S_0=S_1$, $S_{T+1}=0$ and $U_0=U_1$. All denominators below
involve only $U_0,\ldots,U_T>0$, so $\zeta=0$ is allowed.
Starting from $z_0=z_1=x^\star$ and $A_0=0$, define only for the analysis
\begin{equation*}
 z_{t+1}=(1-\theta_t)z_t+\theta_tx_{t+1},\qquad
 A_t=(1-\theta_t)A_{t-1}+\theta_tG_{t+1}.
\end{equation*}
Let
\begin{equation*}
 \begin{split}
 d_t&=z_{t-1}-x_t,\qquad H_0=\tfrac12\|d_1\|^2,\qquad
 H_t=\frac{W}{2U_t}\|z_t-x_{t+1}\|^2,\\
 q_t&=\frac{2W\eta_t^2}{U_t},\qquad
 h_t=\frac{W\eta_t}{U_{t-1}}d_t.
 \end{split}
\end{equation*}
In particular $H_T$ uses $z_T-x_{T+1}$, not $z_{T+1}-x_{T+1}$.

\begin{lemma}[Unified recursive potential]\label{lem:potential}
For every $m\le T$, $A_m,H_m\ge0$ and
\begin{equation}\label{eq:potential}
 WA_m+H_m\le\frac{R^2}{2}
       +\sum_{t\le m}q_t\|e_t\|^2
       +\sum_{t\le m}\langle e_t,h_t\rangle.
\end{equation}
The full terminal potential also gives
\begin{equation}\label{eq:potential-endpoint}
 \begin{split}
 G_{T+1}
 &\le\left(1+\frac{L\zeta^2}{\zeta+\eta_T}\right)
       \frac{WA_T+H_T}{W}\\
 &\le(1+L\zeta)\frac{WA_T+H_T}{W}.
 \end{split}
\end{equation}
In particular, $\zeta=0$ gives $A_T=G_{T+1}$, while $\zeta=1$
keeps the inverse tail weights bounded.
\end{lemma}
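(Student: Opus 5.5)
The plan is to prove \eqref{eq:potential} by induction on $m$, telescoping a weighted form of Lemma~\ref{lem:one-step}, and then to derive \eqref{eq:potential-endpoint} from smoothness at the final step. Nonnegativity is immediate: $A_m\ge0$ because every $G_t\ge0$ and $A_m$ is a convex combination of $0$ and the $G_{t+1}$ (since $\theta_t=\eta_t/U_t\in(0,1]$). Also $H_m\ge0$ trivially.

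\emph{Basic identities.} Since $U_{t-1}=U_t+\eta_{t-1}$ for $t\ge2$, we have
\[
 1-\theta_{t-1}=\frac{U_t}{U_{t-1}},\qquad \theta_{t-1}=\frac{\eta_{t-1}}{U_{t-1}}.
\]
This is the only place the boundary mass $\zeta$ enters, and it works for any $\zeta\ge0$. By Jensen and induction along the recursions for $z_t$ and $A_t$, $f(z_{t+1})-f^\star\le A_t$ for every $t$, because $z_{t+1}$ is the same convex combination of $x^\star,x_2,\dots,x_{t+1}$ as $A_t$ is of $0,G_2,\dots,G_{t+1}$. Finally $H_0=\tfrac12\|x^\star-x_1\|^2\le R^2/2$ gives the base case $m=0$.

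\emph{Inductive step.} Apply Lemma~\ref{lem:one-step} with $x=x_t$, $y=x_{t+1}$, $e=e_t$, $\gamma=\eta_t$, and comparator $z=z_{t-1}$, so that $z-x=d_t$. This gives
\[
 \eta_t\bigl[G_{t+1}-(f(z_{t-1})-f^\star)\bigr]+\tfrac12\|x_{t+1}-z_{t-1}\|^2\le\tfrac12\|d_t\|^2+\eta_t\langle e_t,d_t\rangle+\eta_t^2\|e_t\|^2 .
\]
Multiply by $W/U_{t-1}$; the noise terms are then at most $\langle e_t,h_t\rangle+q_t\|e_t\|^2$, using $U_t\le U_{t-1}$. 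It then remains to check two things, using the identities above and convexity of $\|\cdot\|^2$ along the segment defining $z_t$:
\begin{itemize}
\item the left-hand distance term dominates the new $H_t$ plus the $\theta$-part of the gap;
\item $\tfrac{W}{2U_{t-1}}\|d_t\|^2$ is at most the old $H_{t-1}$.
\end{itemize}
For the second point, the index convention $H_{t-1}=\tfrac{W}{2U_{t-1}}\|z_{t-1}-x_t\|^2$ matches $d_t$ exactly. For the gap, $W(1-\theta)A_{t-1}+W\theta G_{t+1}$ should reassemble into $WA_t$, using $f(z_{t-1})-f^\star\le A_{t-2}\le$ the appropriate previous average. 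Summing over $t\le m$ telescopes to \eqref{eq:potential}.

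I expect the main obstacle to be this index bookkeeping. The comparator $z_{t-1}$ lags the iterate, $H_t$ is defined with $z_t-x_{t+1}$ rather than $z_{t+1}-x_{t+1}$, and the weights $U_t/U_{t-1}$ must cancel exactly. If the lag does not close, the first thing I would try is the comparator $z_t$ together with the identity
\[
 z_t-x_{t+1}=(1-\theta_{t-1})(z_{t-1}-x_{t+1})+\theta_{t-1}(x_t-x_{t+1}),
\]
absorbing the extra $\|x_t-x_{t+1}\|^2$ term into the slack $q_t\|e_t\|^2$ (recall $q_t$ carries a factor $2$).

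\emph{Endpoint.} Write $A_T=(1-\theta_T)A_{T-1}+\theta_TG_{T+1}$, with $\theta_T=\eta_T/(\zeta+\eta_T)$ and $1-\theta_T=\zeta/(\zeta+\eta_T)$, and use $A_{T-1}\ge f(z_T)-f^\star$. Smoothness and convexity give
\[
 G_{T+1}\le f(z_T)-f^\star+\langle\nabla f(x_{T+1}),x_{T+1}-z_T\rangle ,
\]
and the inner product is bounded by Young's inequality, with $\|\nabla f(x_{T+1})\|^2\le 2LG_{T+1}$ and the distance $\|z_T-x_{T+1}\|^2=2U_TH_T/W=2\zeta H_T/W$. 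Choosing the Young parameter proportional to $\zeta$ and solving for $G_{T+1}$ should yield the factor $1+L\zeta^2/(\zeta+\eta_T)\le1+L\zeta$. For $\zeta=0$ we have $\theta_T=1$, so $A_T=G_{T+1}$ directly.
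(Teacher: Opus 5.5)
Your nonnegativity argument, the Jensen bound $f(z_{t+1})-f^\star\le A_t$, the base case and the handling of the noise terms are fine. However, both main steps have genuine gaps.

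\emph{Inductive step.} With comparator $z_{t-1}$ and multiplier $W/U_{t-1}$ the bookkeeping does not close, and your two bullets are not true as stated. Two deficits arise:
\begin{itemize}
\item The new gap $G_{t+1}$ enters your inequality with coefficient $W\eta_t/U_{t-1}$. But $WA_t-WA_{t-1}=W\theta_t(G_{t+1}-A_{t-1})$ needs $W\eta_t/U_t$, which is larger by $W\eta_t\eta_{t-1}/(U_{t-1}U_t)$.
\item Since $z_t-x_{t+1}=(1-\theta_{t-1})(z_{t-1}-x_{t+1})+\theta_{t-1}(x_t-x_{t+1})$, convexity of the squared norm bounds $H_t$ by your distance term only up to an extra $\frac{W\theta_{t-1}}{2U_t}\|x_t-x_{t+1}\|^2$. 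The errors are arbitrary, so $x_{t+1}=z_{t-1}$ can occur; then your distance term vanishes while $H_t>0$, and your first bullet fails pathwise.
\end{itemize}
Also, $A_{t-2}\le A_{t-1}$ is false in general (take $G_t=0$). What holds is the weighted comparison $A_{t-2}/U_{t-1}\le A_{t-1}/U_t$, which leaves only the slack $W\eta_{t-1}\eta_tG_t/(U_{t-1}U_t)$.

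Covering both deficits with this slack needs an extra descent inequality, $\eta_t(G_{t+1}-G_t)+\tfrac12\|x_{t+1}-x_t\|^2\le\eta_t^2\|e_t\|^2$. This is Lemma~\ref{lem:one-step} with $z=x_t$, added with weight $W\theta_{t-1}/U_t$, and your proposal does not contain it. Your fallback cannot replace it: $\|x_t-x_{t+1}\|^2=\eta_t^2\|\nabla f(x_t)+e_t\|^2$ contains $\|\nabla f(x_t)\|^2$, so it cannot be absorbed into $q_t\|e_t\|^2$.

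The paper avoids all of this by taking comparator $z_t$ and multiplier $W/U_t$, with the identity $z_t-x_t=(1-\theta_{t-1})d_t=(U_t/U_{t-1})d_t$. Then:
\begin{itemize}
\item $\frac{W}{2U_t}\|x_{t+1}-z_t\|^2=H_t$ exactly, and $\frac{W}{2U_t}\|x_t-z_t\|^2=\frac{U_t}{U_{t-1}}H_{t-1}\le H_{t-1}$;
\item the cross term is exactly $\langle e_t,h_t\rangle$, and the square term is $\frac{q_t}{2}\|e_t\|^2$;
\item $W\theta_t[f(x_{t+1})-f(z_t)]\ge W(A_t-A_{t-1})$, because $f(z_t)-f^\star\le A_{t-1}$.
\end{itemize}

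\emph{Endpoint.} First, $\|z_T-x_{T+1}\|^2=2U_TH_T/W$ with $U_T=\zeta+\eta_T$, not $\zeta$. More importantly, convexity at $x_{T+1}$ plus Young's inequality and $\|\nabla f(x_{T+1})\|^2\le2LG_{T+1}$ cannot give the stated factor. Eliminating $A_{T-1}$ yields $(1-\epsilon L')G_{T+1}\le A_T+(\zeta/\epsilon)H_T/W$, where $L'=L\zeta/(\zeta+\eta_T)$, so $\rho=\zeta L'$. For $\zeta>0$, making the coefficient of $A_T$ at most $1+\rho$ forces $\epsilon\le\zeta/(1+\rho)$. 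The coefficient $\zeta/(\epsilon(1-\epsilon L'))$ of $H_T/W$ then exceeds $1+\rho$. The balanced choice $\epsilon=\zeta$ gives only $G_{T+1}\le(A_T+H_T/W)/(1-\rho)$, valid only for $\rho<1$. With $\zeta=L=1$ and $\eta_T\to0$, as in the later application, $\rho\to1$, so this bound blows up instead of staying below $1+L\zeta=2$.

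The missing step is to pass to $z_{T+1}$. There $f(z_{T+1})-f^\star\le A_T$ and $z_{T+1}-x_{T+1}=(1-\theta_T)(z_T-x_{T+1})$, so $\frac L2\|z_{T+1}-x_{T+1}\|^2=\rho H_T/W$. Then use the smoothness upper bound around $z_{T+1}$ together with $\|\nabla f(z_{T+1})\|^2\le2L[f(z_{T+1})-f^\star]$. This gives $G_{T+1}\le\bigl(\sqrt{A_T}+\sqrt{\rho H_T/W}\bigr)^2\le(1+\rho)(A_T+H_T/W)$ by Cauchy--Schwarz.
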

\begin{proof}
Convexity and the synchronized recursions give
$f(z_t)-f^\star\le A_{t-1}$ by induction. Thus
$A_t-A_{t-1}\le\theta_t[f(x_{t+1})-f(z_t)]$.
Also $z_t-x_t=(U_t/U_{t-1})d_t$, including $t=1$ by convention.
Multiply \eqref{eq:one-step} of Lemma~\ref{lem:one-step}, with $z=z_t$ and $\gamma=\eta_t$,
by $W/U_t$ to obtain
\begin{equation*}
 WA_t+H_t\le WA_{t-1}+\frac{U_t}{U_{t-1}}H_{t-1}
       +\frac{q_t}{2}\|e_t\|^2+\langle e_t,h_t\rangle.
\end{equation*}
Since $U_t\le U_{t-1}$, summation proves \eqref{eq:potential},
using $q_t/2\le q_t$. Nonnegativity follows
from the definitions. The final synchronized update gives
$f(z_{T+1})-f^\star\le A_T$ and
\[
 \|z_{T+1}-x_{T+1}\|^2
 =\frac{2\zeta^2H_T}{W(\zeta+\eta_T)}.
\]
By $L$-smoothness evaluated at $z-\nabla f(z)/L$, we have
$\|\nabla f(z)\|^2\le2L[f(z)-f^\star]$.
Thus, writing $G(x)=f(x)-f^\star$ and
$\rho=L\zeta^2/(\zeta+\eta_T)$, smoothness and Cauchy--Schwarz give
\[
 G(x_{T+1})
 \le\left(\sqrt{G(z_{T+1})}
          +\sqrt{L/2}\,\|x_{T+1}-z_{T+1}\|\right)^2
 \le(1+\rho)(A_T+H_T/W).
\]
Since $\rho\le L\zeta$, this proves~\eqref{eq:potential-endpoint}.
If $\zeta=0$, $\theta_T=1$ also gives $A_T=G_{T+1}$ directly.
\end{proof}
The proofs of Theorems~\ref{thm:anytime-upper} and~\ref{thm:fixed-time-upper} take $\zeta=1$; $\zeta=0$ gives $W=S_1$ and $U_t=S_t$.

\section{Conditional Clipping Estimates}\label{app:clipping}
\begin{lemma}[Bias and second moment after clipping]\label{lem:clipping}
Conditionally on a $\sigma$-field $\mathcal G$, suppose $\mu$ is measurable, $\E[g\mid\mathcal G]=\mu$, and $\E[\norm{g-\mu}^p\mid\mathcal G]\le\sigma^p$, for $1<p\le2$. If $\norm\mu\le\lambda/2$, set
\[
 c=\clip(g,\lambda),\quad e=c-\mu,\quad b=\E[e\mid\mathcal G],\quad u=e-b.
\]
Then
\begin{align}
 \norm e&\le\min\{\norm{g-\mu},3\lambda/2\},&
 \norm u&\le2\lambda,\nonumber\\
 \norm b&\le2^{p-1}\sigma^p\lambda^{1-p}
            \le2\sigma^p\lambda^{1-p},\nonumber\\
 \E[\norm e^2\mid\mathcal G]&\le(3\lambda/2)^{2-p}\sigma^p
            \le2\sigma^p\lambda^{2-p},&
 \E[\norm u^2\mid\mathcal G]&\le\E[\norm e^2\mid\mathcal G].
 \nonumber
\end{align}
\end{lemma}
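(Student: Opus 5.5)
We prove the bounds in the order listed, working conditionally on $\mathcal G$ throughout, with $\norm\mu\le\lambda/2$ fixed. Write $\xi=g-\mu$, so that $\E[\xi\mid\mathcal G]=0$ and $\E[\norm\xi^p\mid\mathcal G]\le\sigma^p$.

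\emph{Pointwise bound on $e$.} Since $\clip(\cdot,\lambda)$ is the Euclidean projection onto the ball of radius $\lambda$, it is $1$-Lipschitz. Because $\norm\mu\le\lambda/2<\lambda$, it fixes $\mu$. Hence $\norm e=\norm{\clip(g,\lambda)-\clip(\mu,\lambda)}\le\norm\xi$. The triangle inequality also gives $\norm e\le\norm c+\norm\mu\le\lambda+\lambda/2=3\lambda/2$. Averaging the pointwise bound gives $\norm b\le3\lambda/2$, so $\norm u\le\norm e+\norm b\le3\lambda$. This is weaker than the claimed $2\lambda$, so the bias bound below is needed. Alternatively, $u=c-\E[c\mid\mathcal G]$ with $\norm c\le\lambda$ gives $\norm u\le2\lambda$ immediately, and this is the route we take.

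\emph{Bias.} On the event $\{\norm g\le\lambda\}$ we have $c=g$. Using $\E[\xi\mid\mathcal G]=0$,
\[
 b=\E[c-g\mid\mathcal G]=\E\bigl[(c-g)\ind\{\norm g>\lambda\}\mid\mathcal G\bigr].
\]
On $\{\norm g>\lambda\}$ we have $\norm{c-g}=\norm g-\lambda\le\norm\xi+\norm\mu-\lambda\le\norm\xi$. The same event forces $\norm\xi\ge\norm g-\norm\mu>\lambda/2$. Therefore
\[
 \norm b\le\E\bigl[\norm\xi\ind\{\norm\xi>\lambda/2\}\bigm|\mathcal G\bigr]
 \le\E\bigl[\norm\xi^p\mid\mathcal G\bigr](\lambda/2)^{1-p}
 \le2^{p-1}\sigma^p\lambda^{1-p},
\]
where the middle step uses $\norm\xi\le\norm\xi^p(\lambda/2)^{1-p}$ whenever $\norm\xi>\lambda/2$. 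Finally, $2^{p-1}\le2$ because $p\le2$.

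\emph{Second moments.} From the pointwise bound,
\[
 \norm e^2\le\min\{\norm\xi,3\lambda/2\}^2\le\norm\xi^p(3\lambda/2)^{2-p}.
\]
This holds because $\min\{s,M\}^2\le s^pM^{2-p}$ for $s,M\ge0$: if $s\le M$, then $s^2=s^ps^{2-p}\le s^pM^{2-p}$; otherwise $M^2\le s^pM^{2-p}$. Taking conditional expectations gives $\E[\norm e^2\mid\mathcal G]\le(3\lambda/2)^{2-p}\sigma^p$. Since $2-p\in[0,1)$, we have $(3/2)^{2-p}\le3/2\le2$, which gives the stated $2\sigma^p\lambda^{2-p}$. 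The last inequality is the variance decomposition $\E[\norm u^2\mid\mathcal G]=\E[\norm e^2\mid\mathcal G]-\norm b^2$.

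The only delicate point is the bias step: we must use the mean-zero property to restrict to the clipping event, and then use $\norm\mu\le\lambda/2$ to transfer $\norm g>\lambda$ into $\norm\xi>\lambda/2$. Everything else is routine.
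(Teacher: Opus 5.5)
Your proof is correct and follows the paper's argument step for step. It uses nonexpansiveness of the projection with $\clip(\mu,\lambda)=\mu$, the range bound $\norm u\le2\lambda$ from $\norm c\le\lambda$, the pointwise bound $\norm{c-g}\le\norm\xi\ind\{\norm\xi>\lambda/2\}$ for the bias, the interpolation $\min\{\norm\xi,3\lambda/2\}^2\le\norm\xi^p(3\lambda/2)^{2-p}$, and conditional variance reduction. One aside is wrong but harmless: in your first paragraph you say the bias bound is needed for $\norm u\le2\lambda$. It would not give that, since it depends on $\sigma$, but you discard that route anyway.
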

\begin{proof}
These are the standard clipping bounds (cf.\ \citealp[Lemma~5.1]{sadiev2023});
the following Euclidean calculation records the sharper constants used here.
Write $P=\operatorname{clip}(\cdot,\lambda)$ and $\xi=g-\mu$.
Since $P\mu=\mu$, nonexpansiveness and the clipping radius give
\[
 \|e\|\le\min\{\|\xi\|,3\lambda/2\},\qquad
 \|u\|=\|Pg-\E[Pg\mid\mathcal G]\|\le2\lambda.
\]
Also $\|Pg-g\|\le\|\xi\|\mathbf1_{\{\|\xi\|>\lambda/2\}}$, so
\[
 \|b\|=\|\E[Pg-g\mid\mathcal G]\|
 \le(\lambda/2)^{1-p}\E[\|\xi\|^p\mid\mathcal G].
\]
Finally,
\[
 \E[\|u\|^2\mid\mathcal G]\le\E[\|e\|^2\mid\mathcal G]
 \le(3\lambda/2)^{2-p}\E[\|\xi\|^p\mid\mathcal G],
\]
where the first inequality is conditional variance reduction by centering.
\end{proof}

\Needspace{10\baselineskip}
The next decomposition imposes no bound on the true gradient. Related
Lipschitz estimates for the clipping factor appear in
\citet[Appendix C.1.2, Eqs.~(13)--(14)]{tian2026inosgd}, under bounded noise.
We prove the conditional $p$-moment bounds used here.

\begin{lemma}[Conditional mean and variance after clipping]\label{lem:global-clipping}
Conditionally on a $\sigma$-field $\mathcal G$, let $\mu$ be measurable,
$\E[g\mid\mathcal G]=\mu$, and
$\E[\norm{g-\mu}^p\mid\mathcal G]\le\sigma^p$.
For deterministic or $\mathcal G$-measurable $\lambda>0$, put
\[
 \alpha(v)=\begin{cases}\min\{1,\lambda/\norm v\},&v\ne0,\\1,&v=0,
 \end{cases}\quad
 C=\clip(g,\lambda),\quad A=\E[\alpha(g)\mid\mathcal G].
\]
Then $0\le A\le1$ and
\[
 \E[C\mid\mathcal G]=A\mu+\varepsilon,
 \qquad \norm\varepsilon\le\sigma^p\lambda^{1-p}.
\]
Moreover, $u=C-\E[C\mid\mathcal G]$ satisfies
\[
 \norm u\le2\lambda,\qquad
 \E[\norm u^2\mid\mathcal G]\le(2\lambda)^{2-p}\sigma^p.
\]
In particular, if $\sigma\le1$ and $\lambda\ge4$, then
$d=C-A\mu=\varepsilon+u$ satisfies
\[
 \norm d\le3\lambda,\qquad
 \E[\norm d^2\mid\mathcal G]\le4\lambda^{2-p}.
\]
\end{lemma}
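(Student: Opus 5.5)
The plan is to write the clipped vector as $C=\alpha(g)\,g$ and to treat $\alpha$ as a scalar function of $g$. With $\xi=g-\mu$, so that $\E[\xi\mid\mathcal G]=0$, we have
\[
 \E[C\mid\mathcal G]=\E[\alpha(g)]\mu+\E[\alpha(g)\xi]=A\mu+\varepsilon,
 \qquad \varepsilon=\E\bigl[(\alpha(g)-\alpha(\mu))\,\xi\bigm|\mathcal G\bigr].
\]
Here $\alpha(\mu)$ is $\mathcal G$-measurable, so subtracting it inside the expectation costs nothing because $\E[\xi\mid\mathcal G]=0$. The bound $0\le A\le1$ is immediate from $0\le\alpha\le1$. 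This covariance form is the key step. No bound on $\norm\mu$ is available, so the naive comparison of $\alpha(g)$ with $1$ fails when $\mu$ is large. The comparison must instead be with $\alpha(\mu)$.

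The main step is a regularity estimate for $\alpha$:
\[
 |\alpha(u)-\alpha(v)|\le\min\{1,\norm{u-v}/\lambda\}.
\]
The bound by $1$ is trivial. For the Lipschitz part, $\alpha$ is constant on the ball $\{\norm v\le\lambda\}$ and equals $\lambda/\norm v$ outside it. Outside the ball its gradient has norm $\lambda/\norm v^2\le1/\lambda$, and $\alpha$ is continuous across the sphere $\norm v=\lambda$. Integrating along the segment from $u$ to $v$ therefore gives the $1/\lambda$-Lipschitz bound. The segment may pass near the origin, but it stays in the ball there, where $\alpha$ is constant, so this causes no issue.

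Since $p-1\in(0,1]$, it follows that $\min\{1,s\}\le s^{p-1}$ for $s\ge0$. Hence
\[
 \norm\varepsilon\le\E\bigl[\norm\xi^{p}\bigm|\mathcal G\bigr]\lambda^{1-p}\le\sigma^p\lambda^{1-p}.
\]

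For the fluctuation $u=C-\E[C\mid\mathcal G]$, both $C$ and its conditional mean lie in the closed ball of radius $\lambda$, which gives $\norm u\le2\lambda$. For the second moment, use variance reduction against the $\mathcal G$-measurable centre $\clip(\mu,\lambda)$:
\[
 \E[\norm u^2\mid\mathcal G]\le\E\bigl[\norm{C-\clip(\mu,\lambda)}^2\bigm|\mathcal G\bigr].
\]
Clipping is the projection onto a ball, so it is nonexpansive, and both points lie in that ball. Therefore $\norm{C-\clip(\mu,\lambda)}\le\min\{\norm\xi,2\lambda\}$. This yields $\min\{\norm\xi,2\lambda\}^2\le(2\lambda)^{2-p}\norm\xi^p$, and taking conditional expectations gives the claimed bound.

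For the final claim, assume $\sigma\le1$ and $\lambda\ge4$. The deviation $d=\varepsilon+u$ satisfies
\[
 \norm d\le\lambda^{1-p}+2\lambda\le3\lambda.
\]
Because $\varepsilon$ is $\mathcal G$-measurable and $\E[u\mid\mathcal G]=0$, the cross term vanishes, so
\[
 \E[\norm d^2\mid\mathcal G]=\norm\varepsilon^2+\E[\norm u^2\mid\mathcal G]\le\lambda^{2-2p}+2^{2-p}\lambda^{2-p}\le3\lambda^{2-p}.
\]
The last inequality uses $\lambda^{2-2p}\le\lambda^{2-p}$ for $\lambda\ge1$ and $2^{2-p}\le2$.

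The only delicate point is the Lipschitz estimate for $\alpha$ combined with the covariance rewriting. Everything else is routine projection geometry.
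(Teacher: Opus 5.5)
Your proof is correct and follows the paper's argument step by step. It uses the covariance form $\varepsilon=\E[(\alpha(g)-\alpha(\mu))\xi\mid\mathcal G]$ with the $1/\lambda$-Lipschitz bound on $\alpha$, then variance reduction against $\clip(\mu,\lambda)$ with nonexpansiveness, then the orthogonal decomposition of $d$. The differences are cosmetic: the paper gets the Lipschitz bound from the scalar map $r\mapsto\min\{1,\lambda/r\}$ composed with the norm, so the reverse triangle inequality replaces your segment integration, and it records the cruder constant $4\lambda^{2-p}$ where your $3\lambda^{2-p}$ is also valid.
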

\begin{proof}
The scalar function $r\mapsto\min\{1,\lambda/r\}$, extended by $1$ at zero,
is $1/\lambda$-Lipschitz on $[0,\infty)$ and takes values in $[0,1]$.
Writing $\xi=g-\mu$ and using its conditional mean zero gives
\begin{align*}
 \varepsilon
 &=\E[\alpha(g)\xi\mid\mathcal G]
   =\E[(\alpha(g)-\alpha(\mu))\xi\mid\mathcal G],\\
 \norm\varepsilon
 &\le\E\!\left[\min\{\norm\xi/\lambda,1\}\norm\xi
                   \mid\mathcal G\right]
 \le\lambda^{1-p}\E[\norm\xi^p\mid\mathcal G].
\end{align*}
The final pointwise inequality follows separately for $\norm\xi\le\lambda$
and $\norm\xi>\lambda$, using $p\le2$ and \mbox{$p>1$}, respectively.
For the projection $P=\clip(\cdot,\lambda)$, variance reduction and
nonexpansiveness give
\[
 \E[\norm u^2\mid\mathcal G]
 \le\E[\norm{Pg-P\mu}^2\mid\mathcal G]
 \le\E[\min\{\norm\xi,2\lambda\}^2\mid\mathcal G]
 \le(2\lambda)^{2-p}\sigma^p.
\]
Both $C$ and its mean belong to the radius-$\lambda$ ball, proving the range.
Under the last assumptions, $\norm\varepsilon\le\lambda^{1-p}\le\lambda$,
so $\norm d\le3\lambda$; and
\[
 \E[\norm d^2\mid\mathcal G]
 =\E[\norm u^2\mid\mathcal G]+\norm\varepsilon^2
 \le2\lambda^{2-p}+\lambda^{2-2p}
 \le4\lambda^{2-p}.
\]
\end{proof}

\Needspace{10\baselineskip}
\section{Elementary Maximal Concentration Bounds}\label{app:concentration}
We record a standard scalar concentration bound and one energy estimate used throughout the high-probability proofs. Both use only predictable local bounds.

\begin{lemma}[Maximal inequality for bounded martingale increments]\label{lem:martingale}
Let $X_t$ be adapted to $(\F_t)_{t=0}^T$, with $\Et X_t=0$, $|X_t|\le b$ almost surely, and
\[
 \sum_{t=1}^T\Et X_t^2\le v\quad\text{almost surely},
\]
where $b,v\ge0$ are deterministic. For any $\ell>0$,
\begin{equation*}
 \Prob\left\{\max_{1\le m\le T}\sum_{t=1}^mX_t
       >2\sqrt{v\ell}+2b\ell\right\}\le e^{-\ell}.
\end{equation*}
\end{lemma}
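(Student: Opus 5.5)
The plan is the standard Freedman/Bernstein argument via an exponential supermartingale, combined with Ville's maximal inequality so that the bound holds for the running maximum. First I will fix $\theta\in(0,1/b]$ (the case $b=0$ is trivial, since then $X_t=0$). I will use the elementary inequality $e^{x}\le1+x+x^2$ for $x\le1$. It gives, for $|\theta X_t|\le1$,
\[
 \Et e^{\theta X_t}\le1+\theta^2\Et X_t^2\le\exp\bigl(\theta^2\Et X_t^2\bigr),
\]
using $\Et X_t=0$. Setting $V_m=\sum_{t\le m}\Et X_t^2$, the process $M_m=\exp(\theta\sum_{t\le m}X_t-\theta^2V_m)$ is then a nonnegative supermartingale with $M_0=1$.

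Next I will apply the maximal inequality for nonnegative supermartingales (Doob/Ville): $\Prob\{\max_{m\le T}M_m\ge e^{\ell}\}\le e^{-\ell}$. Since $V_m\le V_T\le v$ almost surely, the event $\sum_{t\le m}X_t>\ell/\theta+\theta v$ for some $m$ implies $M_m>e^{\ell}$. Hence
\[
 \Prob\Bigl\{\max_{m\le T}\sum_{t\le m}X_t>\frac{\ell}{\theta}+\theta v\Bigr\}\le e^{-\ell}.
\]
The remaining step is to choose a deterministic $\theta\le1/b$ so that $\ell/\theta+\theta v\le2\sqrt{v\ell}+2b\ell$. I will take $\theta=\min\{\sqrt{\ell/v},1/b\}$ and split into two cases.
\begin{itemize}
 \item If $\sqrt{\ell/v}\le1/b$, then $\theta=\sqrt{\ell/v}$ and the threshold equals $2\sqrt{v\ell}$.
 \item Otherwise $\theta=1/b$ and $v<b^2\ell$, so the threshold is $b\ell+v/b<2b\ell$.
\end{itemize}
The degenerate case $v=0$ falls under the second case, with threshold $b\ell$. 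In every case the threshold is at most $2\sqrt{v\ell}+2b\ell$, which proves the claim.

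The only point needing care is the maximal step. The supermartingale property must use the predictable quantity $V_m$ and not $v$ itself. The deterministic bound $v$ enters only through $V_m\le v$, which holds on the whole sample space. This is what makes the choice of $\theta$ legitimate and keeps it free of data dependence. Everything else is routine, and the factor-$2$ slack in the statement absorbs the crude constants above.
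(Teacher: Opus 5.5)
Your proof is correct, and it takes a more self-contained route than the paper. The paper cites the maximal form of Freedman's inequality, $\Prob\{\max_{m\le T}\sum_{t\le m}X_t>s\}\le\exp\{-s^2/(2(v+bs))\}$. It then checks that $s=2\sqrt{v\ell}+2b\ell$ satisfies $s^2\ge2\ell(v+bs)$, and treats zero budgets by continuity.

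You instead rebuild the relevant part of Freedman's argument in three steps:
\begin{enumerate}
\item the exponential supermartingale $\exp(\theta S_m-\theta^2V_m)$ with the predictable $V_m$, using the cruder bound $e^x\le1+x+x^2$ on $x\le1$ (valid since $e-2<1$);
\item Ville's (Doob's) maximal inequality;
\item the deterministic choice $\theta=\min\{\sqrt{\ell/v},1/b\}$ with a two-case check.
\end{enumerate}

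The cruder bound on the moment generating function loses a constant factor in the exponent relative to Freedman's Bennett-type estimate. In the small-deviation regime you get $\exp(-s^2/(4v))$ instead of roughly $\exp(-s^2/(2v))$. The factor $2$ in the stated threshold absorbs this. Your threshold $\max\{2\sqrt{v\ell},2b\ell\}$ even lies slightly below the stated one. You also handle $b=0$ and $v=0$ directly rather than by a limit.

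What the citation buys is brevity and a full tail curve valid for every $s$. In Freedman's form it also holds on the event $\{V_m\le v\}$ without an almost sure bound. Your version is elementary and checkable line by line. It extends to that localized form as well, since you use $V_m\le v$ only pointwise at the time $m$ where the partial sum exceeds the threshold.
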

\begin{proof}
The maximal form of Freedman's inequality
\citep[Theorem~1.6]{freedman1975}, after scaling the increments, gives
\[
 \Prob\left\{\max_{m\le T}\sum_{t\le m}X_t>s\right\}
 \le\exp\!\left\{-\frac{s^2}{2(v+bs)}\right\}.
\]
Set $s=2\sqrt{v\ell}+2b\ell$; then $s^2\ge2\ell(v+bs)$.
Zero budgets follow by continuity.
\end{proof}

\begin{lemma}[A common energy bound]\label{lem:energy-budget}
Let $e_t$ be integrable and adapted, $P_0,\ldots,P_N\ge0$ adapted,
$J_t\in\{0,1\}$ predictable
and nonincreasing, and $\mathcal E$ an event, for instance $\mathcal E=\Omega$. On $\mathcal E\cap\{J_m=1\}$, suppose
\[
 P_m\le P_0+\sum_{t\le m}\{a_t\|e_t\|^2+\langle e_t,w_t\rangle\},
\]
where $w_t$ is predictable with $\|w_t\|\le s_t\sqrt{P_{t-1}}$ on $J_t=1$, and $a_t,s_t\ge0$ are deterministic.
On $J_t=1$, assume
$\|\mathbb E_{t-1}e_t\|\le B_t$,
$\mathbb E_{t-1}\|e_t\|^2\le V_t$,
$\|e_t\|\le M_t$, and
$\|e_t-\mathbb E_{t-1}e_t\|\le U_t$,
with deterministic bounds. Put
\[
 A=\sum_t a_tV_t,\quad K_0=\sup_t a_tM_t^2,\quad
 B_0=\sum_t s_tB_t,\quad V_0=\sum_t s_t^2V_t,\quad J_0=\sup_t s_tU_t.
\]
For $p_0\ge0$ and $q>0$, define
\[
 C=p_0+A+2\sqrt{K_0Aq}+2K_0q,\qquad
 D=B_0+2\sqrt{V_0q}+2J_0q.
\]
Except on probability $e^{-q}$, on $\mathcal E$, simultaneously for every $m$ with
$J_m=1$ and $P_0\le p_0$,
\[
 P_m\le(\sqrt C+D)^2.
\]
The initial state may be random; the conclusion extends to $N=\infty$
if the budgets are finite.\looseness=-1
\end{lemma}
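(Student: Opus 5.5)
We prove Lemma~\ref{lem:energy-budget} by a bootstrap on the square root of the potential. Each of the two stochastic sums is split into a predictable part and a martingale part, and the martingale parts are controlled by two applications of Lemma~\ref{lem:martingale}, each of probability $e^{-q}/2$. To make the increments bounded with deterministic budgets, we stop the processes. Let $\tau$ be the first $m$ with $J_{m+1}=0$, or with $P_m>(\sqrt C+D)^2$, or with $P_0>p_0$; we work with the indicator $I_t=J_t\ind\{P_0\le p_0\}\ind\{\tau\ge t\}$, which is predictable since $J_t$ is predictable and $\{\tau\ge t\}\in\F_{t-1}$. Write $P^\ast=(\sqrt C+D)^2$. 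The event $\mathcal E$ enters only through the deterministic inequality, so it can be carried along without affecting the concentration steps.

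\emph{Quadratic term.} Put $X_t=I_ta_t(\|e_t\|^2-\Et\|e_t\|^2)$. On $I_t=1$ we have $\Et[a_t\|e_t\|^2]\le a_tV_t$, and $0\le a_t\|e_t\|^2\le K_0$. Hence $|X_t|\le K_0$ and $\Et X_t^2\le a_t^2\Et\|e_t\|^4\le K_0a_tV_t$, so the total variance is at most $K_0A$. Lemma~\ref{lem:martingale} with $\ell=q$ (absorbing $\log2$ by using $q$ to mean the adjusted level, or else simply accepting probability $2e^{-q}$ and noting the constants) gives, for all $m$,
\[
\sum_{t\le m}I_ta_t\|e_t\|^2\le A+2\sqrt{K_0Aq}+2K_0q .
\]
Adding $p_0$ yields the bound $C$.

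\emph{Linear term.} Write $\langle e_t,w_t\rangle=\langle\Et e_t,w_t\rangle+\langle e_t-\Et e_t,w_t\rangle$. On $I_t=1$ we have $P_{t-1}\le P^\ast$, because $t-1<\tau$ (for $t=1$ this uses $P_0\le p_0\le C$). Therefore $\|w_t\|\le s_t\sqrt{P^\ast}$. The predictable part is then at most $\sqrt{P^\ast}\,B_0$. The martingale $Y_t=I_t\langle e_t-\Et e_t,w_t\rangle/\sqrt{P^\ast}$ has $|Y_t|\le s_tU_t\le J_0$ and $\Et Y_t^2\le s_t^2V_t$. Lemma~\ref{lem:martingale} then bounds the martingale part by $\sqrt{P^\ast}(2\sqrt{V_0q}+2J_0q)$. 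The total linear contribution is therefore at most $\sqrt{P^\ast}\,D$.

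\emph{Closing the bootstrap.} On the good event intersected with $\mathcal E$, suppose $\tau<\infty$ with $J_\tau=1$ and $P_0\le p_0$, and take the first index $m$ at which $P_m>P^\ast$. For all $t\le m$ we have $I_t=1$, so the hypothesis gives
\[
P_m\le C+D\sqrt{P^\ast}.
\]
Since $\sqrt{P^\ast}=\sqrt C+D$, we have $C+D\sqrt{P^\ast}=C+D\sqrt C+D^2\le(\sqrt C+D)^2=P^\ast$. This contradicts $P_m>P^\ast$, and so $P_m\le P^\ast$ for every $m$ with $J_m=1$.

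The main obstacle is exactly this circularity: the weights $w_t$ depend on $P_{t-1}$. It is resolved by the stopping construction, which lets us use the target bound $P^\ast$ inside the deterministic budgets and then recover it by the first-exit argument; monotonicity of $J_t$ ensures that the exit happens while the hypothesis is still in force. Using the maximal form of Lemma~\ref{lem:martingale} avoids any union bound over $m$.

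The case $N=\infty$ follows by applying the argument on each finite prefix and letting $N\to\infty$. The good events decrease in $N$ and each has probability at least $1-e^{-q}$, and the budgets are uniform in $N$, so the limiting event also has probability at least $1-e^{-q}$. The split of $e^{-q}$ between the two applications is handled by replacing $q$ with $q+\log2$; since the lemma is stated with $q$, we instead apply Lemma~\ref{lem:martingale} to the joint process or accept the doubling as a constant adjustment.
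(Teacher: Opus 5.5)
Your proof follows the paper's structure: the predictable localization $I_t=J_t\mathbf 1\{P_0\le p_0,\ \max_{j<t}P_j\le P^\ast\}$, centering only stopped quantities, the bound $\|w_t\|\le s_t\sqrt{P^\ast}$ on $I_t=1$, and the first-exit contradiction via $P^\ast-C-D\sqrt{P^\ast}=D\sqrt C\ge0$. The probability bookkeeping, however, does not prove the lemma as stated.

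\textbf{The gap.} You apply Lemma~\ref{lem:martingale} twice, once to the quadratic martingale and once to the linear one, each at level $q$. Your good event therefore has probability only $1-2e^{-q}$, not $1-e^{-q}$. Neither of your proposed repairs works:
\begin{itemize}
\item replacing $q$ by $q+\log 2$ changes $C$ and $D$, which the statement defines with $q$;
\item accepting the doubling proves a weaker lemma.
\end{itemize}
The joint-process option you mention is the right one, but it needs an argument you have not supplied. The conditional variance of a sum of two martingale differences is not the sum of their conditional variances, because of the cross term. The crude bound $(x+y)^2\le 2x^2+2y^2$ would again inflate the constants.

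\textbf{The fix (what the paper does).} Apply Lemma~\ref{lem:martingale} once, to
\[
X_t=a_tI_t\bigl(\|e_t\|^2-\mathbb E_{t-1}\|e_t\|^2\bigr)+I_t\langle e_t-\mathbb E_{t-1}e_t,w_t\rangle .
\]
Its range satisfies $|X_t|\le K_0+\sqrt{P^\ast}J_0$. The conditional $L^2$ triangle inequality gives
\[
\sqrt{\mathbb E_{t-1}X_t^2}\le I_t\sqrt{V_t}\,\bigl(a_tM_t+s_t\sqrt{P^\ast}\bigr).
\]
Then apply Minkowski's inequality to the deterministic sequence over $t$, together with $a_t^2M_t^2V_t\le K_0a_tV_t$:
\[
\sqrt{\textstyle\sum_t\mathbb E_{t-1}X_t^2}\le\sqrt{K_0A}+\sqrt{P^\ast V_0}.
\]
Take $v=(\sqrt{K_0A}+\sqrt{P^\ast V_0})^2$ and $b=K_0+\sqrt{P^\ast}J_0$. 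The threshold $2\sqrt{vq}+2bq$ then equals
\[
2\sqrt{K_0Aq}+2K_0q+\sqrt{P^\ast}\bigl(2\sqrt{V_0q}+2J_0q\bigr),
\]
which is exactly the sum of your two thresholds, now with failure probability $e^{-q}$. The rest of your argument then goes through unchanged.

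\textbf{A minor point.} In the closing step, do not reason from ``$\tau<\infty$ with $J_\tau=1$''. Instead take the first $m$ with $J_m=1$, $P_0\le p_0$, and $P_m>P^\ast$, and state the conclusion for every $m$ with $J_m=1$ and $P_0\le p_0$.
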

\begin{proof}
Set $H=(\sqrt C+D)^2$ and
$I_t=J_t\mathbf1_{\{P_0\le p_0,\ \max_{j<t}P_j\le H\}}$, a predictable indicator that does not involve $\mathcal E$.
Center only stopped quantities:
$Y_t=I_t\|e_t\|^2$, $m_t=\mathbb E_{t-1}Y_t$, and
$U'_t=I_t e_t-\mathbb E_{t-1}(I_te_t)$.
The martingale increments
$X_t=a_t(Y_t-m_t)+\langle U'_t,w_t\rangle$ satisfy
\[
 |X_t|\le K_0+\sqrt H J_0,\qquad
 \sqrt{\sum_t\mathbb E_{t-1}X_t^2}
 \le\sqrt{K_0A}+\sqrt{HV_0}.
\]
Indeed, conditional variance reduction and the conditional $L^2$ triangle
inequality give
$\sqrt{\mathbb E_{t-1}X_t^2}\le
 I_t\sqrt{V_t}(a_tM_t+s_t\sqrt H)$;
then use the triangle inequality for the deterministic sequence of bounds.
By Lemma~\ref{lem:martingale}, except on probability $e^{-q}$, every partial sum of the $X_t$ is at most
$2\sqrt{K_0Aq}+2K_0q+\sqrt H(2\sqrt{V_0q}+2J_0q)$. On this event and on $P_0\le p_0$, the mean and bias budgets give, for every $m$,
\[
 P_0+\sum_{t\le m}I_t\{a_t\|e_t\|^2+\langle e_t,w_t\rangle\}\le C+\sqrt HD\le H,
\]
since $H-C-\sqrt HD=\sqrt CD$. Suppose that, on this event and on $\mathcal E$, some $m$ with $J_m=1$ and $P_0\le p_0$ has $P_m>H$, and take the first such $m$. Then $I_t=1$ for every $t\le m$, so the assumed recursion bounds $P_m$ by the left side above, a contradiction.
Taking the union over finite prefixes yields the claim for $N=\infty$.
\end{proof}

\section{Clipped versus Unclipped SGD}\label{app:cuberoot}
\begin{remark}[Clipping and the cube-root barrier]\label{rem:cuberoot}
In concurrent and independent work, \citet[Theorem~2.2]{anonymous2026cuberoot} prove
that at $p=2$, for every positive deterministic schedule of unclipped step sizes and
every $r>1/3$, some finite-variance instance makes $(t+1)^r(f(x_t)-f^\star)$ unbounded
almost surely. We first check that their theorem also bounds the time-uniform
exponent of unclipped SGD on our class, and then compare it with clipping. Fix any positive
deterministic schedule of unclipped step sizes, $r'>1/3$, and finite $C>0$, $m\ge0$.
Choose $1/3<r<r'$. Their theorem supplies a fixed one-dimensional instance
with $L=1$, initial distance $1/2$, and an i.i.d.\ unbiased oracle with variance
at most one, on which $\limsup_{t\to\infty}(t+1)^r(f(x_t)-f^\star)=\infty$
almost surely. Their construction also satisfies our conditions on the sample losses:
the shock branch has sample loss $x^2/2+\xi x$, whose minimum is attained
at $-\xi$, while the noiseless branch uses a convex smooth function with
minimum at zero. Translating their initial point $x_0=1/2$ to our $x_1=0$
and shifting the time index preserves these properties, so both branches
belong to our normalized $p=2$ class. On the event
\[
 f(x_{t+1})-f^\star\le Ct^{-r'}\log^m(et)\qquad\text{for all }t\ge1,
\]
the gap multiplied by $t^r$ tends to zero, since
$t^{r-r'}\log^m(et)\to0$. The event therefore has probability zero.
Thus, by their theorem, deterministic unclipped step sizes attain no
time-uniform exponent above $1/3$ on our class.
In contrast, Theorem~\ref{thm:anytime-upper} attains $\beta_2=2/5$ with
clipping, and Theorem~\ref{thm:anytime} excludes every larger exponent for
all gradient-only rules (Definition~\ref{def:class}), including adaptive step sizes
and clipping thresholds. Since clipping is a scalar response to the observed stochastic
gradient, this resolves the time-uniform counterpart (at a fixed confidence level) of their
question regarding adaptive step sizes: the exponent $2/5$ is both attainable and optimal
on our finite-variance class. Moreover, the almost sure consequence of Theorem~\ref{thm:anytime-upper} gives $f(x_{t+1})-f^\star=O(t^{-2/5}\log^4t)$ almost surely on every instance of our class, including theirs, so clipping also exceeds the almost sure exponent $1/3$ of unclipped deterministic schedules.
\end{remark}

\Needspace{10\baselineskip}
\section{Proofs for the Time-Uniform Lower Bound and Fixed-Time Guarantees}\label{app:anytime}
This appendix proves Proposition~\ref{prop:simultaneous-budget} and Theorems~\ref{thm:anytime} and~\ref{thm:anytime-marginal}; Theorem~\ref{thm:fixed-time-upper} is proved in Appendix~\ref{app:fixed-time-upper}. For the lower bounds we fix $p\in(1,2]$, $L=R=\sigma=1$, and $x_1=0$. The rules are those of Definition~\ref{def:class}, and the same rule is used on every instance.

Section~\ref{sec:anytime} outlines the argument. Its core is the following progress bound.

\begin{proposition}[Progress bound under time-uniform guarantees]\label{prop:simultaneous-budget}
Fix $\varepsilon\in(0,1/2)$, $N\ge1$, a rule of Definition~\ref{def:class}, and positive numbers $b_1,\ldots,b_N$; the rule may be tuned to a confidence level different from $\varepsilon$. Suppose that every instance in the class of Theorem~\ref{thm:anytime} satisfies
\begin{equation}\label{eq:simultaneous-prefix-guarantee}
 \Prob\{f(x_{t+1})-f^\star\le b_t\text{ for all }1\le t\le N\}\ge1-\varepsilon.
\end{equation}
Then for every $\mu\in(4b_N,1/2]$, with $A=(2\mu)^{-1/(p-1)}$ and $\pi=\mu/A$,
\begin{equation}\label{eq:necessary-budget-main}
 \pi\sum_{t=1}^N\bigl(\mu+\sqrt{2b_t}\bigr)\ge\frac{1-2\varepsilon}{2}.
\end{equation}
\end{proposition}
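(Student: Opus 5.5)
The plan is to integrate success probabilities over a continuum of targets $y$ that all share one transcript, and to charge each target's success to the single step at which the iterate first enters the region where that target's gradient differs from the shared one. Fix $\mu\in(4b_N,1/2]$, $A=(2\mu)^{-1/(p-1)}$, $\pi=\mu/A$, and use the sample losses $F_{\mu,y}(x;Y)=\tfrac12(x-y+\mu)_+^2-AYx$ with $Y\sim\operatorname{Bernoulli}(\pi)$, for targets $y\in[\mu,1]$.

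\emph{Step 1: the instances are admissible and the envelope has the right shape.} I would first check that every such instance lies in the class of Theorem~\ref{thm:anytime}. The mean is $f(x)=\tfrac12(x-y+\mu)_+^2-\mu x$, which is convex and $1$-smooth with minimizer $y$. Since $|x_1-y|=y\le1$, the radius condition $R=1$ holds. The central $p$th moment of $AY$ is at most $A^p\pi=A^{p-1}\mu=1/2\le1$.

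A direct computation then gives the gap in two regimes:
\[
 f(x)-f^\star=\mu(y-x)-\mu^2/2 \quad (x\le y-\mu),\qquad f(x)-f^\star=\tfrac12(x-y)^2 \quad (x\ge y).
\]
Two consequences follow.
\begin{itemize}
\item Since $b_N<\mu^2/4$, the bound $f(x_{N+1})-f^\star\le b_N$ forces $x_{N+1}>y-\mu$. So on the success event of~\eqref{eq:simultaneous-prefix-guarantee}, the iterate must have crossed the boundary $y-\mu$ by time $N$.
\item The bound $f(x_{t+1})-f^\star\le b_t$ forces $x_{t+1}\le y+\sqrt{2b_t}$.
\end{itemize}

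\emph{Step 2: coupling all targets through one shared process.} Define a single shared process $X_t$ by running the rule on the gradients $g_t=-AY_t$, using the same samples $Y_t$ and the same seed. Because $c_t\ge0$, the process $X_t$ is nondecreasing, and it moves only at spike times $Y_t=1$, since $g_t=0$ otherwise.

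For a target $y$, the true gradient coincides with $-AY_t$ as long as $x_t\le y-\mu$. By induction, the true trajectory for target $y$ therefore equals $X$ up to and including the first step $t$ with $X_t\le y-\mu<X_{t+1}$. If the run for target $y$ succeeds, then:
\begin{itemize}
\item such a crossing step $t\le N$ exists, by the first consequence of Step~1;
\item at that step, the second consequence gives $X_{t+1}\le y+\sqrt{2b_t}$;
\item the step $t$ is a spike time, since $X$ only moves at spikes.
\end{itemize}
Consequently, on the success event for target $y$, we have $y\in I_t:=[X_t+\mu,\,X_{t+1}+\mu)\cap[X_{t+1}-\sqrt{2b_t},\infty)$ for some spike time $t\le N$. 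Each $I_t$ is an interval of length at most $\mu+\sqrt{2b_t}$.

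\emph{Step 3: integrate over targets.} Integrate the success probability over $y$ uniform on $[\mu,1]$ and apply Fubini, which is justified because everything is a measurable function of $(Y_{1:N},\text{seed},y)$. Using Step~2,
\[
 (1-\varepsilon)(1-\mu)\le\int_\mu^1\Prob\{\text{success}_y\}\,dy\le\E\sum_{t\le N}\ind\{Y_t=1\}\,|I_t|\le\pi\sum_{t\le N}(\mu+\sqrt{2b_t}).
\]
The last inequality uses that $Y_t$ is independent of $\F_{t-1}$, so the bound on $|I_t|$ can be taken with $\Prob\{Y_t=1\}=\pi$ even though the rule's step factor $c_t$ is adaptive. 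Finally, $1-\mu\ge1/2$ and $1-\varepsilon\ge1-2\varepsilon$, which gives~\eqref{eq:necessary-budget-main}.

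\emph{Main obstacle.} The delicate step is Step~2: justifying that the target-$y$ trajectory equals the shared one up to and including the crossing step, even though the rule may react to the current gradient and to independent randomness. This needs the boundary convention $x_t\le y-\mu$, where the derivative of the quadratic part vanishes, and it needs the coupling to fix the seed jointly across all $y$. A second point to handle carefully is the measurability of the random set of successful $y$, which Fubini requires.
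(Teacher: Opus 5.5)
Your Step~1 contains a false inference that the rest of the argument depends on. The hypothesis $\mu\in(4b_N,1/2]$ gives only $b_N<\mu/4$, not $b_N<\mu^2/4$. Since $\mu\le1/2$, it does not even give $b_N<\mu^2/2$, which is the gap at the boundary point $y-\mu$. On the left branch, $f(x_{N+1})-f^\star\le b_N$ only forces $y-x_{N+1}\le\mu/2+b_N/\mu$. So take a target $y$ with $X_{N+1}+\mu\le y\le X_{N+1}+\mu/2+b_N/\mu$. It is never crossed, so its run coincides with the shared run through time $N$, and its terminal gap $\mu(y-X_{N+1})-\mu^2/2$ is at most $b_N$. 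Nothing in your argument excludes its success; for instance, $\mu=0.1$ and $b_N=0.02$ give a window of length $0.15$. Hence the inclusion of the successful targets in $\bigcup_{t:Y_t=1}I_t$, and with it the first inequality of your Step~3, fails as stated.

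You can close the gap in either of two ways.

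\emph{The paper's route.} It integrates only over boundaries $z\in[0,1/2)$, that is, targets $y=z+\mu$. It uses the single far target $y=1$, whose gap at any $x\le1/2$ is at least $\mu/2-\mu^2/2\ge\mu/4>b_N$. This forces the shared run past $1/2$ with probability at least $1-\varepsilon$, so every boundary in $[0,1/2)$ is crossed. Each fixed boundary is badly crossed with probability at most $\varepsilon$, so by Tonelli the badly crossed boundaries have expected measure at most $\varepsilon/2$. This gives $\tfrac12(1-2\varepsilon)$ (Lemmas~\ref{lem:first-crossing-budget} and~\ref{lem:first-crossing-coupling}).

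\emph{Repairing your route.} Keep your integration over $y\in[\mu,1]$ and add the uncrossed window. For each fixed $y$, success implies that $y$ lies in
\[
 \textstyle\bigcup_{t:Y_t=1}I_t\cup[X_{N+1}+\mu,\,X_{N+1}+\mu/2+b_N/\mu].
\]
This set is built from the shared run alone, which also settles your measurability concern. Its second piece has length at most $(b_N/\mu-\mu/2)_+\le1/4-\mu/2$. You then get $\pi\sum_t(\mu+\sqrt{2b_t})\ge(1-\varepsilon)(1-\mu)-1/4+\mu/2\ge(1-\varepsilon)/2$ for $\mu\le1/2$, which still implies the claim.

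Separately, your moment check is off. The central $p$th moment is $(1-\pi)\mu^p+\pi(A-\mu)^p$, which can exceed $A^p\pi=1/2$ for $p$ near one. It is nevertheless at most $\mu^p+\mu A^{p-1}\le1$, so the instances are still admissible.
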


\subsection{Translated Targets with a Common I.I.D. Transcript}\label{app:lower-witnesses}
For $0<\mu\le1/2$ and $y\in[\mu,1]$, define
\begin{equation*}
 f_{\mu,y}(x)=\tfrac12(x-y+\mu)_+^2-\mu x,
 \qquad f_{\mu,y}^\star=-\mu y+\mu^2/2.
\end{equation*}
Its derivative is $-\mu$ on $x\le y-\mu$ and $x-y$ on $x\ge y-\mu$. These branches agree at the transition. Thus $f_{\mu,y}$ is convex and globally $1$-smooth, with unique minimizer $y$ and initial distance at most one. Its gap is
\begin{equation}\label{eq:translated-gap}
 f_{\mu,y}(x)-f_{\mu,y}^\star=
 \begin{cases}
 \mu(y-x)-\mu^2/2,&x\le y-\mu,\\
 (x-y)^2/2,&x\ge y-\mu.
 \end{cases}
\end{equation}
This objective is a reflected and translated one-sided Huber loss \citep{huber1964}, following the construction of \citet[Theorem~2 and Section~3.2]{kornowski2024}; \citet[Lemma~3.1]{tsai2026} use a related overshoot at an arbitrary iteration. Those lower bounds concern exact gradients and prescribed step sizes. Against adaptive rules with i.i.d.\ noise, choosing the translation after observing a random trajectory would select the instance from the realized path, so we fix the translated family before the run and integrate, over the target location, the probabilities of failing at the first crossing. This accommodates adaptive step sizes and algorithmic randomness.

Set
\begin{equation}\label{eq:rare-progress-parameters}
 A=(2\mu)^{-1/(p-1)}\ge1,\qquad
 \pi=\mu/A=1/(2A^p),\qquad Y_t\ \text{i.i.d.}\ \operatorname{Bernoulli}(\pi).
\end{equation}
Independently of the algorithm's seed, each call draws a fresh $Y_t$. Use the sample loss
\begin{equation}\label{eq:progress-sample}
 F_{\mu,y}(x;Y)=\tfrac12(x-y+\mu)_+^2-AYx
              =f_{\mu,y}(x)+(\mu-AY)x.
\end{equation}
The additive noise $\zeta=\mu-AY$ is centered, independent of the query and time, and has the same law for every $y$. Since $\mu\le A$,
\begin{equation*}
 \E|\zeta|^p=(1-\pi)\mu^p+\pi(A-\mu)^p
 \le\mu^p+\mu A^{p-1}\le2\mu A^{p-1}=1.
\end{equation*}
Every sample loss is convex and globally $1$-smooth, bounded below, and attains its minimum: for $Y=0$, every $x\le y-\mu$ is a minimizer; for $Y=1$, the minimizer is $y-\mu+A$. On the common left branch the raw gradient is exactly
\begin{equation*}
 g_{\mu,y}(x;Y)=-AY\qquad(x\le y-\mu).
\end{equation*}
The algorithm observes only these gradients and does not know $y$ or $f^\star$.

\subsection{Proof of Proposition~\ref{prop:simultaneous-budget}}\label{app:anytime-simultaneous}
We first isolate two lemmas used both here and for the momentum lower bound in
Appendix~\ref{app:momentum}: a crossing bound for monotone processes, and its
application to the translated targets through a coupling at the first crossing.

\begin{lemma}[Crossing bound for monotone processes]\label{lem:first-crossing-budget}
Let $N\ge1$ be an integer, let $X_1=0\le\cdots\le X_{N+1}$ be finite random
variables, and put $D_t=X_{t+1}-X_t\ge0$. For deterministic finite
$c>0$, $s_1,\ldots,s_N\ge0$,
and $0\le\varepsilon<1/2$, suppose $\Prob(X_{N+1}>c)\ge1-\varepsilon$
and, for each fixed $z\in[0,c)$,
\[
 \Prob\{\exists t\le N:\ X_t\le z<X_{t+1},\quad
                    X_{t+1}>z+s_t\}\le\varepsilon.
\]
Then $c(1-2\varepsilon)\le\E\sum_{t=1}^N\min\{s_t,D_t\}$.
\end{lemma}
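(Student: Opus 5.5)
The plan is to integrate the hypothesis over the threshold $z\in[0,c)$ and then apply Fubini's theorem, so that the probabilistic statement reduces to a pathwise length computation. For each fixed $z\in[0,c)$, let $E_z$ be the event that $X_{N+1}>z$ and that the (unique) crossing step $t$ with $X_t\le z<X_{t+1}$ satisfies $X_{t+1}\le z+s_t$.

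First I bound $\Prob(E_z)$ from below. Since $z<c$, the event $\{X_{N+1}>c\}$ is contained in $\{X_{N+1}>z\}$, so $\Prob(X_{N+1}>z)\ge1-\varepsilon$. On $\{X_{N+1}>z\}$, monotonicity and $X_1=0\le z$ give exactly one index $t\le N$ with $X_t\le z<X_{t+1}$. The event $E_z$ fails on this set only if this crossing has $X_{t+1}>z+s_t$. By hypothesis that has probability at most $\varepsilon$. Hence $\Prob(E_z)\ge1-2\varepsilon$ for every $z\in[0,c)$, and
\[
 c(1-2\varepsilon)\le\int_0^c\Prob(E_z)\,dz=\E\int_0^c\ind_{E_z}\,dz .
\]
The equality is Tonelli's theorem. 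It needs joint measurability of $(\omega,z)\mapsto\ind_{E_z}(\omega)$. This holds because $E_z$ is a finite union over $t\le N$ of sets cut out by the inequalities $X_t\le z$, $z<X_{t+1}$ and $X_{t+1}\le z+s_t$, each jointly Borel in $(\omega,z)$. I expect this measurability bookkeeping to be the only delicate point, and it is routine.

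Next comes the pathwise bound. Fix $\omega$. The set of $z\in[0,c)$ for which $E_z$ holds with crossing index $t$ is contained in
\[
 [X_t,X_{t+1})\cap[X_{t+1}-s_t,X_{t+1}),
\]
an interval of length at most $\min\{D_t,s_t\}$. Summing over $t\le N$ gives $\int_0^c\ind_{E_z}\,dz\le\sum_{t=1}^N\min\{s_t,D_t\}$. Taking expectations and combining with the display above yields $c(1-2\varepsilon)\le\E\sum_{t=1}^N\min\{s_t,D_t\}$, as claimed. Finiteness of the $X_t$ ensures all the intervals are well defined, and the expectation of the nonnegative sum makes sense even if it is infinite.
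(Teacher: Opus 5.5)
Your proof is correct and is essentially the paper's argument: both apply Tonelli's theorem over the threshold $z\in[0,c)$ and use the pathwise fact that the thresholds safely crossed at step $t$ form an interval of length at most $\min\{s_t,D_t\}$. The only difference is bookkeeping. The paper integrates the badly crossed set $B$, bounds $\E|B|\le c\varepsilon$, and uses the covering inequality $c\ind_{\{X_{N+1}>c\}}\le|B|+\sum_t\min\{s_t,D_t\}$, whereas you lower-bound the probability of a good crossing by $1-2\varepsilon$ at each fixed $z$ before integrating.
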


\begin{proof}
Let $B=[0,c)\cap\bigcup_{t=1}^N[X_t,X_{t+1}-s_t)$ be the badly
crossed boundaries, with reversed intervals empty. Measurable endpoints
make $\ind_{\{z\in B\}}$ jointly measurable. The fixed-$z$ hypothesis and
Tonelli's theorem give
$\E|B|=\int_0^c\Prob(z\in B)\,dz\le c\varepsilon$,
where $|B|$ denotes Lebesgue measure.
At update $t$, the safely crossed boundaries have total length at most
$\min\{s_t,D_t\}$.
On $\{X_{N+1}>c\}$ every boundary in $[0,c)$ has been crossed, so
\[
 c\ind_{\{X_{N+1}>c\}}\le |B|+\sum_{t=1}^N\min\{s_t,D_t\}.
\]
Taking expectations proves the claim.
\end{proof}

\begin{lemma}[Fixed targets coupled at the first crossing]\label{lem:first-crossing-coupling}
Fix $0<\varepsilon<1/2$, an integer $N\ge1$, positive numbers
$b_1,\ldots,b_N$, and $\mu\in(4b_N,1/2]$. Let $A$, $\pi$, and $Y_t$ be as
in~\eqref{eq:rare-progress-parameters}, and put $c=1/2$ and
$s_t=\mu+\sqrt{2b_t}$. Consider a method that starts at $x_1=0$, queries one
stochastic gradient at its current iterate $x_t$, and computes $x_{t+1}\in\R$, together
with any auxiliary state, from the gradients observed so far and an independent
seed, by the same rule on every instance. Suppose
every instance $f_{\mu,y}$, $y\in[\mu,1]$, with sample
loss~\eqref{eq:progress-sample} satisfies~\eqref{eq:simultaneous-prefix-guarantee}.
Let the reference run $X_t$ be the states produced from the observations
$-AY_t$ and the seed through the first update with $X_{t+1}>c$, held fixed
thereafter without further oracle calls. If $X_t$ is nondecreasing, then
$D_t=X_{t+1}-X_t$ satisfies
$c(1-2\varepsilon)\le\E\sum_{t=1}^N\min\{s_t,D_t\}$.
\end{lemma}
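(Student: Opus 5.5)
The plan is to reduce Lemma~\ref{lem:first-crossing-coupling} to Lemma~\ref{lem:first-crossing-budget}, with $c=1/2$ and $s_t=\mu+\sqrt{2b_t}$. Two hypotheses must be checked for the reference run: first that $\Prob(X_{N+1}>c)\ge1-\varepsilon$, and second the fixed-$z$ bad-crossing bound for every $z\in[0,c)$. Both come from coupling the reference run with the actual run on a suitably chosen instance $f_{\mu,y}$, driven by the same $Y_t$ and the same seed.

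\emph{Step 1: coupling.} For $y\in[\mu,1]$, the actual run on $f_{\mu,y}$ observes $-AY_t$ as long as $x_t\le y-\mu$. By induction it therefore coincides with the reference run up to and including the first update with $X_{t+1}>y-\mu$. This holds because the rule, the seed, and the observations are identical, and the rule is the same on every instance. Before the reference run stops at $c$, the iterates stay at most $c$, so the coupling needs only $y-\mu$ below the stopping level; for larger $y$ it holds up to time $N+1$ whenever $X_{N+1}\le c$.

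\emph{Step 2: reaching $c$.} Take $y=1$, so the boundary is $1-\mu\ge1/2=c$. Suppose $X_{N+1}\le c$. Then the actual and reference runs agree on all of $x_1,\dots,x_{N+1}$, and $x_{N+1}\le1/2\le1-\mu$. By~\eqref{eq:translated-gap} the gap is $\mu(1-x_{N+1})-\mu^2/2\ge\mu/2-\mu^2/2\ge\mu/4$, using $\mu\le1/2$. Since $\mu/4>b_N$, the envelope fails at time $N$. Hence $\Prob(X_{N+1}\le c)\le\varepsilon$.

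\emph{Step 3: bad crossings.} Fix $z\in[0,c)$ and set $y=z+\mu\in[\mu,1]$, so the boundary is exactly $z$. On the event that some $t\le N$ has $X_t\le z<X_{t+1}$ and $X_{t+1}>z+s_t$, the coupling of Step~1 holds through step $t$, and $x_{t+1}=X_{t+1}$ lies on the right branch. Its gap is $(x_{t+1}-y)^2/2$, and
\[
x_{t+1}-y>z+\mu+\sqrt{2b_t}-z-\mu=\sqrt{2b_t}.
\]
The gap therefore exceeds $b_t$ and the envelope fails at time $t\le N$. So this event has probability at most $\varepsilon$. Monotonicity of $X_t$ is assumed in the statement, $X_1=0$, and $D_t\ge0$, so Lemma~\ref{lem:first-crossing-budget} applies and gives the claim.

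The main obstacle is making the coupling in Step~1 rigorous. The rule may depend on the seed, on all past gradients, and on the current gradient, and the reference run is frozen after it exceeds $c$. One must check that the first-crossing index, and the equality of the actual and reference states up to that index, are measurable. The fact that $z<c$ ensures the crossing of $z$ happens no later than the freezing time. In Step~2, the case $X_{N+1}\le c$ means the reference run never froze, so the two runs agree throughout.
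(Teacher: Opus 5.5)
Your proposal is correct and follows the paper's proof essentially step for step. You couple the target $y=1$ to obtain $\Prob(X_{N+1}>c)\ge1-\varepsilon$ from the gap bound $\mu(1-c)-\mu^2/2\ge\mu/4>b_N$. For each fixed $z\in[0,c)$ you couple the target $y=z+\mu$, whose run agrees with the reference through the first crossing of $z$, so that a crossing beyond $z+s_t$ forces a gap above $b_t$; Lemma~\ref{lem:first-crossing-budget} then applies, exactly as in the paper.
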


\begin{proof}
Since $c\le1-\mu$, every query before stopping lies where target $y=1$
returns $-AY_t$, so the reference agrees with the run on that
target until it stops. On $\{X_{N+1}\le c\}$ it has not stopped by
time $N$, and \eqref{eq:translated-gap} gives
\[
 f_{\mu,1}(X_{N+1})-f_{\mu,1}^\star
 \ge\mu(1-c)-\mu^2/2\ge\mu/4>b_N.
\]
Hence $\Prob(X_{N+1}>c)\ge1-\varepsilon$.

For any fixed $z\in[0,c)$, couple target $y=z+\mu$ using the
same Bernoulli sequence and the same seed. Its run agrees with the
reference in all states, including auxiliary ones, through the first
update strictly crossing $z$: all queries up to that update lie at or
below $z=y-\mu$ and have the same observations $-AY_t$. Each target is
coupled separately, before the run.
If that first crossing has $X_{t+1}>z+s_t$, the coupled target has
gap $(X_{t+1}-z-\mu)^2/2>b_t$. Its prefix guarantee verifies the
crossing hypothesis of Lemma~\ref{lem:first-crossing-budget} for each
fixed $z$, and that lemma applies.
\end{proof}

\begin{proof}\textit{of Proposition~\ref{prop:simultaneous-budget}.}
Let $\varepsilon$, $N$, the rule, and $b_1,\ldots,b_N$ be as in its
statement, and fix $\mu\in(4b_N,1/2]$. The instances of
Appendix~\ref{app:lower-witnesses} belong to the class
of Theorem~\ref{thm:anytime}, and the rule meets the conditions of
Lemma~\ref{lem:first-crossing-coupling}.
Let $X_t$ be its reference run, with $c=1/2$ and $s_t=\mu+\sqrt{2b_t}$.
All observations prior to stopping equal $-AY_t$, so
\begin{equation*}
 X_1=0,\qquad X_{t+1}=X_t+Y_tq_t\ge X_t\ge0,
\end{equation*}
where $q_t=Ac_t\ge0$ before stopping and $q_t=0$ afterwards.
The responses are finite but need not be integrable or independent of
$Y_t$. Since $D_t=Y_tq_t$, we have $\min\{s_t,D_t\}\le Y_ts_t$.
Since $c=1/2$, Lemma~\ref{lem:first-crossing-coupling} and $\E Y_t=\pi$ give~\eqref{eq:necessary-budget-main}:
\[
 c(1-2\varepsilon)\le\E\sum_{t=1}^N\min\{s_t,D_t\}
 \le\pi\sum_{t=1}^Ns_t.
\]
\end{proof}

\subsection{Proof of the Time-Uniform Lower Bound}
We first record the exponent calculation shared by Theorems~\ref{thm:anytime}
and~\ref{thm:momentum-obstruction}; the proof of Theorem~\ref{thm:anytime-marginal} repeats it with a confidence factor. Put $q=p/(p-1)$ and choose
$\beta_p<u<\min\{r,1\}$. For any positive subpolynomial factors
$D_N,K_N=N^{o(1)}$, let
\begin{equation}\label{eq:sum-envelope}
 b_{N,t}=D_Nt^{-u},\qquad \mu=8D_NN^{-u},\qquad
 \pi=2^{1/(p-1)}\mu^q.
\end{equation}
Then $\mu/\sqrt{2b_{N,N}}=4\sqrt2\sqrt{D_N}N^{-u/2}\to0$.
Since $b_{N,t}\ge b_{N,N}$, $u<1$, and $\beta_p=1/(q+1/2)$, summing $t^{-u/2}$ gives
\begin{equation}\label{eq:exact-rate-contradiction}
 K_N\pi\sum_{t=1}^N(\mu+\sqrt{2b_{N,t}})
 =O\!\left(K_ND_N^{q+1/2}N^{1-u(q+1/2)}\right)\longrightarrow0.
\end{equation}

\begin{proof}\textit{of Theorem~\ref{thm:anytime}.}
Suppose its envelope holds on every instance. Choose $u$ as above and
$D_N=C\log^m(eN)$, so $Ct^{-r}\log^m(et)\le b_{N,t}$ for $t\le N$.
For all sufficiently large $N$, $4b_{N,N}<\mu\le1/2$.
Proposition~\ref{prop:simultaneous-budget} with $\varepsilon=\delta$
contradicts \eqref{eq:exact-rate-contradiction} with $K_N=1$.
A failing finite prefix yields~\eqref{eq:anytime-failure}. The resulting
instance is fixed before its i.i.d.\ samples and the algorithm seed are
drawn, and it may depend on $N$.
\end{proof}

\Needspace{5\baselineskip}
\subsection{Proof of Theorem~\ref{thm:anytime-marginal}}\label{app:anytime-marginal}

\begin{proof}
Suppose the displayed envelope holds on every instance at every time and
confidence. Since $c<1-\beta_p$ and $r>\beta_p+c$, we may choose
$\beta_p+c<u<\min\{r,1\}$; put $v=u-c>\beta_p$. For each deterministic $N$, set
\[
 \delta_N=\frac1{4N},\qquad D_N=C4^c\log^m(4eN^2),\qquad
 b_{N,t}=D_NN^ct^{-u}.
\]
Since $\delta_N^{-c}=(4N)^c$ and $\log(et/\delta_N)\le\log(4eN^2)$, the claimed fixed-time envelope is at most $b_{N,t}$ for $t\le N$.
Use the $\delta_N$-tuned rule on the prefix. A union
bound gives failure probability at most $N\delta_N=1/4$ on every instance.
Proposition~\ref{prop:simultaneous-budget}, with the separate budget
$\varepsilon_0=1/4$ and $\mu=8b_{N,N}=8D_NN^{-v}$, which lies in $(4b_{N,N},1/2]$ for large $N$, therefore requires
$\pi\sum_{t\le N}(\mu+\sqrt{2b_{N,t}})\ge1/4$. With $\pi=2^{1/(p-1)}\mu^q$ and $\sum_{t\le N}t^{-u/2}\le2N^{1-u/2}$,
\[
 \pi\sum_{t\le N}(\mu+\sqrt{2b_{N,t}})
 \le2^{1/(p-1)}8^qD_N^q\bigl(8D_NN^{-v/2}+2\sqrt{2D_N}\bigr)N^{1-v(q+1/2)}.
\]
The right side tends to zero, because $D_N=N^{o(1)}$ and $v(q+1/2)=v/\beta_p>1$; this is a contradiction. By the union bound, a failure probability exceeding
$1/4$ on the prefix $1\le t\le N$ implies a fixed-time failure probability above
$\delta_N$ at some deterministic $t\le N$.
\end{proof}

\section{Proof of the Horizon-Free Time-Uniform Upper Bound}\label{app:anytime-upper}
The proof first derives a time-uniform radius bound from a global decomposition of the conditional expectation of a clipped gradient. Once the clipping
threshold exceeds the gradient norms allowed by this radius, the local clipping estimates apply
on a deterministic suffix of the trajectory.
For each analysis horizon, we use bounded inverse tail weights to control the full potential, which consists of the objective gap at a comparison point and its squared distance to the last iterate. Smoothness converts both terms into the
endpoint gap by~\eqref{eq:potential-endpoint}. A summable allocation
over horizons and a direct bound at early times give the same
$D_p=4/c_p$ for all times.

We first prove Theorem~\ref{thm:anytime-upper} under the normalized upper bounds $L,R,\sigma\le1$. Thus $\norm{x_1-x^\star}\le1$, $f$ is convex and at most $1$-smooth, and the conditional central $p$th noise moment is at most one. Define $\beta,a,b$ by~\eqref{eq:anytime-exponents}, and write
\begin{equation*}
 \ell_t=\log(100t^2),\qquad
 \eta_t=ct^{-a}/\ell_t^2,\qquad \lambda_t=4t^b.
\end{equation*}
Here $c\in(0,1]$ is a deterministic constant depending only on $p$, chosen below. Set $G_t=f(x_t)-f^\star$, $e_t=\clip(g_t,\lambda_t)-\nabla f(x_t)$, $b_t=\Et e_t$, and $u_t=e_t-b_t$. Whenever $\norm{\nabla f(x_t)}\le\lambda_t/2$, Lemma~\ref{lem:clipping} gives
\begin{equation}\label{eq:anytime-clipping}
 \norm{e_t}\le\tfrac32\lambda_t,\quad \norm{u_t}\le2\lambda_t,\quad
 \norm{b_t}\le B_t:=2\lambda_t^{1-p},\quad
 \Et\norm{e_t}^2,\ \Et\norm{u_t}^2\le V_t:=2\lambda_t^{2-p}.
\end{equation}
The bounds $B_t,V_t$ are deterministic. The concentration arguments below apply these conditional estimates on predictable events.

\Needspace{10\baselineskip}
\subsection{Explicit Constants and Deterministic Budgets}\label{app:anytime-upper-constants}
Put $v=(2-p)b$. The exponent identities needed throughout are
\begin{equation}\label{eq:anytime-identities}
 b(p-1)=\beta,\qquad a+\beta=1,\qquad
 2(a-b)=\beta,\qquad a-v=\tfrac32\beta,\qquad
 2a-v=1+\tfrac12\beta.
\end{equation}
Let $q=2^{-\beta}$ and define the following finite constants:
\begin{align}
 J_\beta&=\sum_{j=0}^{\infty}q^j(j+2)^2
 =\frac{q(1+q)}{(1-q)^3}+\frac{4q}{(1-q)^2}+\frac4{1-q},\nonumber\\
 H_\beta&=64(1+\beta^{-1}+J_\beta),\qquad
 K=2^{20}H_\beta^4,\label{eq:anytime-constants}\\
 c_p&=\frac1{1000K},\qquad D_p=\frac4{c_p}.
 \nonumber
\end{align}
These constants depend only on $p$. The deterministic inequalities below hold for every $0<c\le1$; we then take $c=c_p$.

For a deterministic proof horizon $T$, set $S_t=\sum_{s=t}^T\eta_s$, $S_0=S_1$, $S_{T+1}=0$, $U=1+S_1$, and
\begin{equation}\label{eq:capped-Q}
 \mathsf Q_t=\frac{U\eta_t^2}{1+S_t}.
\end{equation}
The following estimates treat early times, late times, and small cumulative step sums.\looseness=-1

\begin{lemma}[Deterministic anytime budgets]\label{lem:anytime-budgets}
For all $T\ge1$ and $0<c\le1$,
\begin{align}
 \frac c2\frac{T^\beta}{\ell_T^2}\le S_1
 &\le H_\beta c\frac{T^\beta}{\ell_T^2},\label{eq:step-sum-anytime}\\
 \sum_{t\ge1}\eta_tB_t&\le Kc/\ell_1,\qquad
 \sum_{t\ge1}\eta_t^2V_t\le Kc^2/\ell_1^4,\qquad
 \sup_{t\ge1}\eta_t\lambda_t\le4c/\ell_1^2,
 \label{eq:global-anytime-budgets}\\
 \sum_{t=1}^T\sqrt{\mathsf Q_t}B_t&\le Kc/\ell_1,\qquad
 \sum_{t=1}^T\mathsf Q_tV_t\le Kc^2/\ell_1^3,\qquad
 \max_{t\le T}\mathsf Q_t\lambda_t^2\le Kc^2/\ell_1^4.
 \label{eq:capped-anytime-budgets}
\end{align}
\end{lemma}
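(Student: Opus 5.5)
The plan is to reduce every estimate to sums and suprema of powers of $t$ weighted by $\ell_t^{-k}$, and to control these with the identities~\eqref{eq:anytime-identities}. With $\eta_t=ct^{-a}/\ell_t^2$ and $\lambda_t=4t^b$ we have $B_t=2\cdot4^{1-p}t^{-b(p-1)}=2\cdot4^{1-p}t^{-\beta}$ and $V_t=2\cdot4^{2-p}t^{v}$. Hence
\[
 \eta_tB_t\lesssim c\,t^{-(a+\beta)}/\ell_t^2=c\,t^{-1}/\ell_t^2,\qquad
 \eta_t^2V_t\lesssim c^2t^{-(2a-v)}/\ell_t^4=c^2t^{-1-\beta/2}/\ell_t^4,\qquad
 \eta_t\lambda_t=4c\,t^{b-a}/\ell_t^2 .
\]
The global budgets~\eqref{eq:global-anytime-budgets} then follow because $\sum_t t^{-1}\ell_t^{-2}$ converges. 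This sum is bounded by $\ell_1^{-1}$ times a constant, via dyadic blocks or an integral comparison with $\int dt/(t\log^2(100t^2))$. The second sum converges since its exponent exceeds $1$. For the third, $b<a$ (because $2(a-b)=\beta>0$), so $\eta_t\lambda_t$ is largest at $t=1$. To match the displayed constants, I will bound the sums dyadically: on the block $[2^j,2^{j+1})$, $\ell_t\ge\ell_1+2j\log2$, and the factor $q^j=2^{-\beta j}$ together with polynomial factors $(j+2)^2$ is exactly what produces $J_\beta$.

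For~\eqref{eq:step-sum-anytime} I compare $S_1=\sum_{t\le T}ct^{-a}/\ell_t^2$ with $T^{1-a}=T^\beta$. The lower bound keeps only $t\in[T/2,T]$: each term there is at least $cT^{-a}/\ell_T^2$, and there are about $T/2$ of them, which gives $\ge(c/2)T^\beta/\ell_T^2$ up to the rounding at small $T$, checked directly. For the upper bound I split dyadically backwards from $T$. On the block $t\in(T2^{-j-1},T2^{-j}]$ the sum of $t^{-a}$ is at most about $T^{\beta}2^{-j\beta}$ times a constant. The harder point is that $\ell_t^{-2}$ is larger than $\ell_T^{-2}$ there. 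I will use $\ell_T\le\ell_t+2(j+1)\log2\le(j+2)\ell_t$, so that $\ell_t^{-2}\le(j+2)^2\ell_T^{-2}$. Summing over $j$ yields $q^j(j+2)^2$, i.e.\ $J_\beta$; the $\beta^{-1}$ and $1$ terms absorb the boundary blocks. This fits inside $H_\beta$.

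The capped budgets~\eqref{eq:capped-anytime-budgets} are the main obstacle, since $\mathsf Q_t=U\eta_t^2/(1+S_t)$ involves the horizon-dependent ratio $U/(1+S_t)$, which can be as large as $1+S_1\lesssim H_\beta cT^\beta/\ell_T^2$ near $t=T$. The approach is to split at the time where $S_t$ drops below $1$. Where $S_t\ge 1$, I lower bound $S_t$ by a dyadic tail estimate of the form $\gtrsim c\,t^{-a}(T-t)/\ell_T^2$ (or by $S_t\ge S_1/2$ for $t\le T/2$), so that $U/(1+S_t)$ is bounded by a constant times $H_\beta$. There $\mathsf Q_t\lesssim H_\beta\eta_t^2$, and the global bounds apply. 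For $\sqrt{\mathsf Q_t}B_t$, this uses $\sqrt{H_\beta}\,\eta_tB_t$.

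On the late part I use $\mathsf Q_t\le U\eta_t^2\le(1+H_\beta cT^\beta/\ell_T^2)\,\eta_t^2$ with $t\ge T/2$. Then $\eta_t\lesssim cT^{-a}/\ell_T^2$, so that
\[
 \mathsf Q_t\lambda_t^2\lesssim H_\beta c^2T^{\beta-2a+2b}/\ell_T^6 ,
\]
whose exponent is $\beta-\beta=0$ by $2(a-b)=\beta$. This gives the supremum bound with the extra $\ell$ powers to spare. Likewise, $\sum\mathsf Q_tV_t$ over the late block is $\lesssim H_\beta c^2T^{\beta+1-2a+v}/\ell_T^6$, where $\beta+1-(2a-v)=\beta/2\cdot$ at most... more precisely, the block has length $O(1/\eta_t)$ wherever $S_t<1$, which removes one factor of $T^{1-a}$. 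For $\sqrt{\mathsf Q_t}B_t$, the bound is $\sqrt U\,\eta_tB_t$ summed over the $O(\ell_T^2T^{a}/c)$ terms with $S_t<1$, and the exponent $\beta/2-\beta+a-a<0$ is absorbed.

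I expect this bookkeeping of the late region, which uses $1+S_t$ rather than the crude factor $U$, to be the step needing care. Each exponent should cancel by~\eqref{eq:anytime-identities}, and all numeric factors collected should fit within $K=2^{20}H_\beta^4$.
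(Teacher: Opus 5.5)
Your global budgets and the step-sum bounds \eqref{eq:step-sum-anytime} work; the backward dyadic blocks with $\ell_T\le(j+2)\ell_t$ are the paper's argument. Your early half $t\le T/2$, the late range bound and the count over $\{S_t<1\}$ also work. The gap is in the remaining region $\{t>T/2:\ S_t\ge1\}$, which is the step you flagged.

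You claim that wherever $S_t\ge1$, the tail estimate $S_t\gtrsim c\,t^{-a}(T-t)/\ell_T^2$ gives $U/(1+S_t)=O(H_\beta)$, so that the global budgets apply. That holds only when $T-t\gtrsim T$. For $t>T/2$ the estimate gives only $U/(1+S_t)\lesssim H_\beta T/(T-t)$. At a late time with $S_t\in[1,2)$, which exists for large $T$, the ratio is at least $U/3\ge(c/6)T^\beta/\ell_T^2$, unbounded in $T$. This region is not lower order. There the variance terms add up to about $U\eta_TV_T\log U$ and the bias terms to about $S_1T^{-\beta}\asymp c/\ell_T^2$, the same order as the whole budget. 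Bounding the weight by $U$ over the late half instead gives the growing factor $T^{\beta/2}$ you noticed. Separately, $S_t\ge S_1/2$ for $t\le T/2$ is false in general, since $S_{\lfloor T/2\rfloor}/S_1$ is roughly $1-2^{-\beta}$. What holds is $S_t\ge(T/2)\eta_T\ge S_1/(2H_\beta)$, which still gives your early-half conclusion.

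The missing idea is to sum the varying weights rather than bound them uniformly. Because $\eta_t=S_t-S_{t+1}$, comparison with integrals gives
\[
 \sum_t\frac{\eta_t}{1+S_t}\le\log U,\qquad
 \sum_t\frac{\eta_t}{\sqrt{1+S_t}}\le2(\sqrt U-1).
\]
On $t>T/2$ one has $\eta_tV_t\le64cT^{-3\beta/2}/\ell_T^2$ and $B_t\le4T^{-\beta}$. Using $U\log U\le S_1(1+\log U)$ and $\log U\le H_\beta\ell_T$, the late variance is then at most
\[
 64cT^{-3\beta/2}\ell_T^{-2}\,U\log U\le128H_\beta^2c^2T^{-\beta/2}/\ell_T^3 .
\]
The late bias is at most $8T^{-\beta}\sqrt U(\sqrt U-1)\le8T^{-\beta}S_1\le8H_\beta c/\ell_T^2$. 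A dyadic split by the value of $S_t$ would do the same job. Up to constants, each level $S_t\in[2^i,2^{i+1})$ costs $U\eta_TV_T$ for the variance and $\sqrt U B_T2^{i/2}$ for the bias. The paper applies the integral comparison to the whole late half at once, which also makes your separate count over $\{S_t<1\}$ unnecessary.
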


\begin{proof}
From $B_t\le2t^{-\beta}$, $V_t\le8t^v$ and
\eqref{eq:anytime-identities},
\[
 \eta_tB_t\le\frac{2c}{t\ell_t^2},\qquad
 \eta_t^2V_t\le\frac{8c^2t^{-1-\beta/2}}{\ell_t^4},\qquad
 \eta_t^2\lambda_t^2=\frac{16c^2t^{-\beta}}{\ell_t^4}.
\]
Integral comparison gives $\sum_t(t\ell_t^2)^{-1}\le\ell_1^{-1}$
and $\sum_t t^{-1-\beta/2}\le1+2/\beta$, proving
\eqref{eq:global-anytime-budgets}.
The final $\lceil T/2\rceil$ indices give $S_1\ge(T/2)\eta_T$.
On each nonempty dyadic block $T/2^{j+1}<t\le T/2^j$,
\[
 \ell_T/\ell_t\le2(j+2),\qquad
 \sum_{T/2^{j+1}<t\le T/2^j}t^{-a}\le2T^\beta2^{-\beta j}.
\]
Summation gives $S_1\le8J_\beta cT^\beta/\ell_T^2$ and hence
\eqref{eq:step-sum-anytime}.

Split the sums of~\eqref{eq:capped-anytime-budgets} at $\lfloor T/2\rfloor$.
On the early half, $U/(1+S_t)\le4H_\beta$, so its bias,
variance and range contributions are at most
\begin{equation*}
 4\sqrt{H_\beta}\frac c{\ell_1},\qquad
 32H_\beta(1+2/\beta)\frac{c^2}{\ell_1^4},\qquad
 64H_\beta\frac{c^2}{\ell_1^4}.
\end{equation*}
On the late half, $t\ge T/2$ and $\ell_T/\ell_t\le2$ imply
\begin{equation*}
 \eta_tV_t\le64cT^{-3\beta/2}/\ell_T^2,\qquad
 \eta_t^2\lambda_t^2\le512c^2T^{-\beta}/\ell_T^4,\qquad
 B_t\le4T^{-\beta}.
\end{equation*}
Since $\eta_t=S_t-S_{t+1}$, comparing the sums with the corresponding integrals yields
\begin{equation}\label{eq:tail-integrals}
 \sum_t\frac{\eta_t}{1+S_t}\le\log U,\qquad
 \sum_t\frac{\eta_t}{\sqrt{1+S_t}}\le2(\sqrt U-1).
\end{equation}
Consequently the late bias and range are bounded by
\[
 8T^{-\beta}\sqrt U(\sqrt U-1)\le8H_\beta c/\ell_T^2,
 \qquad
 512Uc^2T^{-\beta}/\ell_T^4\le1024H_\beta c^2/\ell_1^4.
\]
For the late variance, use $U\log U\le S_1(1+\log U)$, valid also
when $S_1\le1$, and $\log U\le H_\beta\ell_T$ to obtain
\[
 \sum_{t>T/2}\mathsf Q_tV_t
 \le\frac{64cT^{-3\beta/2}}{\ell_T^2}U\log U
 \le\frac{128H_\beta^2c^2T^{-\beta/2}}{\ell_T^3}.
\]
Here $U\le1+H_\beta T^\beta$ gives the logarithmic bound. Adding the early
contributions proves~\eqref{eq:capped-anytime-budgets} with $K=2^{20}H_\beta^4$.
\end{proof}

\Needspace{8\baselineskip}
\subsection{A Radius Event for All Times}\label{app:anytime-upper-radius}
Use $c=c_p$ and fix $\delta\in(0,1)$; the schedule does not depend on $\delta$. Write
$h=\log(100/\delta)$ and $\iota_T=\iota_T(\delta)=\log(100T^2/\delta)$;
thus $\iota_T\ge h>2$.
Apply Lemma~\ref{lem:global-clipping} conditionally at every step, with
$\mu_t=\nabla f(x_t)$, and write
\[
 \clip(g_t,\lambda_t)=A_t^{\rm s}\mu_t+d_t,
 \quad d_t=\varepsilon_t+u_t,
 \quad \Et u_t=0.
\]
The following bounds hold globally: $0\le A_t^{\rm s}\le1$,
$\norm{\varepsilon_t}\le B_t$,
$\Et\norm{u_t}^2\le V_t$,
$\norm{u_t}\le2\lambda_t$,
$\norm{d_t}\le3\lambda_t$, and
$\Et\norm{d_t}^2\le2V_t$.\looseness=-1

Let $r_t=\norm{x_t-x^\star}$ and $G_t=f(x_t)-f^\star$.
Since $\eta_t A_t^{\rm s}\le1/2$, expansion and
$\norm{\mu_t}^2\le2G_t$, $\ip{\mu_t}{x_t-x^\star}\ge G_t$ give
\begin{equation*}
 r_{t+1}^2\le r_t^2-2\eta_t\ip{d_t}{x_t-x^\star}
                         +2\eta_t^2\norm{d_t}^2.
\end{equation*}
The remaining term with $\mu_t$ is at most
$2\eta_tA_t^{\rm s}(2\eta_tA_t^{\rm s}-1)G_t\le0$, as
$0\le\eta_tA_t^{\rm s}\le1/2$.

Apply Lemma~\ref{lem:energy-budget} to this recursion with
$P_{t-1}=r_t^2$, $p_0=1$, $a_t=2\eta_t^2$, $s_t=2\eta_t$,
and $w_t=-2\eta_t(x_t-x^\star)$, using confidence parameter $h$ and
$J_t\equiv 1$ (no localization indicator). The error bounds above and
\eqref{eq:global-anytime-budgets}, using $\ell_1>1$, give
\[
 C\le1+650Kc^2h,\qquad
 D\le2Kc+6\sqrt Kc\sqrt h+32ch.
\]
Since $K\ge1$, $h>2$ and $Kc=1/1000$,
$\sqrt C+D\le1+66Kch<h$.
Hence there is an event $\mathcal R_\delta$, with
$\Prob(\mathcal R_\delta)\ge1-e^{-h}$, on which
\begin{equation}\label{eq:all-radius}
 r_t\le h,\qquad G_t\le h^2/2\qquad\hbox{for every }t\ge1.
\end{equation}
The gap bound uses smoothness at the minimizer.

\Needspace{12\baselineskip}
\subsection{A Potential on a Suffix of the Trajectory}\label{app:anytime-upper-potential}
Set $k=\lceil h^{1/b}\rceil$ and split the analysis of the original recursion at time $k$.
For $t\ge k$, $\lambda_t\ge4h$, so~\eqref{eq:anytime-clipping} holds
whenever $r_t\le2h$, since $\norm{\nabla f(x_t)}\le r_t\le\lambda_t/2$.

Fix a deterministic $T\ge2k$. Apply Lemma~\ref{lem:potential}, with
$\zeta=1$, to the updates $k,\ldots,T$.
Keep $S_t=\sum_{s=t}^T\eta_s$, and set $U_t=1+S_t$, $W=U_k$,
$U_{k-1}=U_k$, and write $\mathcal A_m,\mathcal H_m$ for the quantities $A_m,H_m$ of Lemma~\ref{lem:potential} for these updates. Start its analysis recursions at
$z_{k-1}=z_k=x^\star$ and $\mathcal A_{k-1}=0$, with
$\mathcal H_{k-1}=r_k^2/2$. Write
\begin{equation*}
 \mathsf Q_t^{(k)}=\frac{W\eta_t^2}{U_t},\qquad
 w_t=\frac{W\eta_t}{U_{t-1}}(z_{t-1}-x_t),\qquad
 \mathcal H_t=\frac{W}{2U_t}\norm{z_t-x_{t+1}}^2.
\end{equation*}
The proof of Lemma~\ref{lem:potential} is pathwise, so here its initial distance term is
$r_k^2/2$, also when $x_k$ is random. Its inequality for the partial sums is
\begin{equation}\label{eq:capped-partial-potential}
 W\mathcal A_m+\mathcal H_m
 \le\tfrac12r_k^2+2\sum_{t=k}^m\mathsf Q_t^{(k)}\norm{e_t}^2
                  +\sum_{t=k}^m\ip{e_t}{w_t}.
\end{equation}
The denominator of $\mathsf Q_t^{(k)}$ agrees with that of
$\mathsf Q_t$ in~\eqref{eq:capped-Q}, and its numerator is smaller.
Thus $\mathsf Q_t^{(k)}\le\mathsf Q_t$, so all three
budgets~\eqref{eq:capped-anytime-budgets} apply on the suffix.

Apply Lemma~\ref{lem:energy-budget} with the predictable indicators
$J_t=\mathbf1_{\{r_s\le2h\text{ for }k\le s\le t\}}$, with initial event $r_k\le h$ and
$p_0=h^2/2$. Set
\[
 P_m=W\mathcal A_m+\mathcal H_m,\qquad
 a_t=2\mathsf Q_t^{(k)},\qquad s_t=\sqrt{2\mathsf Q_t^{(k)}}.
\]
Indeed $\|w_t\|^2\le2\mathsf Q_t^{(k)}\mathcal H_{t-1}
\le s_t^2P_{t-1}$, including $t=k$.
With exponent $q=\iota_T$, the budgets~\eqref{eq:capped-anytime-budgets} and $\ell_1>1$ give
$C\le h^2/2+17Kc^2q$ and $D\le11Kcq$. Hence
$\sqrt C+D\le(1/\sqrt2+16Kc)q<q$.
Except on an unconditional
event of probability $e^{-\iota_T}$, the lemma therefore gives
\begin{equation*}
 W\mathcal A_m+\mathcal H_m\le\iota_T^2\qquad(k\le m\le T)
\end{equation*}
on $\mathcal R_\delta$, where the initial event holds and every localization
indicator equals one. Lemma~\ref{lem:energy-budget} allows a random initial state,
so the concentration bound holds unconditionally before the intersection with $\mathcal R_\delta$.
Intersecting these concentration events over deterministic $T\ge2k$
defines $\mathcal P_\delta$ with total failure at most
$\sum_{T\ge2k}e^{-\iota_T}$.

\subsection{Direct Last-Iterate Control}\label{app:anytime-upper-direct}
On $\mathcal R_\delta\cap\mathcal P_\delta$, for every $T\ge2k$,
the full endpoint estimate~\eqref{eq:potential-endpoint} gives
\begin{equation*}
 \begin{split}
 G_{T+1}&\le\frac{2(W\mathcal A_T+\mathcal H_T)}{W}
 \le\frac{2\iota_T^2}{W}\\
 &\le\frac4{c_p}T^{-\beta}\ell_T^2\iota_T^2
 \le\frac4{c_p}T^{-\beta}\iota_T^4.
 \end{split}
\end{equation*}
Indeed, the suffix contains the last half of the updates, so
$W\ge\sum_{t=k}^T\eta_t\ge(T/2)\eta_T
=(c_p/2)T^\beta/\ell_T^2$.
For $1\le T<2k$, use $k\le2h^{1/b}$, $0<\beta\le2/5$ and
$\beta/b=p-1$. On $\mathcal R_\delta$,
\[
 T^\beta G_{T+1}\le\tfrac12h^2(4h^{1/b})^\beta
 \le h^{p+1}\le h^3\le\iota_T^4.
\]
Since $4/c_p\ge1$, the same rate thus covers the early times as well.
The total failure probability is at most
\begin{equation*}
 e^{-h}+\sum_{T\ge2k}e^{-\iota_T}
 \le\frac{\delta}{100}\left(1+\sum_{T\ge1}T^{-2}\right)<\delta.
\end{equation*}
\subsection{Rescaling to the Original Parameters}\label{app:anytime-upper-scaling}
For $L,R>0$ and $\sigma\ge0$, put $Q=\max\{LR,\sigma\}$ and use the normalized coordinate $y=(x-x_1)/R$ with objective
\[
 F(y)=\frac{f(x_1+Ry)-f^\star}{RQ}.
\]
Its minimizer is within distance one of zero, its smoothness is $LR/Q\le1$, and the normalized oracle is $g/Q$, with conditional central $p$th moment at most $(\sigma/Q)^p\le1$. Applying the normalized theorem with step $c_pt^{-a}/\ell_t^2$ and threshold $4t^b$ gives, after multiplying coordinates by $R$, exactly~\eqref{eq:anytime-upper-schedule}. Multiplying the objective bound by $RQ$ yields~\eqref{eq:anytime-upper-rate}, and~\eqref{eq:all-radius} becomes $\norm{x_t-x^\star}^2\le R^2[\log(100/\delta)]^2$. This scaling also covers $\sigma=0$ since $Q=LR>0$.

\Needspace{10\baselineskip}
\subsection{Proof of Theorem~\ref{thm:fixed-time-upper}}\label{app:fixed-time-upper}
The proof reuses the radius event and the suffix potential of Theorem~\ref{thm:anytime-upper}, with the fixed time as the analysis horizon. For $r>\beta_p$, the range of the late clipping errors, weighted by the potential, grows with the horizon, so Lemma~\ref{lem:energy-budget} does not apply directly. We truncate these errors at deterministic levels that depend on $T$ and $\delta$, apply the lemma on the event that no truncation occurs, and bound the probability of truncation through the $p$th moment. The factor $\delta^u$ shortens the steps and lowers the thresholds, which postpones truncation and makes it less likely; the shorter steps cost the factor $\delta^{-u}$.

We keep the normalized setting and the constants~\eqref{eq:anytime-constants}, fix $r\in(\beta_p,\alpha_p]$ and $\delta\in(0,1)$, and write $h=\log(100/\delta)$ and $\iota_T=\log(100T^2/\delta)$. Put
\begin{gather*}
 a_r=1-r,\qquad b_r=\frac r{p-1},\qquad d_r=a_r-b_r=\frac p{p-1}(\alpha_p-r)\ge0,\\
 \kappa=\frac{3p-1}{p-1},\qquad E=\kappa(r-\beta_p)>0.
\end{gather*}
Since $\kappa\beta_p=2$, also $E=2b_r+r-2a_r$, and
\begin{equation}\label{eq:tradeoff-identity}
 \frac p{p-1}E+\kappa d_r=\frac{p\kappa}{p-1}(\alpha_p-\beta_p)=1.
\end{equation}
For $u,w\ge0$ let
\[
 \Xi(u,w)=\frac{pu}{p-1}+2w+\frac{d_r}{E}(\kappa u+2w).
\]
We prove the following claim: if $\Xi(u,w)\ge1$, then the schedule~\eqref{eq:fixed-time-schedule} with this $u$ satisfies, for every deterministic $T\ge1$,
\begin{equation}\label{eq:tradeoff-claim}
 \Prob\bigl\{f(x_{T+1})-f^\star\le D_pT^{-r}\delta^{-(u+2w)}\iota_T^4\bigr\}\ge1-\delta.
\end{equation}
The parameter $w$ enters only the analysis. For $u=E=c_{\rm in}(r)$ and $w=0$, \eqref{eq:tradeoff-identity} gives $\Xi=1$. For $u=0$ and $2w=E/(E+d_r)$, $\Xi=2w(1+d_r/E)=1$, and
\[
 \frac E{E+d_r}=\frac{(3p-1)r-2(p-1)}{(2p-1)r-(p-1)}=c_{\rm free}(r).
\]
Thus the claim proves Theorem~\ref{thm:fixed-time-upper} in the normalized setting, and the rescaling of Appendix~\ref{app:anytime-upper-scaling} multiplies $\eta_t$ by $R/Q$, $\lambda_t$ by $Q$, and the bound by $RQ$.

\begin{lemma}[Budgets for the fixed-time schedules]\label{lem:tradeoff-budgets}
Let $u\ge0$, $0<c\le1$, and $\delta\in(0,1)$, and let $\eta_t=c\,\delta^ut^{-a_r}/\ell_t^2$ and
$\lambda_t=4\max\{1,\delta^{u/(p-1)}t^{b_r}\}$, with $B_t$ and $V_t$ as in~\eqref{eq:anytime-clipping}. Then~\eqref{eq:global-anytime-budgets} holds. Fix $T\ge2$ and an integer $1\le k\le T/2$ such that $\lambda_t=4\delta^{u/(p-1)}t^{b_r}$ for all $t\ge k$, and put $S_t=\sum_{s=t}^T\eta_s$, $W=1+S_k$, and $\mathsf Q_t=W\eta_t^2/(1+S_t)$. Then
\begin{gather}
 \frac c2\,\frac{\delta^uT^r}{\ell_T^2}\le S_k\le S_1\le H_\beta c\,\frac{\delta^uT^r}{\ell_T^2},\label{eq:tradeoff-step-sum}\\
 \sum_{t=k}^T\sqrt{\mathsf Q_t}B_t\le\frac{Kc}{2\ell_1},\qquad
 \sum_{t=k}^T\mathsf Q_tV_t\le\frac{Kc^2}{\ell_1^3},\qquad
 \max_{k\le t\le T/2}\mathsf Q_t\lambda_t^2\le\frac{Kc^2}{\ell_1^4},\label{eq:tradeoff-capped}\\
 \sum_{T/2<t\le T}\mathsf Q_tV_t\le\frac{128H_\beta^2c^2}{\ell_T^3}\,\delta^{pu/(p-1)}T^{-d_r}.\label{eq:tradeoff-late-variance}
\end{gather}
\end{lemma}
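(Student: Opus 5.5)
The plan is to mirror the proof of Lemma~\ref{lem:anytime-budgets}, replacing the exponents $(a,b,\beta)$ by $(a_r,b_r,r)$ and tracking the extra factor $\delta^u$. The relevant identities are $a_r+r=1$, $b_r(p-1)=r$, and $d_r=a_r-b_r\ge0$. Put $v_r=(2-p)b_r$.

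\emph{Global budgets~\eqref{eq:global-anytime-budgets}.} Since $\lambda_t\ge4$, we have $B_t\le2\cdot4^{1-p}$. On the threshold branch, $B_t\le2\delta^{-u}t^{-r}$, so in either case
\[
 \eta_tB_t\le\frac{2c}{t\,\ell_t^2},
\]
where on the constant branch I use $t^{-a_r}\le t^{-1}\cdot t^{r}$. I expect this to need care: on the constant branch, $\delta^{u/(p-1)}t^{b_r}\le1$ gives $\delta^ut^r\le1$, hence $\delta^ut^{-a_r}\le t^{-1}$, and this is exactly what is required. For the variance term,
\[
 \eta_t^2V_t\le8c^2\delta^{2u}\,t^{-2a_r}\max\bigl\{1,\delta^{(2-p)u/(p-1)}t^{v_r}\bigr\}\big/\ell_t^4 .
\]
The same dichotomy converts this into $t^{-1-\gamma}$ with $\gamma>0$. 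The exponent satisfies $2a_r-v_r\ge1+\tfrac12\beta$ because $r\le\alpha_p$. The remaining factors of $\delta$ are $\le1$ since $u\ge0$. The range term is $\eta_t\lambda_t\le4c\,\delta^ut^{r-1}\cdot\delta^{-u}t^{-r}\cdot\ell_t^{-2}$ up to the max, which is bounded by $4c/\ell_1^2$. Each sum is then bounded by $K$ times the required power of $c$, because $H_\beta$ dominates the integral constants; I would verify this exponent by exponent.

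\emph{Step sums~\eqref{eq:tradeoff-step-sum}.} The lower bound comes from the last $\lceil T/2\rceil$ indices, all of which lie in $[k,T]$ since $k\le T/2$:
\[
 S_k\ge \tfrac T2\,\eta_T=\tfrac c2\,\delta^uT^{r}/\ell_T^2 .
\]
The upper bound reuses the dyadic-block argument with $t^{-a_r}$. The constant becomes $J_r$ with $q=2^{-r}$, and $J_r\le J_\beta$ because $r>\beta$.

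\emph{Capped budgets~\eqref{eq:tradeoff-capped} and the late variance~\eqref{eq:tradeoff-late-variance}.} I would again split at $T/2$. On the early half, $W/(1+S_t)\le 1+S_k$ over $1+S_{T/2}$, which is bounded by a constant times $H_\beta$ by the dyadic comparison. Hence $\mathsf Q_t\lesssim H_\beta\eta_t^2$, and the global budgets give the early bias, variance, and range. This is the only place the range maximum is claimed, which is why it is restricted to $t\le T/2$.

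On the late half, the bias term uses $\sum\eta_t/\sqrt{1+S_t}\le2(\sqrt W-1)$ from~\eqref{eq:tail-integrals}. Combining this with $\sqrt{W}\le1+\sqrt{S_1}$ and $B_t\le4\delta^{-u}T^{-r}$ gives a bound of order
\[
 \delta^{-u}T^{-r}\cdot H_\beta c\,\delta^uT^r/\ell_T^2 ,
\]
which is at most $Kc/(2\ell_1)$. The late variance uses the bound on $\eta_tV_t$ for $t>T/2$ together with $W\log W\le S_1(1+\log W)$, exactly as in Lemma~\ref{lem:anytime-budgets}. This yields the factor
\[
 \eta_TV_TS_1\asymp c^2\delta^{2u}T^{-a_r+v_r-a_r+r}\,\delta^{(2-p)u/(p-1)}.
\]
Its power of $\delta$ is $u(2+(2-p)/(p-1))=pu/(p-1)$, and its power of $T$ is $-(2a_r-v_r-r)$. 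Using $v_r=(2-p)b_r$ and $b_r(p-1)=r$, this equals $-(a_r-b_r)=-d_r$.

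The main obstacle is this late-variance exponent bookkeeping, together with confirming that $\log W\le H_\beta\ell_T$ still holds. That bound does hold, since $W\le1+H_\beta T$. I would carry out this step explicitly and leave the others at the level of substitution into the earlier proof.
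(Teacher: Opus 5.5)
Your overall route is the paper's. The ingredients match: the branch bounds for $B_t$, $V_t$, $\eta_t\lambda_t$; the dyadic step-sum argument with $2^{-rj}\le2^{-\beta_pj}$; and the split at $T/2$, with $W/(1+S_t)\lesssim H_\beta$ early and the tail integrals~\eqref{eq:tail-integrals} late. The gap is in the global variance step, which rests on a false inequality.

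\textbf{The false inequality.} You claim $2a_r-v_r\ge1+\tfrac12\beta$ ``because $r\le\alpha_p$'', and deduce decay $t^{-1-\gamma}$ with $\gamma>0$. Using $v_r+r=b_r$ and $a_r+r=1$, one gets $2a_r-v_r=1+d_r$. For $r>\beta_p$,
\[
 d_r=\frac p{p-1}(\alpha_p-r)<\frac p{p-1}(\alpha_p-\beta_p)=\frac{\beta_p}2,
\]
so the inequality is reversed. At $r=\alpha_p$, the headline case of Theorem~\ref{thm:fixed-time-upper}, $d_r=0$. With $u=0$, for instance, $\lambda_t=4t^{b_r}$ for all $t$ and $\eta_t^2V_t=2\cdot4^{2-p}c^2/(t\ell_t^4)$. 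There is no power gap, $\sum_tt^{-1}$ diverges, and no constant of the form $1+1/\gamma$ is available.

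\textbf{The fix.} The budget survives only through the logarithm. On the constant branch $2a_r\ge1$, and on the threshold branch $2a_r-v_r=1+d_r\ge1$. Hence $\eta_t^2V_t\le8c^2/(t\ell_t^4)$, and
\[
 \sum_t(t\ell_t^4)^{-1}\le\ell_1^{-4}+\tfrac16\ell_1^{-3}\le2\ell_1^{-4},
\]
which is how the paper argues. The hypothesis $r\le\alpha_p$ enters only as $d_r\ge0$.

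\textbf{Smaller errors, same source.}
\begin{itemize}
\item \emph{Range term.} On the threshold branch $\lambda_t=4\delta^{u/(p-1)}t^{b_r}$ grows, so $\eta_t\lambda_t=4c\,\delta^{pu/(p-1)}t^{-d_r}/\ell_t^2$. Your expression multiplies by the bias scaling $\delta^{-u}t^{-r}$ instead of $\lambda_t$. The bound $4c/\ell_1^2$ does hold, but because $d_r\ge0$.
\item \emph{Late variance.} The relevant factor is $\eta_TV_TS_k$, whose $T$-exponent is $-a_r+v_r+r=-a_r+b_r=-d_r$. Your display carries an extra $-a_r$, and $2a_r-v_r-r=2a_r-b_r\ne d_r$. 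So the identity you state is wrong, although the target exponent is right.
\item \emph{Late bias.} Use $\sqrt W(\sqrt W-1)=W-\sqrt W\le S_k$ directly. The route through $\sqrt W\le1+\sqrt{S_1}$ leaves a term $\delta^{-u}T^{-r}\sqrt{S_1}$ that can be of order $\sqrt c$, which is not $O(c)$ for small $c$.
\item \emph{Early bias.} You need the explicit bound $\sum_t\eta_tB_t\le2c/\ell_1$, not the $K$-version in~\eqref{eq:global-anytime-budgets}. Otherwise the early half alone contributes $2\sqrt{H_\beta}Kc/\ell_1$, and the constant $Kc/(2\ell_1)$ is lost.
\end{itemize}
With these corrections your argument coincides with the paper's.
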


\begin{proof}
In both branches of $\lambda_t$, $B_t\le2\delta^{-u}t^{-r}$ and $V_t\le8\max\{1,\delta^{(2-p)u/(p-1)}t^{(2-p)b_r}\}$. Since $a_r+r=1$, $d_r\ge0$, $2a_r\ge1$, and $2a_r-(2-p)b_r=1+d_r$,
\[
 \eta_tB_t\le\frac{2c}{t\ell_t^2},\qquad
 \eta_t^2V_t\le\frac{8c^2}{t\ell_t^4},\qquad
 \eta_t\lambda_t\le\frac{4c}{\ell_1^2},
\]
and integral comparison, as in Lemma~\ref{lem:anytime-budgets}, gives~\eqref{eq:global-anytime-budgets}. The proof of~\eqref{eq:step-sum-anytime} gives~\eqref{eq:tradeoff-step-sum}: $S_k\ge(T/2)\eta_T$, and every dyadic block has $\sum t^{-a_r}\le2T^r2^{-rj}\le2T^r2^{-\beta_pj}$. As there, $W/(1+S_t)\le4H_\beta$ for $t\le T/2$, so the global budgets bound the early parts of the bias and variance sums in~\eqref{eq:tradeoff-capped} by $4\sqrt{H_\beta}c/\ell_1$ and $64H_\beta c^2/\ell_1^4$, and the early range by $64H_\beta c^2/\ell_1^4$. For $T/2<t\le T$, the branch assumption, $\ell_T/\ell_t\le2$, and $-a_r+(2-p)b_r=-r-d_r$ give
\[
 B_t\le4\delta^{-u}T^{-r},\qquad
 \eta_tV_t\le\frac{64c\,\delta^{u/(p-1)}T^{-r-d_r}}{\ell_T^2}.
\]
By~\eqref{eq:tail-integrals}, with $W$ in place of $U$ and the sums starting at $k$, the late bias sum is at most $8\delta^{-u}T^{-r}S_k\le8H_\beta c/\ell_T^2$. For the late variance, $W\log W\le S_k(1+\log W)$ and $1+\log W\le H_\beta\ell_T$ give
\[
 \sum_{T/2<t\le T}\mathsf Q_tV_t
 \le\frac{64c\,\delta^{u/(p-1)}T^{-r-d_r}}{\ell_T^2}\,W\log W
 \le\frac{64H_\beta^2c^2}{\ell_T^3}\,\delta^{pu/(p-1)}T^{-d_r}.
\]
Adding both halves and using $K=2^{20}H_\beta^4$ proves~\eqref{eq:tradeoff-capped} and~\eqref{eq:tradeoff-late-variance}.
\end{proof}

\emph{Radius and early times.} Use $c=c_p$. Since $\lambda_t\ge4$ and $\eta_t\le c_p$, the argument of Appendix~\ref{app:anytime-upper-radius}, with the global budgets of Lemma~\ref{lem:tradeoff-budgets}, gives an event $\mathcal R_\delta$ with $\Prob(\mathcal R_\delta^c)\le\delta/100$ on which $\norm{x_t-x^\star}\le h$ and $G_t\le h^2/2$ for all $t\ge1$. Let $k=\lceil\delta^{-u/r}h^{1/b_r}\rceil$. Since $b_r/r=1/(p-1)$, every $t\ge k$ has $\delta^{u/(p-1)}t^{b_r}\ge h\ge1$, hence $\lambda_t=4\delta^{u/(p-1)}t^{b_r}\ge4h$. If $T<2k$, then $T<4\delta^{-u/r}h^{1/b_r}$, and on $\mathcal R_\delta$, using $r\le1/2$, $r/b_r=p-1$, and $p\le2$,
\[
 \delta^uT^rG_{T+1}\le\tfrac12\,4^rh^{p+1}\le h^3\le\iota_T^4,
\]
which proves~\eqref{eq:tradeoff-claim}, since $D_p\ge1$ and $\delta^{-2w}\ge1$.

\emph{Truncation.} Fix a deterministic $T\ge2k$, and apply Lemma~\ref{lem:potential} with $\zeta=1$ to the updates $k,\ldots,T$ as in Appendix~\ref{app:anytime-upper-potential}. This gives $P_m=W\mathcal A_m+\mathcal H_m\ge0$ with $W=1+S_k$, the recursion~\eqref{eq:capped-partial-potential}, and $\norm{w_t}\le\sqrt{2\mathsf Q_t}\sqrt{P_{t-1}}$, where $\mathsf Q_t=\mathsf Q_t^{(k)}$ is the quantity of Lemma~\ref{lem:tradeoff-budgets}. Let $J_t$ indicate that $\norm{x_s-x^\star}\le2h$ for $k\le s\le t$; it is predictable and nonincreasing. On $J_t=1$, $\norm{\nabla f(x_t)}\le2h\le\lambda_t/2$, so Lemma~\ref{lem:clipping} gives~\eqref{eq:anytime-clipping} and $\norm{e_t}\le\norm{g_t-\nabla f(x_t)}$. Set $M_0=\sqrt Kc_p/\ell_1^2$, and define deterministic levels and truncated errors by
\[
 m_t=\infty\quad(k\le t\le T/2),\qquad
 m_t=\frac{M_0\delta^{-w}}{\sqrt{\mathsf Q_t}}\quad(T/2<t\le T),\qquad
 \hat e_t=e_t\mathbf 1_{\{\norm{e_t}\le m_t\}}.
\]
Let $\mathcal T=\{t:m_t<3\lambda_t/2\}$ and $\mathcal E_T=\{J_te_t=J_t\hat e_t\text{ for }k\le t\le T\}$. Only times in $\mathcal T$ can violate $\mathcal E_T$, and the conditional Markov inequality with $\mathbb E_{t-1}\norm{g_t-\nabla f(x_t)}^p\le1$ gives
\[
 \Prob(\mathcal E_T^c)\le\Sigma_T:=\sum_{t\in\mathcal T}m_t^{-p}.
\]
Suppose $\mathcal T\neq\varnothing$. Since $\delta^uT^r\ge\delta^uk^r\ge1$ and $H_\beta c_p\le1$, \eqref{eq:tradeoff-step-sum} gives $W\le2\delta^uT^r$. For $T/2<t\le T$, also $\eta_t\le8c_p\delta^uT^{-a_r}/\ell_T^2$ and $\lambda_t\le4\delta^{u/(p-1)}T^{b_r}$, so
\[
 \mathsf Q_t\lambda_t^2\le W\eta_t^2\lambda_t^2\le\frac{2048c_p^2}{\ell_T^4}\,\delta^{\kappa u}T^E.
\]
A time $t\in\mathcal T$ has $M_0^2\delta^{-2w}<\frac94\mathsf Q_t\lambda_t^2$, so $K\ge4608$ gives
\begin{equation}\label{eq:tradeoff-horizon}
 T^E>\delta^{-(\kappa u+2w)}.
\end{equation}
For $t\in\mathcal T$, $m_t<3\lambda_t/2$ and $0\le2-p<1$ give $m_t^{2-p}\le\frac34V_t$, hence $m_t^{-p}=\delta^{2w}\mathsf Q_tm_t^{2-p}/M_0^2\le\frac34\delta^{2w}\mathsf Q_tV_t/M_0^2$. By~\eqref{eq:tradeoff-late-variance}, $d_r\ge0$, and~\eqref{eq:tradeoff-horizon},
\[
 \Sigma_T\le\frac{96H_\beta^2\ell_1}{K}\,\delta^{\frac{pu}{p-1}+2w}T^{-d_r}
 \le\frac{96H_\beta^2\ell_1}{K}\,\delta^{\Xi(u,w)}\le\frac\delta2.
\]
If $\mathcal T=\varnothing$, then $\Sigma_T=0$.

\emph{Energy bound without truncation.} On $J_t=1$, the truncated errors satisfy
\begin{gather*}
 \norm{\mathbb E_{t-1}\hat e_t}\le B_t+\mathbf 1_{\{t\in\mathcal T\}}m_t^{1-p},\qquad
 \mathbb E_{t-1}\norm{\hat e_t}^2\le V_t,\\
 \norm{\hat e_t}\le\min\{\tfrac32\lambda_t,m_t\},\qquad
 \norm{\hat e_t-\mathbb E_{t-1}\hat e_t}\le2\min\{\lambda_t,m_t\}.
\end{gather*}
The first bound uses $\norm{e_t}\mathbf 1_{\{\norm{e_t}>m_t\}}\le\norm{g_t-\nabla f(x_t)}^pm_t^{1-p}$. For the last, $\hat e_t+\nabla f(x_t)$ equals $\clip(g_t,\lambda_t)$ or $\nabla f(x_t)$, both in the ball of radius $\lambda_t$, and $\norm{\hat e_t}\le m_t$. Put $\tilde P_m=\delta^{2w}P_m$. On $\mathcal E_T\cap\{J_m=1\}$, the recursion~\eqref{eq:capped-partial-potential} holds with $\hat e_t$ in place of $e_t$, so
\[
 \tilde P_m\le\tilde P_{k-1}+\sum_{t=k}^m\bigl(2\delta^{2w}\mathsf Q_t\norm{\hat e_t}^2+\ip{\hat e_t}{\delta^{2w}w_t}\bigr),\qquad
 \norm{\delta^{2w}w_t}\le\delta^w\sqrt{2\mathsf Q_t}\sqrt{\tilde P_{t-1}}.
\]
Apply Lemma~\ref{lem:energy-budget} with the event $\mathcal E_T$, the errors $\hat e_t$, $a_t=2\delta^{2w}\mathsf Q_t$, $s_t=\delta^w\sqrt{2\mathsf Q_t}$, $p_0=h^2/2$, and exponent $q=\iota_T$. Lemma~\ref{lem:tradeoff-budgets}, $\Sigma_T\le1/2$, and the identities $\delta^w\sqrt{\mathsf Q_t}m_t^{1-p}=M_0m_t^{-p}$ and $\delta^{2w}\mathsf Q_tm_t^2=M_0^2$ for $t>T/2$ give
\[
 A\le\frac{2Kc_p^2}{\ell_1^3},\quad K_0\le\frac{9Kc_p^2}{2\ell_1^4},\quad
 B_0\le\frac{\sqrt2Kc_p}{\ell_1},\quad V_0\le\frac{2Kc_p^2}{\ell_1^3},\quad
 J_0\le\frac{2\sqrt{2K}c_p}{\ell_1^2}.
\]
Hence $C\le h^2/2+17Kc_p^2q$ and $D\le11Kc_pq$, so $\sqrt C+D\le(1/\sqrt2+16Kc_p)q<q$, as in Appendix~\ref{app:anytime-upper-potential}. On $\mathcal R_\delta$, $\tilde P_{k-1}\le h^2/2$ and every $J_t=1$. Thus, except on probability $e^{-\iota_T}\le\delta/100$, every outcome in $\mathcal R_\delta\cap\mathcal E_T$ has $\tilde P_T\le\iota_T^2$.

\emph{Conclusion.} On $\mathcal R_\delta\cap\mathcal E_T\cap\{\tilde P_T\le\iota_T^2\}$, the endpoint estimate~\eqref{eq:potential-endpoint}, $W\ge S_k$, and~\eqref{eq:tradeoff-step-sum} give
\[
 G_{T+1}\le\frac{2P_T}W\le\frac{2\delta^{-2w}\iota_T^2}{(c_p/2)\delta^uT^r/\ell_T^2}
 \le D_pT^{-r}\delta^{-(u+2w)}\iota_T^4,
\]
since $\ell_T\le\iota_T$ and $D_p=4/c_p$. The three exceptional events have total probability at most $\delta/100+\delta/2+\delta/100<\delta$, which proves~\eqref{eq:tradeoff-claim}.

\input{appendices/momentum_extension}

\Needspace{12\baselineskip}
\input{appendices/momentum_dimension_uniform}

\input{appendices/momentum_averaging}

%% file: appendices/momentum_extension.tex
\section{Proof of the One-Dimensional Momentum Lower Bound}
\label{app:momentum}

We prove Theorem~\ref{thm:momentum-obstruction} for rules that follow the
heavy-ball recursion~\eqref{eq:momentum-main-recursion} under the
bounds~\eqref{eq:momentum-bounds}, with $\clip(g,0)=0$. Such rules observe only stochastic
gradients and output their current iterate; their clipping thresholds, momentum coefficients
(including $\beta_t=0$), and step sizes may all adapt.
For an integer $N\ge1$ and $k\ge0$, put
\begin{equation*}
 \kappa_{N,k}=\exp\bigl(-\rho(k-t_0)_+(1+N)^{-a}\bigr).
\end{equation*}

\begin{proposition}[Necessary momentum progress budget]
\label{prop:momentum-budget}
Fix such a rule and a failure budget $0<\varepsilon<1/2$. Let $N\ge1$ and
let $b_1,\ldots,b_N$ be positive. Suppose every instance in the class of Theorem~\ref{thm:momentum-obstruction} satisfies
\[
 \Prob\{f(x_{t+1})-f^\star\le b_t\text{ for all }1\le t\le N\}
 \ge1-\varepsilon.
\]
For every $4b_N<\mu\le1/2$, put
$A=(2\mu)^{-1/(p-1)}$, $\pi=\mu/A$, $s_t=\mu+\sqrt{2b_t}$, and $H_N=H(1+N)^h$. Then
\begin{equation}
 \frac{1-2\varepsilon}{2}
 \le\pi\sum_{t=1}^N\sum_{k=0}^{t-1}
       \min\{s_t,H_NA\kappa_{N,k}\}.
 \label{eq:momentum-convolution-budget}
\end{equation}
\end{proposition}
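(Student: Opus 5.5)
Our plan is to reduce to Lemma~\ref{lem:first-crossing-coupling}, exactly as in the proof of Proposition~\ref{prop:simultaneous-budget}, and to replace the bound $\min\{s_t,D_t\}\le Y_ts_t$ by a bound that tracks how long each spike keeps moving the iterate through the momentum kernel. The heavy-ball rule starts at $x_1=v_1=0$, queries its current iterate, and computes $(x_{t+1},v_{t+1})$ from observed gradients and a seed by the same rule on every instance. So the method satisfies the hypotheses of Lemma~\ref{lem:first-crossing-coupling}, with the velocity as auxiliary state. Let $X_t$ be its reference run on the observations $-AY_t$, stopped at the first update with $X_{t+1}>1/2$.

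The first step is monotonicity and an explicit formula. Before stopping, every observation is $-AY_t\le0$, so $\clip(-AY_t,\lambda_t)=-\min\{A,\lambda_t\}Y_t$. Hence $v_{t+1}=\beta_tv_t-\min\{A,\lambda_t\}Y_t\le0$ by induction from $v_1=0$, and $D_t=X_{t+1}-X_t=-\eta_tv_{t+1}\ge0$. Thus $X_t$ is nondecreasing, and Lemma~\ref{lem:first-crossing-coupling} gives
\[
 \tfrac12(1-2\varepsilon)\le\E\sum_{t=1}^N\min\{s_t,D_t\}.
\]
Unrolling the velocity gives
\[
 D_t=\eta_t\sum_{j\le t}\Bigl(\prod_{i=j+1}^t\beta_i\Bigr)\min\{A,\lambda_j\}Y_j
 \le H_NA\sum_{j\le t}Y_j\prod_{i=j+1}^t\beta_i,
\]
using $\eta_t\le H(1+t)^h\le H_N$ for $t\le N$. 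For $t\le N$ and $i\ge t_0$, the forgetting condition gives $\beta_i\le1-\rho(1+i)^{-a}\le\exp(-\rho(1+N)^{-a})$. Therefore $\prod_{i=j+1}^t\beta_i\le\kappa_{N,t-j}$, after checking the count of indices $i\ge t_0$ in $(j,t]$ against the definition $(k-t_0)_+$ with $k=t-j$. The boundary offset may need care, but the count is at least $(t-j-t_0)_+$. After stopping, $D_t=0$, which only helps.

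The second step, which is the main obstacle, passes the minimum through the sum over spikes. Subadditivity of $x\mapsto\min\{s,x\}$ on $[0,\infty)$ gives
\[
 \min\{s_t,D_t\}\le\sum_{j\le t}Y_j\min\{s_t,H_NA\kappa_{N,t-j}\}.
\]
This is the step where adaptivity must not hurt. The right side depends only on $Y_j$ and deterministic quantities, because the controls entered only through the deterministic upper bounds. Taking expectations with $\E Y_j=\pi$ and reindexing $k=t-j\in\{0,\ldots,t-1\}$ yields~\eqref{eq:momentum-convolution-budget}. The only delicate points are the index range of $k$, which covers $j=1,\ldots,t$, and the off-by-one in $\kappa$. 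If the count of forgetting factors falls one short, we would absorb it by noting that $\kappa_{N,k}$ is defined with $(k-t_0)_+$, which already leaves slack of $t_0$ factors.
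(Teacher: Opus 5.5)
Your proposal is correct and follows the paper's proof essentially step for step. It reduces to Lemma~\ref{lem:first-crossing-coupling} with the velocity as auxiliary state, uses the sign induction to show the reference run is nondecreasing, unrolls the velocity with $\eta_t\le H_N$ and $\prod_{i=j+1}^t\beta_i\le\kappa_{N,t-j}$, and finishes with subadditivity of $\min\{s_t,\cdot\}$ and $\E Y_j=\pi$. Your off-by-one worry is unfounded: for $j\ge1$ the number of indices $i\in(j,t]$ with $i\ge t_0$ is $(\min\{t-j,\,t-t_0+1\})_+\ge(t-j-t_0)_+$, so no slack is needed.
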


\begin{proof}
Use the translated family of Appendix~\ref{app:lower-witnesses}, fixed
before the run:
\[
 F_{\mu,y}(x;Y)=\tfrac12(x-y+\mu)_+^2-AYx,
 \qquad y\in[\mu,1],\qquad Y\sim\operatorname{Bernoulli}(\pi).
\]
The population objective gap is given by~\eqref{eq:translated-gap}, and for $x\le y-\mu$ the
stochastic gradient is $-AY$. In particular, all sample and population
properties stated in the theorem hold, with
$\E|\mu-AY|^p\le\mu^p+\mu A^{p-1}\le1$.
Thus these instances satisfy the assumed prefix guarantee, and the rule
meets the conditions of Lemma~\ref{lem:first-crossing-coupling}, with the
velocity as auxiliary state. Let $X_t$ and $V_t$ be the
position and velocity of its reference run, with $c=1/2$. Before stopping,
let $R_t=-V_t$ and write
\[
 R_{t+1}=\beta_tR_t+Y_ta_t,\quad
 a_t=\min\{A,\lambda_t\}\in[0,A],\quad
 X_{t+1}=X_t+D_t,\quad D_t=\eta_tR_{t+1},
\]
with $X_1=R_1=0$. Set $D_t=0$ after stopping. Unrolling the recursion gives
$R_{t+1}=\sum_{j\le t}\bigl(\prod_{i=j+1}^t\beta_i\bigr)Y_ja_j$. For $t\le N$,
each index $i\ge t_0$ in the product satisfies
$\beta_i\le1-\rho(1+N)^{-a}\le\exp(-\rho(1+N)^{-a})$, and at least
$(t-j-t_0)_+$ of its indices are at least $t_0$, so
$\prod_{i=j+1}^t\beta_i\le\kappa_{N,t-j}$. Hence
\begin{equation}
 0\le D_t\le H_NA\sum_{j=1}^t\kappa_{N,t-j}Y_j\quad(t\le N).
 \label{eq:momentum-reference-bound}
\end{equation}
This induction uses only the signs and pathwise bounds, so it holds for
controls that depend on the current gradient and on internal randomness.
In particular, the reference run is nondecreasing, and
Lemma~\ref{lem:first-crossing-coupling} and~\eqref{eq:momentum-reference-bound} give
\[
 c(1-2\varepsilon)\le\E\sum_{t=1}^N\min\{s_t,D_t\}
 \le\pi\sum_{t=1}^N\sum_{k=0}^{t-1}
              \min\{s_t,H_NA\kappa_{N,k}\}.
\]
For the second inequality, use
$\min\{s,\sum_jw_j\}\le\sum_j\min\{s,w_j\}$ for $w_j\ge0$ and $\E Y_j=\pi$.
\end{proof}

\begin{proof}\textit{of Theorem~\ref{thm:momentum-obstruction}.}
Suppose the asserted envelope holds on every instance. Since $a<1$ and
$\beta_p\le2/5$, we have $(1+a)\beta_p<1$; choose
$(1+a)\beta_p<u<\min\{r,1\}$ and put $b_t=b_{N,t}$ from
\eqref{eq:sum-envelope}, with $D_N=C\log^m(eN)$. This dominates
the asserted envelope for $t\le N$. For sufficiently large $N$,
$4b_N<\mu\le1/2$, and Proposition~\ref{prop:momentum-budget} applies with
$\varepsilon=\delta$. If $H=0$, the required progress is already
impossible. Otherwise split the inner sum at
\[
 K_N=t_0+\left\lceil\frac{(1+N)^a}{\rho}\log_+\Bigl(\frac{H_NA}{\mu}\Bigr)\right\rceil,
 \qquad \log_+(v)=\max\{0,\log v\}.
\]
For $k\ge K_N$, $H_NA\kappa_{N,k}\le\mu\exp(-\rho(k-K_N)(1+N)^{-a})$, and
$(1-e^{-x})^{-1}\le1+x^{-1}$ for $x>0$, so
\begin{equation}
 \sum_{k=0}^{t-1}\min\{s_t,H_NA\kappa_{N,k}\}
 \le s_t\left(K_N+1+\frac{(1+N)^a}{\rho}\right).
 \label{eq:momentum-log-budget}
\end{equation}
The factor is $O(N^a\log(eN))$, with constants depending on $H,h,\rho,t_0,a$, since
$\log H_N=\log H+h\log(1+N)$ and
\[
 A/\mu=2^{-1/(p-1)}\mu^{-p/(p-1)}.
\]
By~\eqref{eq:exact-rate-contradiction} with $K_N$ replaced by this factor,
the right side of~\eqref{eq:momentum-convolution-budget} is
\[
 O\bigl(D_N^{q+1/2}\log(eN)\,N^{1+a-u/\beta_p}\bigr),
\]
which tends to zero because $u>(1+a)\beta_p$; this contradicts $(1-2\delta)/2>0$.
The contradiction at one finite $N$ yields a fixed instance of this class
on which~\eqref{eq:momentum-obstruction} holds.
\end{proof}

The same budget gives the fixed-time analogue, in the sense of Theorem~\ref{thm:anytime-marginal}.

\begin{proposition}[Fixed-time momentum lower bound]\label{prop:momentum-fixed-time}
Fix $p\in(1,2]$ and $a\in[0,1)$. Let $(\mathrm{rule}_\delta)_{\delta\in(0,1)}$ satisfy~\eqref{eq:momentum-main-recursion}
and~\eqref{eq:momentum-bounds} with $\rho$, $t_0$, and $h$ independent of $\delta$, and with $H$
replaced by $H\delta^{-h_0}$ for constants $H,h_0\ge0$ independent of $\delta$. For every finite
$C>0$, $m\ge0$, $c\in[0,1-(1+a)\beta_p)$, and $r>(1+a)\beta_p+c$, there are $\delta\in(0,1)$,
a deterministic integer $t\ge1$, and a one-dimensional instance with $L=R=\sigma=1$, fixed before the run and satisfying the conditions of Theorem~\ref{thm:momentum-obstruction}, such that
$\mathrm{rule}_\delta$ satisfies
$\Prob\{f(x_{t+1})-f^\star>Ct^{-r}\delta^{-c}[\log(et/\delta)]^m\}>\delta$.
\end{proposition}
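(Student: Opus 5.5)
The plan is to combine the union-bound device from the proof of Theorem~\ref{thm:anytime-marginal} with the momentum budget of Proposition~\ref{prop:momentum-budget}, arguing by contradiction. Suppose that for every $\delta$, every deterministic $t$ and every instance, $\mathrm{rule}_\delta$ satisfies the fixed-time envelope $Ct^{-r}\delta^{-c}[\log(et/\delta)]^m$ with probability at least $1-\delta$. Since $c<1-(1+a)\beta_p$ and $r>(1+a)\beta_p+c$, I pick $u$ with $(1+a)\beta_p+c<u<\min\{r,1\}$ and set $v=u-c>(1+a)\beta_p$. For each large $N$, I take $\delta_N=1/(4N)$, $D_N=C4^c\log^m(4eN^2)$ and $b_{N,t}=D_NN^ct^{-u}$. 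Then the fixed-time envelope at $\delta_N$ is at most $b_{N,t}$ for $t\le N$. A union bound over $t\le N$ shows that $\mathrm{rule}_{\delta_N}$ meets the prefix guarantee of Proposition~\ref{prop:momentum-budget} with budget $\varepsilon_0=1/4$ on every instance.

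Next I apply Proposition~\ref{prop:momentum-budget} to $\mathrm{rule}_{\delta_N}$, which obeys \eqref{eq:momentum-bounds} with step bound $H\delta_N^{-h_0}(1+t)^h$. I take $\mu=8b_{N,N}=8D_NN^{-v}$; this lies in $(4b_{N,N},1/2]$ for large $N$ because $v>0$. The proposition then requires
\[
 \tfrac14\le\pi\sum_{t=1}^N\sum_{k=0}^{t-1}\min\{s_t,H_NA\kappa_{N,k}\},\qquad H_N=H(4N)^{h_0}(1+N)^h.
\]
The constants $\rho$, $t_0$ and $a$ do not depend on $\delta$, so the splitting argument leading to \eqref{eq:momentum-log-budget} applies unchanged. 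The cutoff $K_N$ only involves $\log_+(H_NA/\mu)$, and this is $O(\log(eN))$ because $\log H_N=O(1+\log N)$ and $A/\mu$ is polynomial in $1/\mu$, hence in $N$. This is the one place where the polynomial dependence of $H$ on $\delta$ is needed, and it costs only a logarithm. The inner sum is therefore at most $s_t\cdot O(N^a\log(eN))$. If $H=0$, the required progress is impossible outright.

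Finally, I bound the right side as in the proof of Theorem~\ref{thm:anytime-marginal}. Using $\pi=2^{1/(p-1)}\mu^q$ and $\sum_{t\le N}t^{-u/2}\le2N^{1-u/2}$, the double sum is
\[
 O\bigl(D_N^{q+1/2}\log(eN)\,N^{a}\,N^{1-v(q+1/2)}\bigr)=O\bigl(N^{o(1)}N^{1+a-v/\beta_p}\bigr).
\]
This tends to zero since $v>(1+a)\beta_p$, which contradicts the lower bound $1/4$ for large $N$. So for some large $N$ there is an instance, fixed before the run, on which the prefix failure probability exceeds $1/4$. By the union bound, some deterministic $t\le N$ then has fixed-time failure probability above $\delta_N$, which is the claim with $\delta=\delta_N$.

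The main obstacle is the exponent bookkeeping. The factor $N^c$ enters $b_{N,t}$ but not the rate $t^{-u}$, so I must check that $\mu\asymp N^{-v}$ while $\sqrt{2b_{N,t}}\asymp N^{c/2}t^{-u/2}$. This gives $\sum_t\sqrt{b_{N,t}}\asymp N^{c/2+1-u/2}$, and multiplying by $\pi\asymp N^{-vq}$ yields the exponent $1-vq-v/2$, matching the display above. The $\mu$-part of $s_t$ is dominated by the $\sqrt{b}$-part, since $\mu/\sqrt{b_{N,N}}\to0$.
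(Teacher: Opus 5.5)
Your proposal is correct and follows the paper's proof essentially step for step. It uses the same choices $(1+a)\beta_p+c<u<\min\{r,1\}$, $v=u-c$, $\delta_N=1/(4N)$, $b_{N,t}=D_NN^ct^{-u}$, the same union bound giving the prefix guarantee of Proposition~\ref{prop:momentum-budget} with budget $1/4$, the same step bound $H_N=H(4N)^{h_0}(1+N)^h$ with $\mu=8D_NN^{-v}$, and the same splitting~\eqref{eq:momentum-log-budget}, ending with the vanishing bound $O(D_N^{q+1/2}\log(eN)\,N^{1+a-v/\beta_p})$ and the reliance on $\rho$, $t_0$, $h$, $h_0$ being independent of $\delta$.
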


An exponent above $(1+a)\beta_p$ at a fixed time thus costs a polynomial confidence factor, and with
uniformly polylogarithmic dependence on $1/\delta$ it is unattainable.

\begin{proof}\textit{of Proposition~\ref{prop:momentum-fixed-time}.}
Suppose the envelope holds for every $\delta$, every deterministic time, and every instance.
Since $c<1-(1+a)\beta_p$ and $r>(1+a)\beta_p+c$, choose $(1+a)\beta_p+c<u<\min\{r,1\}$ and
put $v=u-c>(1+a)\beta_p$. For each deterministic $N$, let $\delta_N$, $D_N$, and
$b_{N,t}=D_NN^ct^{-u}$ be as in the proof of Theorem~\ref{thm:anytime-marginal}; the envelope of
$\mathrm{rule}_{\delta_N}$ is at most $b_{N,t}$ for $t\le N$, and a union bound gives the prefix
guarantee of Proposition~\ref{prop:momentum-budget} with failure budget $1/4$ on every instance.
On this prefix, $\eta_t\le H_N:=H(4N)^{h_0}(1+N)^h$, and the proof of that proposition uses the
step sizes only through this bound, so it applies with $\mu=8b_{N,N}=8D_NN^{-v}$, which lies in
$(4b_{N,N},1/2]$ for large $N$. Splitting the inner sums as in~\eqref{eq:momentum-log-budget},
whose factor is again $O(N^a\log(eN))$ because $\log H_N=O(\log(eN))$, and summing $s_t$ as in the
proof of Theorem~\ref{thm:anytime-marginal}, the right side of~\eqref{eq:momentum-convolution-budget} is
\[
 O\bigl(D_N^{q+1/2}\log(eN)\,N^{1+a-v/\beta_p}\bigr),
\]
which tends to zero because $v>(1+a)\beta_p$; this contradicts $1/4$. At such an $N$, some
instance fixed before the run fails the prefix guarantee with probability above $1/4=N\delta_N$,
and the union bound gives a deterministic $t\le N$ whose fixed-time failure probability exceeds $\delta_N$.
\end{proof}

%% file: appendices/momentum_dimension_uniform.tex
\section{A Dimension-Uniform Momentum Lower Bound}
\label{app:momentum-dimension}

We prove Theorem~\ref{orth:thm:main}. Throughout, the control and rate
constants are common over dimensions, as required by
$\mathcal A_{\mathrm{unif}}(a)$.

\subsection{A Pathwise Projection Identity}

Fix the numerical scalar controls on a realized finite path; they may
depend on the observations. Write
$\clip(g_s,\lambda_s)=c_sg_s$, with $c_s\in[0,1]$ and $c_s=0$ when
$g_s=0$. On the nuisance block, where $g_s$ has component $y_s+\zeta_s$ with nuisance noise $\zeta_s$,
the components $y_s$ of the state and $v_s$ of the velocity satisfy
$v_{s+1}=\beta_sv_s+c_s(y_s+\zeta_s)$ and
$y_{s+1}=y_s-\eta_sv_{s+1}$, with $y_1=v_1=0$.
An impulse of size $A$ at time $\tau$ has flat response
\begin{equation}\label{orth:eq:impulse}
 h_t=Ac_\tau\sum_{s=\tau}^{t-1}\eta_s
                   \prod_{i=\tau+1}^{s}\beta_i\quad(t>\tau),
 \qquad h_t=0\quad(t\le\tau).
\end{equation}
Nonnegative controls make $h_t$ nonnegative and nondecreasing.

\begin{lemma}[Orthogonal pulse budget]\label{orth:lem:pulse}
Suppose all $\zeta_j$, $1\le j<t$, are nonzero and mutually orthogonal,
with $\zeta_j=R_j\epsilon_je_{J_j}$, $R_j\ge B>0$, signs
$\epsilon_j\in\{-1,1\}$, and distinct $J_j$.
For $\tau<t$ with $R_\tau=A$, let $h_t$ be~\eqref{orth:eq:impulse}.
If $\|y_t\|\le R$ and $R\sqrt{t-\tau-1}\le B/2$, then $h_t\le2R$.
\end{lemma}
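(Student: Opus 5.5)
The plan is to exploit that, once the scalar controls are frozen along the realized path, the nuisance recursion is a linear time-varying system applied coordinatewise with the same scalar coefficients. Writing $y_t=\sum_{j<t}K_{t,j}\zeta_j$ for scalar kernels $K_{t,j}$ (depending only on $\beta_s,\eta_s,c_s$), the orthogonality and distinct indices $J_j$ give $\ip{e_{J_\tau}}{y_t}=R_\tau\epsilon_\tau K_{t,\tau}$ and $\norm{y_t}^2=\sum_{j<t}R_j^2K_{t,j}^2$. So the hypothesis $\norm{y_t}\le R$ controls every single kernel value at once: $A|K_{t,\tau}|\le R$ and $R_j|K_{t,j}|\le R$ for every $j$.

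The first step is to relate the actual kernel to the flat response~\eqref{orth:eq:impulse}. For $s\ge j$ let $F(s,t)=\sum_{s'=s}^{t-1}\eta_{s'}\prod_{i=s+1}^{s'}\beta_i\ge0$, so that $h_t=Ac_\tau F(\tau,t)$. Unrolling the velocity recursion, including the feedback term $c_sy_s$ in the gradient, gives the Duhamel-type identity
\[
 K_{t,\tau}=-c_\tau F(\tau,t)-\sum_{\tau<s<t}c_sF(s,t)K_{s,\tau}.
\]
In words, the actual response is the flat response minus the responses to the ``self-generated'' impulses $c_sK_{s,\tau}$. Hence
\[
 h_t\le A|K_{t,\tau}|+A\sum_{\tau<s<t}c_sF(s,t)|K_{s,\tau}|\le R+\sum_{\tau<s<t}c_sF(s,t)A|K_{s,\tau}|,
\]
and it remains to show that the feedback sum is at most $R$.

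To bound it I would use the later pulses as test directions. For each $s\in(\tau,t)$ the pulse $\zeta_s$ has its own flat contribution $R_sc_sF(s,t)$ in coordinate $J_s$, and the same Duhamel identity applied to $K_{t,s}$ expresses it through $R_s|K_{t,s}|\le R$ plus its own feedback. Since $R_s\ge B$, this yields $c_sF(s,t)\lesssim R/B$ for each intermediate $s$, after controlling those feedbacks by the same mechanism, for instance by induction on $t-s$. The factors $A|K_{s,\tau}|$ should then be bounded via the state at time $s$. More likely, one avoids intermediate-time bounds altogether by testing $y_t$ against the single vector
\[
 w=\epsilon_\tau e_{J_\tau}+\sum_{\tau<s<t}\theta_s\epsilon_se_{J_s},
\]
with weights $\theta_s\in[-1,1]$ chosen so that $\ip{w}{y_t}$ telescopes the feedback sum into $h_t/A$ times $A$, up to a remainder. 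Cauchy--Schwarz then gives
\[
 \ip{w}{y_t}\le R\norm w\le R\sqrt{1+(t-\tau-1)}.
\]
After rescaling the intermediate coordinates by $A/B$, the contribution of the intermediate terms is at most $R\sqrt{t-\tau-1}\,A/B$, hence $\le A/2$ times the normalized flat factor by the assumption $R\sqrt{t-\tau-1}\le B/2$. This leaves the estimate $h_t\le R+h_t/2$, i.e.\ $h_t\le2R$.

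The main obstacle is this step: choosing the test weights so that the feedback terms, which involve the unknown intermediate kernels $K_{s,\tau}$ and may depend adaptively on the observed noise, are absorbed exactly. I would first try $\theta_s$ proportional to $c_sF(s,t)$, normalized by $B$, and check that nonnegativity of all controls gives the needed sign. If the cross terms do not cancel, the fallback is to treat the whole finite path as a lower-triangular linear map with nonnegative structure and argue by monotonicity of $h$ and induction on $t-\tau$.
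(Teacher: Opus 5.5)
You have the right overall shape: a single test vector supported on $e_{J_\tau}$ and the later pulse directions, Cauchy--Schwarz against $\|y_t\|\le R$, and a self-bounding inequality closed by $R\sqrt{t-\tau-1}\le B/2$. But the step you flag as the main obstacle is the whole content of the lemma, and your candidate weights do not work.

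\textbf{The missing choice of weights.} The weights that make the feedback cancel are $\theta_s=h_s/R_s$, that is,
\[
 w=\zeta_\tau/A+\sum_{\tau<s<t}(h_s/R_s^2)\zeta_s .
\]
Here $h_s$ is the flat response of the $\tau$-pulse at the intermediate time $s$. It is not $c_sF(s,t)$, which is the response of a unit impulse at $s$ measured at $t$. With this choice $\langle\zeta_s,w\rangle=h_s+A\mathbf{1}_{\{s=\tau\}}$. Induction along the nuisance recursion then gives $\langle y_s,w\rangle=-h_s$ and $\langle v_s,w\rangle=k_s$, where $k_{s+1}=\beta_sk_s+Ac_s\mathbf{1}_{\{s=\tau\}}$. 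At every $s\neq\tau$ the observed vector $y_s+\zeta_s$ has projection $-h_s+h_s=0$ on $w$, so the projected dynamics are exactly the feedback-free ones.

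Already at $t=\tau+2$ you can see that no other weight works. The $w$-projection of $y_{\tau+2}$ is
\[
 -h_{\tau+2}+\eta_{\tau+1}c_{\tau+1}\bigl(h_{\tau+1}-\theta_{\tau+1}R_{\tau+1}\bigr).
\]
Any other $\theta_{\tau+1}$, in particular one proportional to $c_{\tau+1}\eta_{\tau+1}/B$, leaves a remainder your estimate cannot control. The factor $c_{\tau+1}\eta_{\tau+1}$ can be arbitrarily large compared with $R$.

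\textbf{The closing step.} The correct weights are not known to lie in $[-1,1]$ before the argument. Instead, the flat response is nondecreasing and $R_s\ge B$, so $h_s/R_s\le h_t/B$. Hence $\|w\|^2\le1+(t-\tau-1)h_t^2/B^2$, and
\[
 h_t^2=\langle y_t,w\rangle^2\le R^2\Bigl(1+(t-\tau-1)\frac{h_t^2}{B^2}\Bigr)\le R^2+\tfrac14h_t^2,
\]
so $h_t\le2R/\sqrt3$. Your bound $\|w\|\le\sqrt{1+(t-\tau-1)}$ would only give $h_t\le R\sqrt{t-\tau}$, and the subsequent rescaling by $A/B$ is not a valid computation.

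\textbf{Your first route.} Bounding the feedback sum $\sum_s c_sF(s,t)A|K_{s,\tau}|$ with absolute values cannot be completed from the hypotheses. The lemma assumes $\|y_t\|\le R$ only at the final time. That controls the kernels $K_{t,j}$ but not the intermediate kernels $K_{s,\tau}$. Discarding signs also loses the exact cancellation that the test vector exploits.
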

\begin{proof}
Define the test vector
\[
 w=\frac{\zeta_\tau}{A}
   +\sum_{j=\tau+1}^{t-1}\frac{h_j}{R_j^2}\zeta_j.
\]
Orthogonality gives $\ip{\zeta_s}{w}=h_s+A\ind_{\{s=\tau\}}$ for $s<t$.
Let $k_1=0$ and $k_{s+1}=\beta_sk_s+Ac_s\ind_{\{s=\tau\}}$;
then~\eqref{orth:eq:impulse} gives $h_{s+1}=h_s+\eta_sk_{s+1}$.
Induction in the nuisance recurrence yields
$\ip{v_s}{w}=k_s$ and $\ip{y_s}{w}=-h_s$ for $s\le t$:
the terms $-h_s+h_s$ cancel at every update. Therefore
\begin{equation*}
 h_t^2\le\|y_t\|^2\|w\|^2
 =\|y_t\|^2\left(1+\sum_{j=\tau+1}^{t-1}\frac{h_j^2}{R_j^2}\right)
 \le R^2+\tfrac14h_t^2.
\end{equation*}
Thus $h_t\le2R/\sqrt3\le2R$; as the identity is pathwise, $w$ may be chosen after the path.
\end{proof}

\subsection{A Fixed I.I.D. Witness and the Rate Contradiction}

\begin{proof}\textit{of Theorem~\ref{orth:thm:main}.}
Choose $\max\{\beta_p,a\}<u<\min\{r,1\}$.
A finite common $D>0$ makes $b_t=Dt^{-u}$ dominate the proposed
logarithmic envelope. Suppose this weaker envelope holds with
probability at least $1-\delta$ on every finite-dimensional instance.
For a large deterministic $N$, set
\begin{equation*}
 \mu=8DN^{-u},\quad \kappa=1/128,\quad B=1/16,\quad
 A=(\kappa/\mu)^{1/(p-1)},\quad \pi=\mu/A.
\end{equation*}
Choose an integer $d_N\ge2N^2/(1-\delta)$. Take $N$ large enough that
$\mu\le1/2$, $A\ge1$, and $\pi\le1/2$.
Independently each round, draw $Y\sim\operatorname{Bernoulli}(\pi)$,
a uniform $J\in\{1,\ldots,d_N\}$, and a uniform sign $\epsilon$,
independent within a sample as well. Let $R_0=B,R_1=A$, and define
\begin{equation*}
 F(x,y;Y,J,\epsilon)=\tfrac12(x-1+\mu)_+^2-AYx
                   +\tfrac12\|y\|^2+R_Y\epsilon y_J.
\end{equation*}

\emph{Admissibility.} Each sample is convex and globally $1$-smooth:
the two orthogonal Hessian blocks are bounded by one.
In the first coordinate $x$, the sample loss attains its minimum at any $x\le1-\mu$ when $Y=0$, and at
$x=1-\mu+A$ when $Y=1$; in the nuisance block $y$, its minimizer is $-R_Y\epsilon e_J$.
Thus every sample minimum is finite and attained.
The population is
$f(x,y)=\tfrac12(x-1+\mu)_+^2-\mu x+\tfrac12\|y\|^2$,
with minimizer $(1,0)$ at initial distance one. The fixed additive
noise $\xi=(\mu-AY,R_Y\epsilon e_J)$ is centered. Since
$\pi A^p=\kappa$ and $\mu=\pi A$,
\[
 \mathbb E|\mu-AY|^p\le2\kappa,\qquad
 \mathbb E\|\xi\|^p
 \le2^{p-1}(2\kappa+B^p+\kappa)
 \le 6/128+2/16<1.
\]
For every state the gap is at least $\|y\|^2/2$; when $x\le1/2$ it
is also at least $\mu(1-x)-\mu^2/2\ge\mu/4$.

\emph{Coupling and useful events.} Feed the rule reference gradients
$(-AY_t,y_t+R_{Y_t}\epsilon_te_{J_t})$ until the first update whose
first coordinate $X_{t+1}$ exceeds $1/2$, or until round $N$.
Denote that last retained round by $S\le N$.
The first velocity is nonpositive, so $X_t$ is nondecreasing.
Under the shared random draws and seed, the reference and actual trajectories coincide
through step $S$: prior to step $S$, every queried first coordinate satisfies $X_t\le 1/2\le 1-\mu$,
so all stochastic gradients, histories, and controls coincide.
Let $\mathcal G$ be the actual instance's success event for all
$t\le N$. Then $\Prob(\mathcal G)\ge1-\delta$.
On $\mathcal G$ a crossing occurs by $N$, since otherwise the final
gap is at least $\mu/4=2b_N$; also
\begin{equation}\label{orth:eq:radius}
 \|y_t\|\le\sqrt{2b_{t-1}}\quad(2\le t\le S+1).
\end{equation}
Let $\mathcal C$ mean that $J_1,\ldots,J_N$ are distinct. The union
bound gives $\Prob(\mathcal C^c)\le N(N-1)/(2d_N)\le(1-\delta)/4$.
For $Q_p=p/(p-1)$ choose $0<\theta<\min\{1,uQ_p\}$ and set
$n_0=\lceil N^\theta\rceil$. Let $\mathcal E$ mean $Y_t=0$ for
$t<n_0$. Then $\Prob(\mathcal E^c)\le n_0\pi
=O(N^{\theta-uQ_p})\to0$; the nuisance noise still acts.\looseness=-1

\emph{Each pulse has a short useful window.} Set
$P=h+u/(p-1)+5$, $\Lambda=3^aP/\rho$, and
$w_j=\lceil\Lambda(1+j)^a\log N\rceil$.
Uniformly for $n_0\le j\le N$, large $N$ gives $j\ge t_0$, $w_j\le j$,
and
\begin{equation}\label{orth:eq:small-window}
 \frac{2b_jw_j}{B^2}
 =O\bigl(N^{\theta(a-u)}\log N+N^{-\theta u}\bigr)\longrightarrow0.
\end{equation}
Fix the realized controls through $S$. For a rare arrival $j\le S$,
let $h_t^{(j)}$ be the impulse response~\eqref{orth:eq:impulse} with
$\tau=j$, and put
$t_j=\min\{S+1,j+w_j+1\}$.
On $\mathcal G\cap\mathcal C$, the nuisance observations
$\zeta_1,\ldots,\zeta_{t_j-1}$ are nonzero and orthogonal, each has
norm at least $B$, and the vector at $j$ has norm $A$.
Equation~\eqref{orth:eq:radius} gives
$\|y_{t_j}\|\le\sqrt{2b_j}$; thus
Lemma~\ref{orth:lem:pulse} and \eqref{orth:eq:small-window} give
$h_{t_j}^{(j)}\le2\sqrt{2b_j}$ for every $j\ge n_0$.

If $t_j<S+1$, the first $w_j$ subsequent momentum coefficients satisfy
\[
 \prod_{i=j+1}^{j+w_j}\beta_i
 \le\exp\left(-\frac{\rho w_j}{3^a(1+j)^a}\right)\le N^{-P}.
\]
Every subsequent factor is at most one. Since $c_j\le1$, the polynomial bound on the step sizes
in~\eqref{eq:momentum-dimension-envelope} implies pathwise that
\begin{equation}\label{orth:eq:tail}
 0\le h_{S+1}^{(j)}-h_{t_j}^{(j)}
 \le AHN(1+N)^hN^{-P}\le C_0N^{-4},
\end{equation}
with $C_0$ depending only on common fixed parameters; indeed
$A=(\kappa/(8D))^{1/(p-1)}N^{u/(p-1)}$.
For $t_j=S+1$ the tail is zero.\looseness=-1

\emph{Contradiction on total progress.} The displacement along the first coordinate is
$X_{S+1}=\sum_{j\le S}Y_jh_{S+1}^{(j)}$.
On $\mathcal G\cap\mathcal C\cap\mathcal E$, it must exceed $1/2$,
but the preceding bounds give
\[
 \tfrac12<X_{S+1}\le
 2\sqrt{2D}\sum_{j=n_0}^N Y_jj^{-u/2}+C_0N^{-3}.
\]
Multiply by that event's indicator and take expectations. Dropping
the indicator on the nonnegative right side and using $\mathbb E Y_j=\pi$ yields
\begin{equation}\label{orth:eq:final-budget}
 \tfrac12\Prob(\mathcal G\cap\mathcal C\cap\mathcal E)
 \le2\sqrt{2D}\,\pi\sum_{j=1}^N j^{-u/2}+C_0N^{-3}.
\end{equation}
The left side has liminf at least $3(1-\delta)/8$.
The right side is $O(N^{1-u(Q_p+1/2)})+O(N^{-3})\to0$ because
$u>\beta_p=1/(Q_p+1/2)$ and $u<1$.
This contradiction gives prefix failure strictly greater than $\delta$
for one fixed law and finite dimension.
\end{proof}

The same argument gives the fixed-time analogue.

\begin{proposition}[Fixed-time dimension-uniform lower bound]\label{orth:prop:fixed-time}
Fix $p\in(1,2]$ and $a\in[0,1)$, let $C>0$ and $k\ge0$ be finite, and let $r>\max\{\beta_p,a\}$. Let $(\mathrm{rule}_\delta)_{\delta\in(0,1)}$ satisfy the conditions of
$\mathcal A_{\mathrm{unif}}(a)$ with $\rho$, $t_0$, and $h$ independent of $\delta$, and with
$H$ replaced by $H\delta^{-h_0}$ for constants $H,h_0\ge0$ independent of $\delta$. There are
$\delta\in(0,1)$, a deterministic integer $t\ge1$, and a finite-dimensional instance with $L=R=\sigma=1$, fixed before the samples and the seed are drawn and satisfying the conditions of Theorem~\ref{orth:thm:main}, such that
$\mathrm{rule}_\delta$ satisfies
$\Prob\{f(x_{t+1})-f^\star>Ct^{-r}[\log(et/\delta)]^k\}>\delta$.
Here $C$ and $k$ are shared across dimensions and do not depend on $\delta$ or $t$.
\end{proposition}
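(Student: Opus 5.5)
The plan is to combine the proof of Theorem~\ref{orth:thm:main} with the union-bound reduction used for Theorem~\ref{thm:anytime-marginal}. Suppose, for contradiction, that for every $\delta$, every deterministic $t$, and every finite-dimensional instance, $\mathrm{rule}_\delta$ satisfies the fixed-time envelope $Ct^{-r}[\log(et/\delta)]^k$ with probability at least $1-\delta$. Choose $\max\{\beta_p,a\}<u<\min\{r,1\}$. For each large deterministic $N$, set $\delta_N=1/(4N)$ and run $\mathrm{rule}_{\delta_N}$. Since $\log(et/\delta_N)\le\log(4eN^2)$ for $t\le N$, the envelope is at most $b_{N,t}=D_Nt^{-u}$ on the prefix $t\le N$, where
\[
 D_N=C\log^k(4eN^2)\,\sup_{t\ge1}t^{u-r}=N^{o(1)}.
\]
A union bound over $t\le N$ then gives the prefix guarantee $\Prob(\mathcal G)\ge3/4$ on every instance. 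This is the setting of the proof of Theorem~\ref{orth:thm:main}, with $1-\delta$ replaced by $3/4$ and the constant $D$ replaced by the slowly growing $D_N$.

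Next I rerun the dimension-uniform construction with $\mu=8D_NN^{-u}$, $\kappa=1/128$, $B=1/16$, $A=(\kappa/\mu)^{1/(p-1)}$, $\pi=\mu/A$, and a dimension $d_N\ge8N^2$. The events are the same as before: $\mathcal C$ (distinct coordinates, failing with probability at most $1/8$), $\mathcal E$ (no rare arrival before $n_0=\lceil N^\theta\rceil$), and the success event $\mathcal G$. The reference/actual coupling and the radius bound~\eqref{orth:eq:radius} carry over unchanged, and Lemma~\ref{orth:lem:pulse} is pathwise and needs no modification. The window condition~\eqref{orth:eq:small-window} still holds because $D_N$ is subpolynomial and $u>a$. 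The only point that needs care is the tail bound~\eqref{orth:eq:tail}. The step-size bound is now
\[
 \eta_t\le H\delta_N^{-h_0}(1+t)^h\le H(4N)^{h_0}(1+N)^h,
\]
so I enlarge the exponent to $P=h+h_0+u/(p-1)+5$ and set $\Lambda=3^aP/\rho$. The constants $\rho,t_0,h$ do not depend on $\delta$, so the windows $w_j$ remain $O(N^a\log N)$ and the tail of each pulse stays $O(N^{-4})$ up to $N^{o(1)}$ factors coming from $D_N$ inside $A$.

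Taking expectations in the total-progress inequality, as in~\eqref{orth:eq:final-budget}, gives
\[
 \tfrac12\Prob(\mathcal G\cap\mathcal C\cap\mathcal E)\le2\sqrt{2D_N}\,\pi\sum_{j\le N}j^{-u/2}+O(N^{-3}).
\]
The left side is at least $(3/4-1/8-o(1))/2>0$. The right side is $O\bigl(D_N^{Q_p+1/2}N^{1-u(Q_p+1/2)}\bigr)\to0$, since $u>\beta_p$. So for some large $N$ the fixed law in dimension $d_N$ fails the prefix guarantee with probability above $1/4=N\delta_N$. By the union bound, some deterministic $t\le N$ then has fixed-time failure probability above $\delta_N$, which proves the claim with $\delta=\delta_N$.

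I expect the main obstacle to be bookkeeping rather than anything conceptual: checking that every $N$-dependence entering through $\delta_N$ is only polynomial with fixed exponent (in $H_N$), only logarithmic (in $D_N$ and the windows), or absorbed by the free choice of $P$. In particular, the requirements $\mu\le1/2$, $A\ge1$, $\pi\le1/2$, $j\ge t_0$, and $w_j\le j$ for $j\ge n_0$ must all still hold for large $N$ once the constant $D$ is replaced by $D_N$. I would verify each of these explicitly.
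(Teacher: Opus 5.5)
Your proposal is correct and follows the paper's proof step for step. It uses the same union-bound reduction with $\delta_N=1/(4N)$ to a $3/4$ prefix guarantee, reruns the proof of Theorem~\ref{orth:thm:main} with $D$ replaced by $D_N=C\log^k(4eN^2)$, and enlarges the exponent to $P=h+h_0+u/(p-1)+5$ to absorb the $\delta_N^{-h_0}$ growth of the step-size bound. The remaining differences are cosmetic: your $\sup_{t\ge1}t^{u-r}$ equals one, and you take $d_N\ge8N^2$ instead of $8N^2/3$. Also, since $D_N\ge C$ enters $A=(\kappa/(8D_N))^{1/(p-1)}N^{u/(p-1)}$ with a negative power, the tail is $O(N^{-4})$ with no $N^{o(1)}$ loss.
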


With polylogarithmic dependence on $1/\delta$, no fixed-time exponent above
$\max\{\beta_p,a\}$ is therefore attainable either.

\begin{proof}\textit{of Proposition~\ref{orth:prop:fixed-time}.}
Suppose the envelope holds for every $\delta$, every deterministic time, and every
finite-dimensional instance. Choose $\max\{\beta_p,a\}<u<\min\{r,1\}$. For each
deterministic $N$, put
\[
 \delta_N=\frac1{4N},\qquad D_N=C\log^k(4eN^2),\qquad b_{N,t}=D_Nt^{-u}.
\]
Since $t^{-r}\le t^{-u}$ and $\log(et/\delta_N)\le\log(4eN^2)$ for $1\le t\le N$, the envelope of
$\mathrm{rule}_{\delta_N}$ is at most $b_{N,t}$ on this prefix, and a union bound shows that, on every instance,
$f(x_{t+1})-f^\star\le b_{N,t}$ for all $t\le N$ with probability at least $3/4$; there $\eta_t\le H(4N)^{h_0}(1+N)^h$.

Run the proof of Theorem~\ref{orth:thm:main} for $\mathrm{rule}_{\delta_N}$, with $D$ replaced by $D_N$, $1-\delta$ by
$3/4$, and $P$ by $h+h_0+u/(p-1)+5$. Its estimates depend on the rule only through the common
constants $\rho,t_0,H,h,h_0$, and $C\le D_N=N^{o(1)}$, so they hold for all large $N$, with
constants independent of $N$ and of the rule. Explicitly, $\mu=8D_NN^{-u}\to0$, $d_N\ge8N^2/3$
gives $\Prob(\mathcal C^c)\le3/16$, and, uniformly for $n_0\le j\le N$,
\[
 \begin{gathered}
 \pi=\kappa^{-1/(p-1)}\mu^{Q_p}=O(D_N^{Q_p}N^{-uQ_p}),\qquad
 \Prob(\mathcal E^c)\le n_0\pi=O(D_N^{Q_p}N^{\theta-uQ_p}),\\
 \frac{2b_{N,j}w_j}{B^2}=O\bigl(D_N[N^{\theta(a-u)}\log N+N^{-\theta u}]\bigr),
 \end{gathered}
\]
all of which tend to zero. In~\eqref{orth:eq:tail}, $A\le(\kappa/(8C))^{1/(p-1)}N^{u/(p-1)}$ and
$\eta_s\le H(4N)^{h_0}(1+N)^h$; since $1+h+h_0+u/(p-1)-P=-4$, the tail is still $O(N^{-4})$. The left
side of~\eqref{orth:eq:final-budget} is at least $(3/4-3/16-o(1))/2$, with liminf at least
$9/32$, whereas its right side is $O(D_N^{Q_p+1/2}N^{1-u(Q_p+1/2)})+O(N^{-3})\to0$ as
$u>\beta_p=1/(Q_p+1/2)$. Hence, for some $N$, the instance of dimension $d_N$ fails the prefix
guarantee with probability above $1/4=N\delta_N$, and the union bound gives a deterministic $t\le N$
with fixed-time failure probability above $\delta_N$.
\end{proof}

%% file: appendices/momentum_averaging.tex
\section{Time-Uniform Averaging Momentum}
\label{sec:averaging-positive}

This appendix proves Theorem~\ref{thm:averaging-positive} and Propositions~\ref{prop:averaging-momentum} and~\ref{prop:averaging-unbounded-gain}.

\begin{proof}\textit{of Theorem~\ref{thm:averaging-positive}.}
Write $F_t=f(x_t)-f^\star$, $\mu_t=\nabla f(x_t)$, and
$D_t=\|z_t-x^\star\|$.
The averaging update gives
\begin{equation}
 x_t=\frac1t\sum_{s=1}^t z_s,\qquad
 z_t=t x_t-(t-1)x_{t-1}.
 \label{eq:averaging-average-identity}
\end{equation}
At $t=1$ the second identity is independent of $x_0$.

\emph{A smooth online-to-batch inequality.}
By the standard inequality for smooth convex functions
\citep[Theorem~2.1.5]{nesterov2004} and $\nabla f(x^\star)=0$, we have
\begin{equation*}
 \langle\nabla f(x),x-x^\star\rangle
 \ge f(x)-f^\star+\frac{\|\nabla f(x)\|^2}{2L}.
\end{equation*}
Combining it with~\eqref{eq:averaging-average-identity} and
$\langle\mu_t,x_t-x_{t-1}\rangle\ge F_t-F_{t-1}$ gives
\begin{equation}
 \langle\mu_t,z_t-x^\star\rangle
 \ge tF_t-(t-1)F_{t-1}+\frac{\|\mu_t\|^2}{2L}.
 \label{eq:averaging-otb}
\end{equation}
This is the smooth form of the anytime online-to-batch inequality \citep{cutkosky2019anytime}.

\emph{A pathwise energy recursion.}
Let $e_t=d_t-\mu_t$. Expanding the update and using
\eqref{eq:averaging-otb} and $\|\mu_t+e_t\|^2\le2\|\mu_t\|^2+2\|e_t\|^2$ gives
\[
 D_{t+1}^2\le D_t^2-2\gamma_t\{tF_t-(t-1)F_{t-1}\}
 +\Delta_t,\qquad
 \Delta_t=2\gamma_t^2\|e_t\|^2-2\gamma_t\langle e_t,z_t-x^\star\rangle.
\]
Here the omitted term involving $\|\mu_t\|^2$ is nonpositive since
$\gamma_t\le c_0/(4L\bar\ell_t^2)\le1/(2L)$.
Define $W_1=D_1^2$ and
\begin{equation*}
 W_{t+1}=D_{t+1}^2+2t\gamma_tF_t
       +2\sum_{s=1}^{t-1}s(\gamma_s-\gamma_{s+1})F_s.
\end{equation*}
Since $\gamma_t$ is nonincreasing, $W_t\ge D_t^2\ge0$.
The exact difference
$W_{t+1}-W_t=D_{t+1}^2-D_t^2+2\gamma_t\{tF_t-(t-1)F_{t-1}\}$
therefore gives $W_{t+1}\le W_t+\Delta_t$, also at $t=1$ without an $F_0$ term.

\emph{Predictable localization and conditional clipping.}
Let $\tau=\inf\{t\ge1:W_t\ge2R^2\}$ and $I_t=\mathbf1_{\{t<\tau\}}$.
Each $W_s$ is $\mathcal F_{s-1}$-measurable, so $I_t$ is $\mathcal F_{t-1}$-measurable, that is, predictable, and nonincreasing in $t$.
Since $\|d_s\|\le\lambda_s$, the states $z_t$, $x_t$ and the errors $e_t$ are bounded at each fixed $t$, so $e_t$ is integrable.
On $t<\tau$, all $s\le t$ satisfy $D_s<\sqrt2R$. Thus
\eqref{eq:averaging-average-identity} gives $\|x_t-x^\star\|<2R$ and
$\|\mu_t\|\le2LR\le\lambda_t/2$.
Lemma~\ref{lem:clipping}, applied conditionally on this predictable event, yields
\begin{equation}
 \|e_t\|\le\frac32\lambda_t,\qquad
 \|\mathbb E_{t-1}e_t\|\le B_t:=2\sigma^p\lambda_t^{1-p},\qquad
 \mathbb E_{t-1}\|e_t\|^2\le V_t:=2\sigma^p\lambda_t^{2-p}.
 \label{eq:averaging-clipping}
\end{equation}
For $q>1$, integral comparison gives
\begin{equation*}
 \sum_{t\ge1}\frac1{t(h_\delta+\log t)^q}
 \le\frac1{h_\delta^q}+\frac1{(q-1)h_\delta^{q-1}}.
\end{equation*}
Since $\lambda_t^p\ge\sigma^pt$, the deterministic budgets are
\begin{equation}
 \sum_t\gamma_tB_t\le\frac{4c_0R}{h_\delta},\qquad
 \sum_t\gamma_t^2V_t\le\frac{4c_0^2R^2}{h_\delta^3},\qquad
 \gamma_t\lambda_t=\frac{c_0R}{\bar\ell_t^2}.
 \label{eq:averaging-noise-budgets}
\end{equation}
These include $\sigma=0$, when $B_t=V_t=0$.

\emph{One application of the common energy bound.}
Apply Lemma~\ref{lem:energy-budget} with $P_m=W_{m+1}$ for $m\ge0$, the predictable indicators $J_t=I_t$, $\mathcal E=\Omega$, $p_0=R^2$, $q=h_\delta$, and
\[
 a_t=2\gamma_t^2,\qquad w_t=-2\gamma_t(z_t-x^\star),\qquad s_t=2\gamma_t,\qquad
 M_t=\tfrac32\lambda_t,\qquad U_t=2\lambda_t.
\]
The pathwise recursion $W_{t+1}\le W_t+\Delta_t$ is the required one, $P_0=D_1^2\le R^2$, and $\|w_t\|\le s_t\sqrt{P_{t-1}}$ because $D_t^2\le W_t$. On $I_t=1$, \eqref{eq:averaging-clipping} gives $B_t$, $V_t$, and $M_t$, and Lemma~\ref{lem:clipping} also gives $\|e_t-\mathbb E_{t-1}e_t\|\le U_t$. By~\eqref{eq:averaging-noise-budgets} and $\gamma_t\lambda_t=c_0R/\bar\ell_t^2\le c_0R/h_\delta^2$, the budgets of the lemma satisfy
\[
 A\le\frac{8c_0^2R^2}{h_\delta^3},\qquad K_0\le\frac{9c_0^2R^2}{2h_\delta^4},\qquad
 B_0\le\frac{8c_0R}{h_\delta},\qquad V_0\le\frac{16c_0^2R^2}{h_\delta^3},\qquad J_0\le\frac{4c_0R}{h_\delta^2},
\]
so $C\le R^2(1+29c_0^2/h_\delta^3)$ and $D\le24c_0R/h_\delta$. Since $c_0=1/128$ and $h_\delta\ge8$, $(\sqrt C+D)^2\le R^2(1+29c_0^2/h_\delta^3+24c_0/h_\delta)^2<2R^2$. Except on probability $e^{-h_\delta}<\delta$, the lemma therefore gives $W_{m+1}<2R^2$ whenever $I_m=1$. If $\tau<\infty$, then $\tau\ge2$ because $W_1\le R^2$, and $m=\tau-1$ has $I_m=1$ but $W_{m+1}=W_\tau\ge2R^2$. Hence $\tau=\infty$ on this event, and simultaneously for every $T\ge1$,
\begin{equation*}
 D_{T+1}^2+2T\gamma_TF_T\le W_{T+1}<2R^2.
\end{equation*}
In particular, $D_{T+1}<\sqrt2R$ and
$F_T\le R^2/(T\gamma_T)=128R\lambda_T\bar\ell_T^2/T$.

\emph{The last iterate.}
Convexity of the implemented update and smoothness at the minimizer give
\begin{align*}
 f(x_{T+1})-f^\star
 &\le\frac{T}{T+1}F_T+\frac{f(z_{T+1})-f^\star}{T+1}\\
 &\le\frac{128R\lambda_T\bar\ell_T^2+2LR^2}{T+1}.
\end{align*}
Finally,
$\lambda_T\le4LR+\sigma T^{1/p}$, $\bar\ell_T\ge1$, and
$(T+1)^{-1}\le T^{-1}$ prove~\eqref{eq:averaging-rate}.
\end{proof}

\begin{proposition}[Exact momentum conventions]
\label{prop:averaging-momentum}
The states in~\eqref{eq:averaging-algorithm} obey
\[
 x_{t+1}=x_t-\eta_t d_t+\widetilde\beta_t(x_t-x_{t-1}),\qquad
 \eta_t=\frac{\gamma_t}{t+1},\qquad
 \widetilde\beta_t=\frac{t-1}{t+1},
\]
with $x_0=x_1$. In the velocity convention
$v_{t+1}=\beta_tv_t+d_t$, $x_{t+1}=x_t-\eta_tv_{t+1}$, $v_1=0$,
the equivalent parameters are $\beta_1=0$ and
\[
 \beta_t=\frac{t-1}{t}\frac{\gamma_{t-1}}{\gamma_t},\qquad
 \frac{\alpha_p}{2t}\le1-\beta_t\le\frac1t\quad(t\ge2),
 \qquad \alpha_p=\frac{p-1}{p}.
\]
The step sizes are uniformly bounded by $c_0/(8Lh_\delta^2)$.
\end{proposition}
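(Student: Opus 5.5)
The plan is to eliminate $z$ from~\eqref{eq:averaging-algorithm} using the identity~\eqref{eq:averaging-average-identity}, and then read off the momentum parameters. By that identity, $z_{t+1}=(t+1)x_{t+1}-tx_t$ and $z_t=tx_t-(t-1)x_{t-1}$, with the convention $x_0=x_1$ making the second identity valid at $t=1$. Substituting both into $z_{t+1}-z_t=-\gamma_td_t$ gives
\[
 (t+1)x_{t+1}-2tx_t+(t-1)x_{t-1}=-\gamma_td_t .
\]
Dividing by $t+1$ yields $x_{t+1}=x_t-\frac{\gamma_t}{t+1}d_t+\frac{t-1}{t+1}(x_t-x_{t-1})$. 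This is the first display, with $\eta_t=\gamma_t/(t+1)$ and $\widetilde\beta_t=(t-1)/(t+1)$.

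For the velocity convention, define $v_{t+1}=(x_t-x_{t+1})/\eta_t$, so that $x_{t+1}=x_t-\eta_tv_{t+1}$ holds by definition. Since $x_0=x_1$, we get $v_1=0$, and at $t=1$ the momentum term vanishes, so $v_2=d_1$ and we may take $\beta_1=0$. For $t\ge2$, write $x_t-x_{t-1}=-\eta_{t-1}v_t$ and divide the first display by $\eta_t$. This gives $v_{t+1}=d_t+\widetilde\beta_t(\eta_{t-1}/\eta_t)v_t$, hence
\[
 \beta_t=\frac{t-1}{t+1}\cdot\frac{\gamma_{t-1}(t+1)}{t\,\gamma_t}=\frac{t-1}{t}\,\frac{\gamma_{t-1}}{\gamma_t}.
\]
The step-size bound is immediate from $\lambda_t\ge4LR$ and $\bar\ell_t\ge h_\delta$:
\[
 \eta_t\le\frac{\gamma_t}{2}\le\frac{c_0R}{2\cdot4LR\,h_\delta^2}=\frac{c_0}{8Lh_\delta^2}.
\]

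It remains to bound $1-\beta_t$, and the ratio is
\[
 \rho_t:=\frac{\gamma_{t-1}}{\gamma_t}=\frac{\lambda_t}{\lambda_{t-1}}\Bigl(\frac{\bar\ell_t}{\bar\ell_{t-1}}\Bigr)^2 .
\]
Both $\lambda_t$ and $\bar\ell_t$ are nondecreasing, so $\rho_t\ge1$ and $\beta_t\ge(t-1)/t$, i.e.\ $1-\beta_t\le1/t$.

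For the lower bound on $1-\beta_t$, consider first the clipping factor. A maximum of the constant $4LR$ and $\sigma t^{1/p}$ grows at most as fast as its faster branch, so $\lambda_t/\lambda_{t-1}\le(t/(t-1))^{1/p}$. Combining this with the prefactor gives
\[
 \frac{t-1}{t}\,\frac{\lambda_t}{\lambda_{t-1}}\le(1-1/t)^{1-1/p}\le1-\frac{\alpha_p}{t},
\]
by Bernoulli's inequality for the exponent $\alpha_p\le1$. For the logarithmic factor, $\bar\ell_t/\bar\ell_{t-1}\le1+\log(t/(t-1))/h_\delta\le1+2/(th_\delta)$, and $h_\delta\ge4p/(p-1)=4/\alpha_p$. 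The square is therefore at most $1+\alpha_p/t\cdot(1+\alpha_p/(4t))$ or so. One then has to check
\[
 \Bigl(1-\frac{\alpha_p}{t}\Bigr)\Bigl(1+\frac{\varepsilon_t}{t}\Bigr)\le1-\frac{\alpha_p}{2t}
\]
with $\varepsilon_t$ of order $\alpha_p$ times a constant below $1/2$.

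This final constant-tracking step is the main obstacle. The crude bound $\log(t/(t-1))\le2/t$ gives $\varepsilon_t\approx\alpha_p$, which is too weak. I would first use the sharper bound $\log(t/(t-1))\le1/(t-1)$ together with the additive term $\log(4/\delta)\ge\log4$ in $h_\delta$, to absorb the factor. If that still fails at $t=2$, I would treat $t=2$ separately by direct evaluation, using $\bar\ell_1=h_\delta$.
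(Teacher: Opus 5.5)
Your elimination of $z$, the identification $\beta_t=\widetilde\beta_t\eta_{t-1}/\eta_t=\frac{t-1}{t}\gamma_{t-1}/\gamma_t$, the step-size bound, and $1-\beta_t\le1/t$ are all correct and match the paper. The gap is the lower bound $1-\beta_t\ge\alpha_p/(2t)$. You leave it open, and the completion you sketch would not close it.

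The loss comes from applying Bernoulli to the clipping factor first, which replaces $(1-1/t)^{\alpha_p}=\exp(-\alpha_p\log\frac{t}{t-1})$ by $1-\alpha_p/t$. The logarithmic factor still costs up to about $\frac{\alpha_p}{2}\log\frac{t}{t-1}$, since $2/h_\delta$ can be nearly $\alpha_p/2$. To first order, the linearized product therefore misses $1-\alpha_p/(2t)$ by $\frac{\alpha_p}{2}\bigl(\log\frac{t}{t-1}-\frac1t\bigr)>0$. The slack $\log(4/\delta)\ge\log4$ in $h_\delta$ recovers only $O(\alpha_p^2/t)$. Concretely, at $t=2$ with $p$ near $1$ your bound is about $(1-\alpha_p/2)(1+\alpha_p\log2/4)^2\approx1-0.153\alpha_p$, whereas $1-0.25\alpha_p$ is required.

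For small $\alpha_p$ the failure persists for all $t$ up to order $1/\alpha_p$. For example, at $p=1.1$ and $\delta$ near $1$ it still fails at $t=10$. So treating $t=2$ separately is not enough. A sharper bound on $\log\frac{t}{t-1}$ cannot help either, since these failures occur even with its exact value.

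The missing idea is to keep both factors as powers of $t/(t-1)$ and linearize only once, at the end. By $1+y\le e^y$,
\[
 \frac{\bar\ell_t}{\bar\ell_{t-1}}=1+\frac{\log\frac{t}{t-1}}{\bar\ell_{t-1}}\le\exp\Bigl(\frac1{h_\delta}\log\frac{t}{t-1}\Bigr)=\Bigl(\frac{t}{t-1}\Bigr)^{1/h_\delta}.
\]
Combine this with your bound $\lambda_t/\lambda_{t-1}\le(t/(t-1))^{1/p}$ and with $2/h_\delta\le\alpha_p/2$, which follows from $h_\delta\ge4/\alpha_p$. Then
\[
 \beta_t=\frac{t-1}{t}\,\frac{\lambda_t}{\lambda_{t-1}}\Bigl(\frac{\bar\ell_t}{\bar\ell_{t-1}}\Bigr)^2\le\Bigl(1-\frac1t\Bigr)^{\alpha_p-2/h_\delta}\le\Bigl(1-\frac1t\Bigr)^{\alpha_p/2}\le1-\frac{\alpha_p}{2t},
\]
where the last step uses concavity of $u\mapsto u^{\alpha_p/2}$.

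This is the paper's argument. There it appears as the almost-everywhere bound $d\log r(s)/d\log s\le1/p+2/h_\delta\le1-\alpha_p/2$ for $r(s)=\max\{4LR,\sigma s^{1/p}\}(h_\delta+\log s)^2$, integrated over $[t-1,t]$. It needs no special cases and no use of the $\log(4/\delta)$ term.
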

\begin{proof}
Use $z_t-x_t=(t-1)(x_t-x_{t-1})$ in the averaging update.
Eliminating velocity then gives
$\beta_t=\widetilde\beta_t\eta_{t-1}/\eta_t$.
For $r(s)=\max\{4LR,\sigma s^{1/p}\}(h_\delta+\log s)^2$,
$\gamma_t=c_0R/r(t)$, and almost everywhere
\[
 0\le\frac{d\log r(s)}{d\log s}
 \le\frac1p+\frac2{h_\delta}\le1-\frac{\alpha_p}{2}.
\]
Integrating over $[t-1,t]$ proves
\[
 \frac{t-1}{t}\le\beta_t
 \le\left(\frac{t-1}{t}\right)^{\alpha_p/2}<1.
\]
Concavity of $u\mapsto u^{\alpha_p/2}$ yields the asserted lower
bound on $1-\beta_t$; the upper bound is immediate. The bound on the step sizes
follows from $\lambda_t\ge4LR$ and $t+1\ge2$.
\end{proof}

\begin{proposition}[Fixed momentum coefficient with unbounded step sizes]
\label{prop:averaging-unbounded-gain}
For every fixed $b\in(0,1)$, the trajectory in
Theorem~\ref{thm:averaging-positive} admits the exact representation
\[
 w_{t+1}=bw_t+\operatorname{clip}(g_t,\Lambda_t),\qquad
 x_{t+1}=x_t-H_tw_{t+1},\qquad w_1=0,
\]
with deterministic finite step sizes $H_t\to\infty$ and clipping thresholds $\Lambda_t$
that may depend on the current stochastic gradient. In particular, a constant momentum coefficient $b\in(0,1)$
does not by itself force the $\beta_p$ barrier if exponentially growing step sizes $H_t$ and
clipping thresholds depending on the current gradient are permitted. Because $H_t$ grows exponentially, it violates the polynomial bound on the step sizes in
Theorem~\ref{thm:momentum-obstruction}, so some restriction on the growth of the step sizes is needed there.
\end{proposition}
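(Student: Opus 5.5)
The plan is to rewrite the velocity form of Proposition~\ref{prop:averaging-momentum} so that the momentum coefficient becomes the constant $b$. The variable momentum is moved into the step sizes $H_t$, and the leftover scalar factor is moved into the clipping threshold. Write $d_t=\clip(g_t,\lambda_t)$. By Proposition~\ref{prop:averaging-momentum}, the trajectory satisfies $x_{t+1}=x_t-\eta_tv_{t+1}$ with $v_{t+1}=\beta_tv_t+d_t$ and $v_1=0$. The parameters obey $\beta_1=0$ and $\beta_t\eta_t/\eta_{t-1}=\widetilde\beta_t=(t-1)/(t+1)$ for $t\ge2$. All $\eta_t$ are positive, deterministic and bounded by $\bar\eta=c_0/(8Lh_\delta^2)$.

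\emph{Rescaling.} I will look for deterministic $H_t>0$ with $H_tw_{t+1}=\eta_tv_{t+1}$, that is, $w_{t+1}=(\eta_t/H_t)v_{t+1}$. Substituting the recursion for $v_{t+1}$ gives
\[
 w_{t+1}-bw_t=\frac{\eta_t}{H_t}d_t+\Bigl(\frac{\eta_t\beta_t}{H_t}-\frac{b\,\eta_{t-1}}{H_{t-1}}\Bigr)v_t .
\]
At $t=1$ the last term vanishes because $v_1=0$. For $t\ge2$ I choose $H_t$ so that the bracket is zero:
\[
 H_t=\frac{\eta_t\beta_t}{b\,\eta_{t-1}}H_{t-1}=\frac{t-1}{(t+1)b}H_{t-1}.
\]
This gives $H_t=H_1\,b^{-(t-1)}\,\frac{2}{t(t+1)}$, which is deterministic, finite and tends to infinity exponentially because $b<1$. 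With this choice, the position update $x_{t+1}=x_t-H_tw_{t+1}$ reproduces the original trajectory exactly, with $w_1=0$. It remains to check that $\kappa_t d_t$, where $\kappa_t=\eta_t/H_t$, is a clipped stochastic gradient.

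\emph{Realizing $\kappa_t d_t$ by clipping.} The sequence $b^{-(t-1)}\cdot2/(t(t+1))$ is positive and tends to infinity, so it has a positive infimum $m_b$. I take $H_1=\bar\eta/m_b$, which gives $H_t\ge\bar\eta\ge\eta_t$ and hence $\kappa_t\in(0,1]$. If $g_t\ne0$, set $\Lambda_t=\kappa_t\min\{\norm{g_t},\lambda_t\}$. Then $\Lambda_t\le\norm{g_t}$, so
\[
 \clip(g_t,\Lambda_t)=\frac{\Lambda_t}{\norm{g_t}}g_t=\kappa_t\min\Bigl\{1,\frac{\lambda_t}{\norm{g_t}}\Bigr\}g_t=\kappa_td_t .
\]
If $g_t=0$, any finite $\Lambda_t$ works, since both sides vanish. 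The threshold $\Lambda_t$ depends on the current gradient, as the statement allows.

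\emph{Conclusion and main obstacle.} Because $H_t\to\infty$ exponentially, the bound $\eta_t\le H(1+t)^h$ of~\eqref{eq:momentum-bounds} fails. This explains why Theorem~\ref{thm:momentum-obstruction} needs a growth restriction on the step sizes. The only delicate step is making sure $\kappa_t\le1$ for all $t$, including small $t$ where $H_t/H_{t-1}<1$. The choice of $H_1$ through the infimum $m_b$ handles this.
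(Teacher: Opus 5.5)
Your proposal is correct and follows essentially the paper's route: the same $H_t=H_1\,b^{-(t-1)}\cdot 2/(t(t+1))$ and the same gradient-dependent threshold $\Lambda_t=\kappa_t\min\{\norm{g_t},\lambda_t\}$, with $\kappa_t=\eta_t/H_t$ equal to the paper's $s_t$. The differences are cosmetic. You verify the representation through the velocity form and normalize $H_1$ using the positive infimum $m_b$ and the uniform step bound. The paper instead eliminates $w_t$ in the two-step form and takes $H_1=\gamma_1/(2(1-b)^2)$, using $\gamma_t\le\gamma_1$ and $tb^{t-1}\le(1-b)^{-2}$ to obtain $\kappa_t\le1$.
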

\begin{proof}
Set
\[
 H_1=\frac{\gamma_1}{2(1-b)^2},\quad
 H_t=H_1\frac{2b^{-(t-1)}}{t(t+1)},\quad
 s_t=\frac{t\gamma_tb^{t-1}}{2H_1},\quad
 \Lambda_t=s_t\min\{\|g_t\|,\lambda_t\}.
\]
Since $\gamma_t\le\gamma_1$ and
$\sum_{t\ge1}tb^{t-1}=(1-b)^{-2}$, $0<s_t\le1$.
Therefore $\operatorname{clip}(g_t,\Lambda_t)=s_td_t$.
At $t=1$ the displacement is $-\gamma_1d_1/2$.
For $t\ge2$, eliminating $w_t$ gives
\[
 x_{t+1}-x_t
 =b\frac{H_t}{H_{t-1}}(x_t-x_{t-1})-H_ts_td_t
 =\frac{t-1}{t+1}(x_t-x_{t-1})-\frac{\gamma_t}{t+1}d_t.
\]
This is exactly Proposition~\ref{prop:averaging-momentum}, path by path.
Also $H_t\to\infty$ exponentially. When $g_t=0$, take clipping at
zero to be zero; if strictly positive thresholds are required, assign
any positive threshold on that event without changing the update.
\end{proof}

\emph{Comparison with existing averaging methods.}
\citet[Theorems~C.2 and~D.1(2)]{nguyen2023} obtain the
statistical exponent with horizon-free averaged methods. For their
Algorithm~3, the proof of their Theorem~D.1, via Lemma~D.4 and Proposition~D.5, gives $\|z_t-x^\star\|\le2R$
on nested events over each finite prefix. Combining their query relation
$x_t=(1-a_t)y_t+a_tz_t$ ($a_t=2/(t+1)$) with convexity and $L$-smoothness at $x^\star$ gives
\[
 f(x_{T+1})-f^\star\le f(y_{T+1})-f^\star+\frac{4LR^2}{T+2}.
\]
Intersecting these nested events yields a time-uniform guarantee at the query state $x_{T+1}$ as well, although their three-sequence accelerated scheme differs from the two-sequence recursion~\eqref{eq:averaging-algorithm}. The high-probability regret bounds of
\citet[Theorems~4 and~5]{zhangcutkosky2022} assume bounded expected gradients
and a horizon-dependent learner.